\documentclass{article}

\usepackage{microtype}
\usepackage{graphicx}
\usepackage{subcaption}
\usepackage{booktabs} 

\usepackage{xurl}
\usepackage{hyperref}

\usepackage[accepted]{icml2026}

\usepackage{amsmath}
\usepackage{amssymb}
\usepackage{mathtools}
\usepackage{amsthm}

\usepackage{graphicx}
\usepackage{subcaption}
\usepackage{amsmath}
\usepackage{footmisc}
\usepackage{bm}
\usepackage{dsfont}
\usepackage{amssymb}
\usepackage{amsthm}
\usepackage{mathtools}
\usepackage{algorithm}
\usepackage{algorithmic}
\usepackage{booktabs}
\usepackage{siunitx}
\usepackage{tabularx} 

\usepackage{multirow}
\usepackage{listings} 
\usepackage{wrapfig}      
\usepackage{stfloats}
\usepackage{longtable}
\usepackage{booktabs}
\usepackage{array}

\usepackage[capitalize,noabbrev]{cleveref}

\theoremstyle{plain}
\newtheorem{theorem}{Theorem}[section]

\theoremstyle{definition}

\theoremstyle{remark}

\usepackage[textsize=tiny]{todonotes}

\icmltitlerunning{Evaluating and Explaining Prompt Sensitivity of LLMs Using Interactions}

\begin{document}

\twocolumn[
  \icmltitle{Evaluating and Explaining Prompt Sensitivity of LLMs Using Interactions}



  \icmlsetsymbol{corr}{$\dagger$}

  \begin{icmlauthorlist}
    \icmlauthor{Ruiyang Qin}{yyy}
    \icmlauthor{Qingzhuo Wang}{yyy}
    \icmlauthor{Tian Wang}{yyy}
    \icmlauthor{Zhihua Wei}{yyy}
    \icmlauthor{Wen Shen}{yyy,corr}
  \end{icmlauthorlist}

  \icmlaffiliation{yyy}{School of Computer Science and Technology, Tongji University, Shanghai, China}

  \icmlcorrespondingauthor{Wen Shen}{\mbox{wenshen@tongji.edu.cn}}


  \vskip 0.3in
]



\printAffiliationsAndNotice{}  

\begin{abstract}
The remarkable capabilities of large language models (LLMs) are often undermined by their instability.
Even subtle and semantically irrelevant changes in prompts can cause dramatic fluctuations in performance, a phenomenon known as prompt sensitivity. Previous studies typically evaluate prompt sensitivity by comparing the LLM's final outputs when prompts change. However, such coarse-grained metrics fail to explain the internal reasons for prompt sensitivity. 
In this paper, we introduce interactions as a fine-grained tool to analyze prompt sensitivity of LLMs. Specifically, we decompose the output score of the LLM into a set of interactions. Each interaction represents a nonlinear relationship involving a set of input variables. We discover that subtle changes to prompts can trigger severe instability in interactions, even when the outputs of the LLM remain the same. To this end, we propose an Interaction-based Prompt Sensitivity (IPS) metric by quantifying changes in interactions when we introduce subtle changes to prompts. 
We apply the IPS metric to 50 open-source LLMs and uncover four factors that reduce the prompt sensitivity of LLMs, including supervised fine-tuning, increased model scales, dense architectures, and few-shot learning. More crucially, we discover a common mechanism by which these four factors reduce prompt sensitivity: all four factors tend to reduce the prompt sensitivity of low-order interactions (\textit{i.e.}, interactions involving few input variables).
\end{abstract}

\section{Introduction}

LLMs have demonstrated exceptional proficiency in numerous natural language processing tasks \cite{srivastava2022beyond, zhou2023instruction, annepaka2024large, chang2024survey}, a success largely driven by the effectiveness of prompting. However, this power is undermined by prompt sensitivity. That is, semantically unimportant changes to the prompt can result in divergent outputs \cite{sclar2023quantifying}. Current research \cite{sclar2023quantifying, chatterjee-etal-2024-posix, lu-etal-2024-prompts, alzahrani-etal-2024-benchmarks, errica-etal-2025-wrong, razavi2025benchmarking} on evaluating prompt sensitivity only focuses on the LLM's final output. These output-based metrics typically measure changes in performance, such as task accuracy or output consistency. As coarse-grained measures, these metrics only reveal the consequences of prompt sensitivity (\textit{e.g.}, the LLM's prediction changes from one answer to another one) but fail to explain its underlying reasons.

In this paper, we aim to evaluate the prompt sensitivity of LLMs from a fine-grained perspective and investigate the underlying reasons why certain factors can decrease the prompt sensitivity of LLMs. Recent research \cite{chen2024defining, zhou2024explaining, Ren_Zheng_Liu_Lizarraga_Wu_Lin_Zhang_2025, ren2024towards} has utilized interactions to explain the fine-grained inference logic of deep neural networks (DNNs). Inspired by these studies, we introduce the interactions framework to fine-grainedly analyze the prompt sensitivity of LLMs.

Specifically, given a sentence {\small$\boldsymbol{x}$} with {\small$n$} input variables (\textit{e.g.}, words or tokens) indexed by {\small$N=\{1,2,\ldots,n\}$}, an interaction represents an intricate nonlinear relationship associated with a specific combination of input variables. Consider the sentence {\small$\boldsymbol{x} = \textit{``He\ is\ a\ green\ hand."}$} In this context, the idiom ``green hand'' carries the meaning of ``beginner''. The joint presence of the input variables in the set {\small$S=\{\textit{green}, \textit{hand}\}\!\subseteq\! N$} triggers a special interaction effect. This interaction effect, denoted as $I_S$, pushes the network's inference towards the semantic meaning of {\small$\textit{``beginner."}$} \citet{li2023does} have mathematically demonstrated that the scalar output {\small$v(\boldsymbol{x})$} of a DNN is always equivalent to the output of an interaction-based logical model {\small$\phi(\boldsymbol{x})=\sum\nolimits_{S\subseteq N} I_S$}. That is, {\small$v(\boldsymbol{x})=\phi(\boldsymbol{x})=\sum\nolimits_{S\subseteq N} I_S$}. 
Thus, the inference logic of a DNN can be explained by a set of interactions.

\begin{figure*}[t]
\centering
\includegraphics[width=1\textwidth]{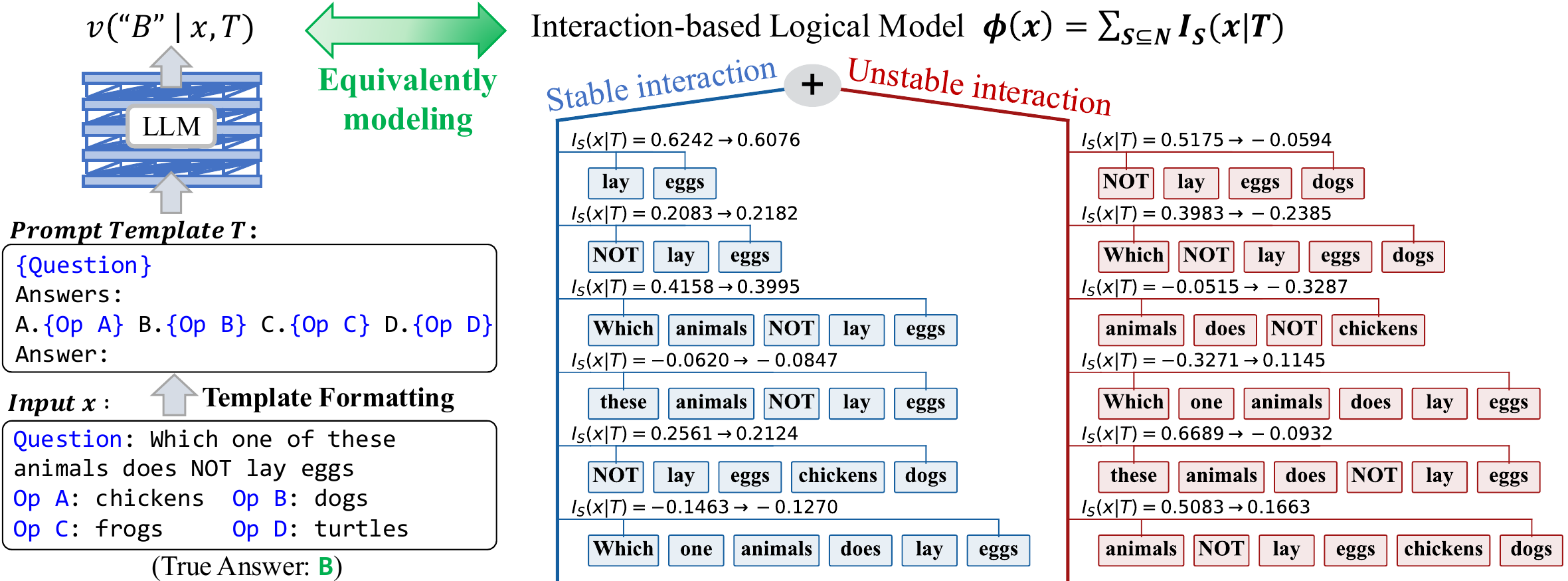}
\caption{Using interactions for fine-grained analysis of prompt sensitivity.
Given an input {\small$\boldsymbol{x}$} and a prompt template {\small$T$}, the LLM's output score {\small$v(``B" | \boldsymbol{x},T)$} is equivalent to the output of an interaction-based logical model {\small$\phi(\boldsymbol{x})$}, i.e., {\small$v(``B" | \boldsymbol{x},T) = \phi(\boldsymbol{x}) = \sum\nolimits_{S\subseteq N} I_S$}. In this way, we can uncover the underlying reasons for the prompt sensitivity of LLMs by analyzing detailed interaction patterns. Specifically, we divide interactions into those that are stable to prompt changes (\textit{i.e.} stable interaction) and those that are not (\textit{i.e.} unstable interaction).
}
\label{figure1:casestudy}
\end{figure*}


\textbf{\textit{Coarse-grained analysis vs. fine-grained analysis.}} Traditional analysis of prompt sensitivity can only coarsely reflect whether the LLM's prediction alters when the prompt changes. Beyond this, we introduce an interaction-based logical model {\small$\phi(\boldsymbol{x})$} as a fine-grained analytical tool to analyze the stable and unstable interactions encoded by the LLM for each specific sample.
As shown in Figure~\ref{figure1:casestudy}, when we keep the same input {\small$\boldsymbol{x}$} but introduce semantically equivalent alterations to the prompt template {\small$T$} (\textit{e.g.}, change ``Answers" to ``ANSWERS"), traditional coarse-grained analysis can only reveal that the LLM's prediction changes from the correct answer ``B" to the incorrect answer ``A" but fails to explain why or how this change occurs. In contrast, our interaction-based analysis precisely identifies unstable interactions that may contribute to the prompt sensitivity of the LLM. An interaction is considered stable if its effect changes minimally (e.g., {\small$I_{\{\textit{lay, eggs}\}}$} shifts from 0.6242 to 0.6076). Conversely, an interaction is deemed unstable if its effect fluctuates dramatically (e.g., {\small$I_{\{\textit{NOT, lay, eggs, dogs}\}}$} shifts from 0.5175 to -0.0594). Strikingly, we find that unstable interactions exist even when the LLM's final output remains the same. This indicates that our fine-grained analysis reveals potential instability that is entirely invisible to traditional, output-level metrics.

Building on these findings, we leverage the interaction framework to propose a fine-grained metric to evaluate the prompt sensitivity of LLMs, which is termed \textbf{I}nteraction-based \textbf{P}rompt \textbf{S}ensitivity (\textbf{IPS}). This metric quantifies changes in interaction patterns when LLMs process different prompts. We apply the proposed IPS metric to evaluate the prompt sensitivity of 50 open-source LLMs. 
However, drawing conclusions about which LLM families are more or less sensitive from this ranking is challenging, as an LLM's prompt sensitivity stems from multiple, intertwined factors.

To this end, we conduct a series of comparative experiments to disentangle these factors, discovering four factors that reduce the prompt sensitivity of LLMs:
\textbf{(1)} Supervised fine-tuning reduces the prompt sensitivity. Instruct/chat models (with supervised fine-tuning) exhibit lower prompt sensitivity than base models. \textbf{(2)} LLMs with larger parameter numbers exhibit lower prompt sensitivity. \textbf{(3)} Dense models are generally less sensitive than mixture-of-experts (MoE) models. 
\textbf{(4)} Few-shot learning considerably reduces prompt sensitivity compared to 0-shot learning.

More crucially, we explore and uncover a common underlying mechanism that explains how the four aforementioned factors reduce prompt sensitivity: they primarily reduce the instability of low-order interactions (i.e., interactions involving a small number of input variables). This finding is counterintuitive, as our experiments demonstrate that high-order interactions (i.e., interactions involving many input variables) tend to exhibit the highest sensitivity, while low-order interactions are inherently less sensitive.
Unexpectedly, these factors further stabilize the already stable low-order interactions, yet remain ineffective in addressing the more pronounced sensitivity of high-order interactions. 

\section{Related Work}

\textbf{Prompt sensitivity of LLMs.} Previous studies \cite{sun2023evaluating, 10.1145/3689217.3690621, sclar2023quantifying, alzahrani-etal-2024-benchmarks} demonstrated that LLMs are highly sensitive to minor perturbations or semantically unimportant alterations to prompts, which can lead to significant performance variation. Such prompt sensitivity presents a considerable risk to the reliability of LLMs.
Existing metrics \cite{sclar2023quantifying, chatterjee-etal-2024-posix, lu-etal-2024-prompts, zhuo-etal-2024-prosa, cao2024worst, errica-etal-2025-wrong, razavi2025benchmarking} for evaluating the prompt sensitivity of LLMs typically measure shifts in final outputs, such as task accuracy or output consistency. However, these metrics are coarse-grained and fail to probe the LLM's internal logic. In this paper, we propose a fine-grained metric that evaluates prompt sensitivity based on interactions. This framework enables us to uncover the underlying mechanisms of prompt sensitivity in LLMs.

\textbf{Using game-theoretic interactions to explain DNNs.} Traditional methods for explanations \cite{tenney2020language, zhang2020extraction} often lack mathematical guarantees of faithfulness, meaning their outputs may not accurately represent the internal logic employed by the DNN. To this end, \citet{ren2023defining} proposed to use interactions between input variables to explain DNNs and provided a series of theoretical guarantees for the method's validity. Furthermore, it has been empirically discovered \cite{li2023does} and theoretically proven \cite{ren2024where} that a DNN typically encodes only a sparse set of interactions. 
At the application level, the interaction framework has proven effective in a wide range of complex tasks, including adversarial transferability \cite{deng2024unifying}, model generalization \cite{chen2024defining, zhou2024explaining}, model training process \cite{Ren_Zheng_Liu_Lizarraga_Wu_Lin_Zhang_2025, ren2024towards}, overfitting \cite{ren2023bayesian} and other tasks \cite{{10384563,11093728,wen2026interpreting}}. In this paper, we use the interaction framework to analyze the prompt sensitivity of LLMs.

\section{Interaction-Based Analysis of the Prompt Sensitivity of LLMs}

\subsection{Preliminaries: Interactions}{\label{section:3.1}}
This subsection introduces the definition of interactions, as well as the mathematical guarantees of interaction-based explanation.
Given a DNN {\small$v$} and an input sentence {\small$\boldsymbol{x}$} with {\small$n$} input variables (\textit{e.g.}, words) indexed by {\small$N=\{1,2,\ldots,n\}$}, let {\small$v(\boldsymbol{x}) \in \mathbb{R}$} denote the scalar output of the DNN. Here we set {\small$v(\boldsymbol{x}) = \text{log} \frac{p(y=y^{\text{*}} \vert \boldsymbol{x})}{1 - p(y=y^{\text{*}} \vert \boldsymbol{x})} \in \mathbb{R}$}, where {\small$p(y=y^{\text{*}} \vert \boldsymbol{x})$} represents the probability of generating the ground truth token {\small$y^{\text{*}}$} given the input {\small$\boldsymbol{x}$}. We define a surrogate logical model {\small$\phi(\boldsymbol{x})$} to match the scalar output {\small$v(\boldsymbol{x})$} of the DNN. Recent studies \cite{li2023does, ren2023defining} have proven Theorem \ref{theorem:universal-matching}, which shows that the output score {\small$v(\boldsymbol{x})$} on any randomly masked\footnote{It is common to use a specific token or embedding to mask input variables of a DNN, \textit{e.g.}, replacing the target token with a specific \small{\small$\text{[MASK]}$} token. Please see Appendix~\ref{sec:1} for details.} input {\small$\boldsymbol{x}_T$} can be accurately calculated by the following surrogate logical model {\small$\phi(\cdot)$}.
\begin{equation}\small
\phi (\boldsymbol{x}_T) \triangleq \phi(\boldsymbol{x}_{\emptyset}) 
+ \sum\nolimits_{S\subseteq N} \mathds{1}({S\mid \boldsymbol{x}_T})\cdot I_S,
\label{eq:surrogate-model}
\end{equation}
where the AND trigger function {\small$\mathds{1}({S\mid \boldsymbol{x}_T}) \in \{0,1\}$} represents an \textbf{AND relationship} between input variables in {\small$S$}, which can also be termed \textbf{AND interaction pattern}. The scalar weight {\small$I_{S}$} quantifies the effect of an AND relationship, which can also be termed \textbf{interaction effect}. An AND relationship is activated only by the joint presence of all input variables in the set {\small$S$}, \textit{i.e.}, all input variables in {\small$S$} are not masked. For instance, given the input sentence {\small$\boldsymbol{x} = \textit{``He\ is\ a\ green\ hand,''}$} the co-occurrence of the input variables in the set {\small$S = \{\textit{green},\textit{hand}\}$} contributes a numerical effect {\small$I_{S}$} that pushes the surrogate logical model's inference towards the semantic meaning of {\small$\textit{``beginner."}$} If an AND interaction {\small$S$} is triggered, \textit{i.e.}, {\small$\mathds{1}({S\mid \boldsymbol{x}_T}) = 1$}, the corresponding interaction effect {\small$I_{S}$} is added to the output of the logical model. Otherwise, if any word in {\small$S$} is masked and the AND interaction is not triggered, \textit{i.e.}, {\small$\mathds{1}({S\mid \boldsymbol{x}_T}) = 0$}, its corresponding interaction effect {\small$I_{S}$} is not added to the output of the logical model. 
{\small $\boldsymbol{x}_{\emptyset}$} represents that all input variables in {\small $N$} are masked.



\begin{theorem}[Universal matching property, proven in Appendix~\ref{sec:2}]
\label{theorem:universal-matching}
Given an input {\small$\boldsymbol{x}$} with {\small$n$} input variables, we randomly mask any combinations of input variables to generate {\small$2^n$} masked inputs {\small$\{\boldsymbol{x}_T \mid T\subseteq N\}$}. For every masked input {\small$\boldsymbol{x}_T$}, when the scalar weight {\small$I_{S}$} in the logical model {\small$\phi(\cdot)$} are set to {\small$I_{S} = \sum\nolimits_{S' \subseteq S} (-1)^{\vert S \vert - \vert S' \vert} \cdot v(\boldsymbol{x}_{S'}), \phi(\boldsymbol{x}_{\emptyset})=v(\boldsymbol{x}_\emptyset)$}, the output of the logical model {\small$\phi(\cdot)$} can always match the DNN's output score {\small$v(\cdot)$}.
\begin{equation}\small
\forall\ T\subseteq N,\ \ v(\boldsymbol{x}_T)=\phi (\boldsymbol{x}_T)
\end{equation}
\end{theorem}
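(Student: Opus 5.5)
The identity is Möbius inversion on the Boolean lattice of subsets of $N$, so I would prove it by direct substitution. First I fix conventions. The trigger $\mathds{1}(S\mid \boldsymbol{x}_T)$ equals $1$ exactly when every variable in $S$ is unmasked in $\boldsymbol{x}_T$, i.e.\ when $S\subseteq T$ (here $\boldsymbol{x}_T$ keeps the variables in $T$ and masks the rest). The sum in \eqref{eq:surrogate-model} should be read over nonempty $S$, or equivalently $\phi(\boldsymbol{x}_\emptyset)$ is identified with $I_\emptyset$. With the stated definition, $I_\emptyset = v(\boldsymbol{x}_\emptyset)$, so I would state this explicitly to avoid double counting the empty set. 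The claim then reduces to showing
\begin{equation*}\small
\forall\, T\subseteq N,\qquad \sum\nolimits_{S\subseteq T} I_S = v(\boldsymbol{x}_T).
\end{equation*}

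Next I substitute the definition of $I_S$ and exchange the order of summation:
\begin{equation*}\small
\sum_{S\subseteq T}\sum_{S'\subseteq S}(-1)^{|S|-|S'|}v(\boldsymbol{x}_{S'}) = \sum_{S'\subseteq T} v(\boldsymbol{x}_{S'}) \sum_{S:\,S'\subseteq S\subseteq T}(-1)^{|S|-|S'|}.
\end{equation*}
For fixed $S'$, I parametrize the inner sum by $S = S'\cup R$ with $R\subseteq T\setminus S'$. Grouping by $|R|=k$, with $m = |T\setminus S'|$, gives
\begin{equation*}\small
\sum_{k=0}^{m}\binom{m}{k}(-1)^k = (1-1)^m,
\end{equation*}
by the binomial theorem. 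This equals $1$ if $m=0$, i.e.\ $S'=T$, and $0$ otherwise. Hence the double sum collapses to $v(\boldsymbol{x}_T)$, which proves $\phi(\boldsymbol{x}_T)=v(\boldsymbol{x}_T)$ for every $T\subseteq N$, including $T=\emptyset$.

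The only step needing care is the interchange of sums and the resulting alternating-sum identity over the interval $[S',T]$ in the subset lattice. Everything else is bookkeeping, chiefly the convention about the empty-set term in \eqref{eq:surrogate-model}. Since all sums are finite, the interchange is unproblematic. I expect the main point of attention to be stating cleanly that $\mathds{1}(S\mid \boldsymbol{x}_T)=\mathds{1}(S\subseteq T)$, because the rest follows mechanically from it.
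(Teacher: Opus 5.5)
Your proof is correct and is essentially the paper's own argument (part (1) of Appendix~\ref{sec:2}). Like the paper, you exchange the order of summation over $S'\subseteq S\subseteq T$ and eliminate every term with $S'\neq T$ using $\sum_{k=0}^{m}\binom{m}{k}(-1)^k=0$. Your empty-set convention, identifying $\phi(\boldsymbol{x}_\emptyset)$ with $I_\emptyset$ and summing over nonempty $S$, matches the appendix, which explicitly subtracts the $S=\emptyset$ term, and it correctly fixes the double counting implicit in Eq.~\eqref{eq:surrogate-model} as written in the main text.
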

\begin{figure*}[t]
\centering
\includegraphics[width=1\textwidth]{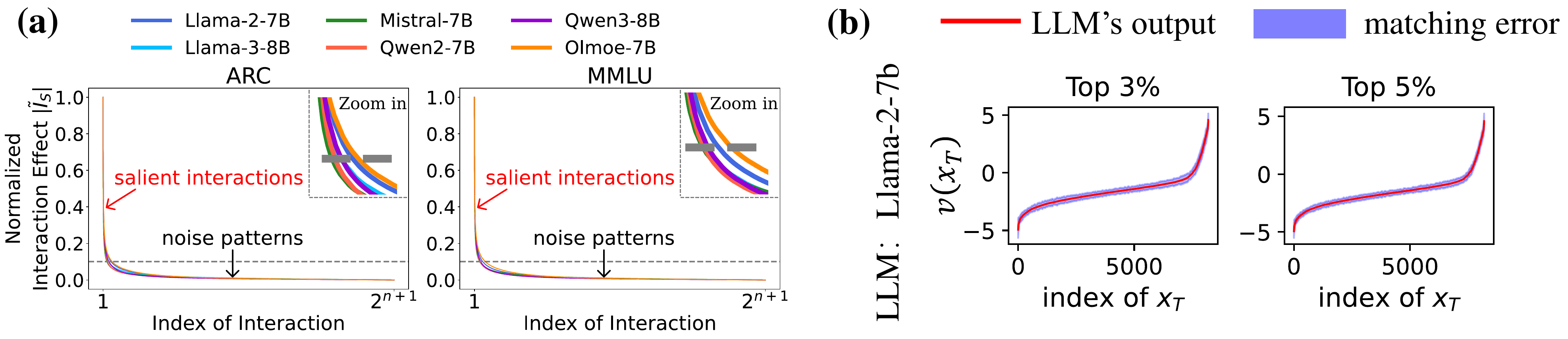}
\caption{(a) Verifying the sparsity of interactions. We show absolute values of normalized interactions in a descending order. LLMs all encode a small number of salient interactions, while most of the interaction effects are negligible. (b) Verifying the quality of universal matching for any {\small$2^n$} masked inputs. The red line plots outputs of the LLM in an ascending order.}
\label{fig:sparsity}
\end{figure*}
In addition, \textbf{the sparsity property also provides a theoretical guarantee for the faithfulness of interaction-based explanation}. The sparsity property shows that a DNN only encodes a sparse set of interactions with salient effects. That is, only a small subset of all {\small$2^n$} interactions in Theorem \ref{theorem:universal-matching}, termed salient interactions, have a significant impact on the logical model's output. In contrast, the majority of interactions have negligible effects and are considered noise patterns.
The sparsity property of AND interactions has been proven by \citet{ren2024where}. 
We follow \citet{li2023does} to extract AND-OR interactions\footnote{\label{footnote4}The OR interaction is proved to be a specific AND interaction. Please see Appendix~\ref{sec:3} for proof and  Appendix~\ref{sec:4} for how to extract OR interactions.} from input variables.
Such a technique has proven effective in pursuing higher sparsity of interactions, supported by both theoretical proofs \cite{li2023does} and extensive empirical validation \cite{ren2023defining, zhou2024explaining}. 

\begin{figure*}[t]
\centering
\includegraphics[width=1\textwidth]{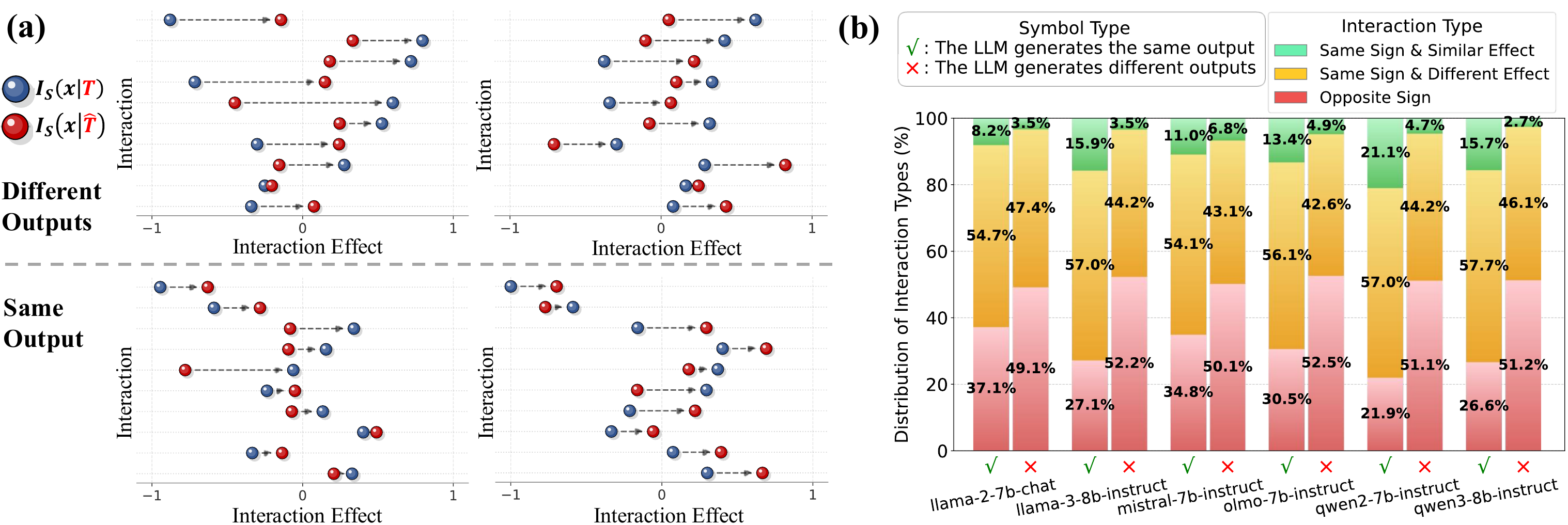}
\caption{(a) Case studies of interaction-level analysis. The plots compare the interaction effects encoded by the LLM for the same input {\small$\boldsymbol{x}$} under two semantically equivalent prompt templates, {\small$T$} (blue) and {\small$\hat{T}$} (red). The bottom row shows examples where the LLM's output remains the same; the top row shows examples where it changes. Results show that interaction effects are highly unstable, even when the output of the LLM remains the same. (b) The distribution of salient interactions types of various LLMs.}
\label{fig:casestudy_detail}
\end{figure*}

\subsection{Verifying the Faithfulness of Considering Interactions as Inference Patterns Used by LLMs}
Before utilizing the interaction framework to analyze the prompt sensitivity of LLMs, we need to \textbf{theoretically prove and experimentally validate} the faithfulness of using interactions to explain LLMs. 
\textbf{In theory}, Theorem \ref{theorem:universal-matching} guarantees that the surrogate logical model's output {\small$\phi (\cdot)$} can always match the LLM's output {\small$v(\cdot)$} for all {\small$2^n$} masked samples. Since the logical model's output {\small$\phi (\cdot)$} is composed entirely of the sum of all interaction effects as defined in Eq.~(\ref{eq:surrogate-model}), \textbf{we can consider interactions as the detailed inference patterns that constitute the LLM's internal logic}. Therefore, we can evaluate the prompt sensitivity of LLMs by measuring the instability of interaction patterns.

\textbf{In practice}, we conduct experiments to verify whether LLMs encode sparse interactions.
Consider the multiple choice question (MCQ) in Figure~\ref{figure1:casestudy} as an example. Let {\small$Q$} denote the set of all words in the question, \textit{e.g.}, {\small$Q$ = \{\textit{Which, one, of, these, animals, does, NOT, lay, eggs}\}}, {\small$M$} denote the set of all words in the options, \textit{e.g.}, {\small$M$ = \{\textit{chickens, dogs, frogs, turtles}\}}, and {\small$T$} denote the set of all words in the prompt template, \textit{e.g.}, {\small$T$ = \{\textit{Answers:, A., B., C., D., Answer:}\}}. 
Specifically, we use words\footnote{\label{footnote3}We use words instead of tokens as input variables because different LLMs may divide the same word into different tokens. For example, Llama-2-7B tokenizes the word ``Elements" into two tokens ``Element'' and  ``s'', while Qwen3-8B treats it as one token.} in {\small$Q\cup M$} as input variables and compute the interaction effect {\small$I_S$} of all interactions {\small$S\subseteq Q\cup M$}. Meanwhile, we treat words in prompt template {\small$T$} as background context. We follow \citet{li2023does} to extract AND and OR interactions. Thus, given an input {\small$\boldsymbol{x}=Q\cup M$} with {\small$n$} words, we can obtain {\small$2^{n+1}$} interactions, including {\small$2^{n}$} AND interactions and {\small$2^{n}$} OR interactions. For each interaction effect {\small$I_S$}, we apply min-max normalization to it. Specifically, {\small$\widetilde{I}_S\triangleq\textit{sgn}(I_S)\cdot \frac{|I_S|-\textit{Min}}{\textit{Max}-\textit{Min}}$},
where {\small$\textit{Min}$} and {\small$\textit{Max}$} are the minimum and maximum absolute values of all {\small$2^{n+1}$} interaction effects; {\small$\textit{sgn}(I_S)=\frac{I_S}{|I_S|}$} represents the sign of {\small$I_S$}.
Figure~\ref{fig:sparsity} (a) shows the distribution of {\small$|\widetilde{I}_S|$}.
Results\footnote{\label{footnote1}Results on more LLMs in Appendix~\ref{sec:6} support the conclusion.} verify that only a small set of interactions have salient effects, while most of the interactions have negligible effects and can be considered as noise patterns. 
Figure~\ref{fig:sparsity} (b) compares the LLM's true output {\small$v(\boldsymbol{x}_T)$} for all {\small$2^n$} masked inputs against the logical model using only the most salient interactions. Even when using the top 3\% or top 5\% of all interactions, the matching error is minimal. This empirically demonstrates that the LLM's output can be faithfully approximated by a small set of salient interactions.

\subsection{Using Interactions as a Fine-Grained Tool to Analyze the Prompt Sensitivity of LLMs}
Based on the above verification, we deploy interactions as a fine-grained analytical tool. Specifically, for an LLM, we visualize the changes in salient interactions for the same input under a pair of prompt templates. As Figure~\ref{fig:casestudy_detail} (a) shows, 
when the LLM's outputs are different, nearly half of the salient interactions reverse their sign (from positive to negative, or vice versa), another 30\%-40\% change significantly in magnitude, and only a small fraction (10\%-20\%) remain stable.
Strikingly, even when the LLM's output remains the same, the majority (60\%-80\%) of the salient interactions are still unstable.
This offers preliminary evidence that semantically irrelevant alterations to the prompt template can lead to significant changes in the salient interaction patterns.



To systematically analyze this instability beyond individual cases, we classify interactions into three distinct types: 1) \textbf{Opposite sign}: The interaction effect reverses its sign (e.g., from positive to negative).
The sign change represents the most severe form of instability, as it can completely reverse the LLM's internal logic. 2) \textbf{Same sign \& different effect}: The interaction maintains its positive or negative influence, but its magnitude changes substantially (i.e., its effect is more than doubled or less than halved). 3) \textbf{Same sign \& similar effect}: The interaction's sign and magnitude both remain stable, which represents robust, stable interactions.
Figure~\ref{fig:casestudy_detail} (b) plots the distribution of three interaction types over all samples, conditioned on whether the LLM's final output changes or remains the same across a pair of prompts. When the LLM generates different outputs, the interaction patterns are highly unstable. 
\textit{Opposite sign} interactions account for approximately 50\%, meaning nearly half of salient interactions reverse the sign of their effect. The truly stable \textit{Same sign \& similar effect} interactions account for a mere 2.7\%-6.8\%, while the remaining 42.6\%-47.4\% of interactions, though maintaining their sign, change significantly in magnitude.
More alarmingly, even when the final output of the LLM remains the same, the instability of interactions still exists. While the situation improves, the majority of interactions still fall into the two unstable categories..

The results demonstrate that output-level analysis is insufficient to capture the unreliable internal patterns of LLMs.
Conversely, our interaction-based analysis offers a fine-grained lens to uncover latent instability of LLMs, offering a new analytical tool for quantifying the ratio of stable and unstable interactions on a per-sample basis for any LLM.

\begin{figure*}[t]
\centering
\includegraphics[width=1\textwidth]{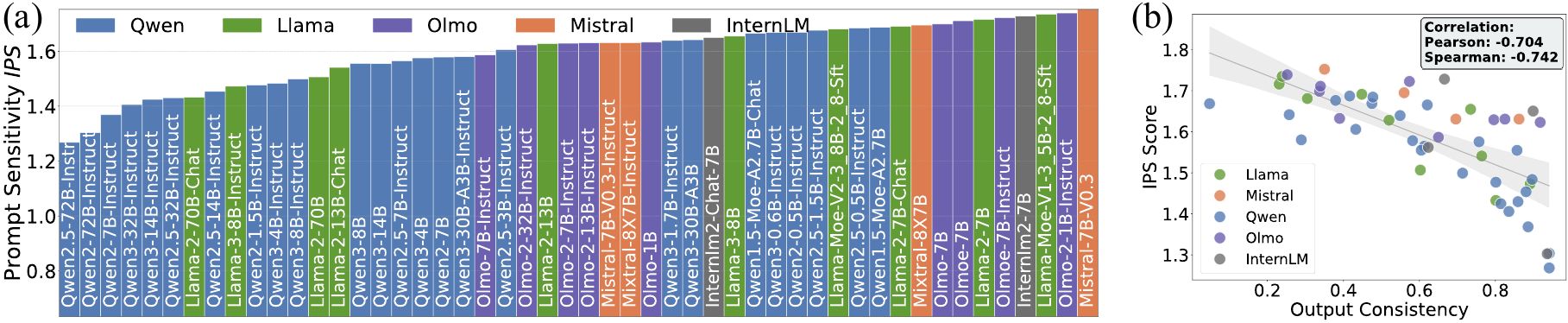}
\caption{Prompt sensitivity of all the 50 open-source LLMs in an ascending order.}
\vspace{-0.25cm}
\label{fig:all_model}
\end{figure*}

\section{Evaluating and Analyzing the Prompt Sensitivity of LLMs}
\subsection{Evaluating Interaction-Based Prompt Sensitivity}{\label{section:4.1}}
We propose an interaction-based metric to measure the prompt sensitivity of LLMs.
Given an MCQ dataset {\small$\mathcal{D}$}, for any input {\small$\boldsymbol{x}\in\mathcal{D}$}, it is composed of the question {\small$Q$} and the options {\small$M$}, \textit{i.e.}, {\small$\boldsymbol{x}=Q\cup M$}. Given a prompt template {\small$T$}, we make minor changes to {\small$T$} and obtain a modified prompt template {\small$\hat{T}$}.
By applying the pair of prompt templates {\small$T$} and {\small$\hat{T}$} to the same input {\small$Q\cup M$}, we can construct two similar prompts. The LLM is supposed to extract similar salient interactions from {\small$Q\cup M$} when processing these two prompts because {\small$T$} and {\small$\hat{T}$} have the same semantic meaning.
Thus, let {\small$\Omega_\text{salient}(\boldsymbol{x}|T)=\{S\in \Omega(\boldsymbol{x})\mid|\widetilde{I}_S(\boldsymbol{x}|T)|>\tau\}$} represent the set of \textbf{salient interactions} extracted from {\small$\boldsymbol{x}$} given the prompt template {\small$T$}. Here {\small$\tau$} is a threshold used to distinguish salient interactions from noise patterns.
In the main paper, we set {\small$\tau$} as 0.1 for all experiments. Please see Appendix~\ref{app:threshold} for hyperparameter experiments of {\small$\tau$}.
Similarly, let {\small$\Omega_\text{salient}(\boldsymbol{x}|\hat{T})=\{S\in\Omega(\boldsymbol{x})\mid|\widetilde{I}_S(\boldsymbol{x}|\hat{T})|>\tau\}$} represent the set of \textbf{salient interactions} extracted from {\small$\boldsymbol{x}$} given the prompt template {\small$\hat{T}$}. Therefore, on the dataset {\small$\mathcal{D}$}, we define the LLM's \textbf{Interaction-based Prompt Sensitivity} as \textbf{{\small$\textit{IPS}$}}.
\begin{equation}\small
    \textit{IPS} \triangleq \mathbb{E}_{\boldsymbol{x}} \left [ \mathbb{E}_{T,\hat{T}} \left [\frac{1}{|{\Omega}_{\text{union}}|} \sum_{S \in {\Omega}_{\text{union}}} \frac{|\mathcal{\widetilde{I}}_S(\boldsymbol{x}|T) - \mathcal{\widetilde{I}}_S(\boldsymbol{x}|\hat{T})|}{\frac{|\widetilde{I}_S(\boldsymbol{x}|T)| + |\widetilde{I}_S(\boldsymbol{x}|\hat{T})|}{2}} \right ]\right ],
\label{eq:model_sens}
\end{equation}
where {\small${\Omega}_{\text{union}} = \Omega_\text{salient}(\boldsymbol{x}|T) \cup \Omega_\text{salient}(\boldsymbol{x}|\hat{T})$} is a unified set by taking the union of the two salient sets {\small$\Omega_\text{salient}(\boldsymbol{x}|T)$} and {\small$\Omega_\text{salient}(\boldsymbol{x}|\hat{T})$}; the outer expectation, {\small$\mathbb{E}_{\boldsymbol{x}}$}, represents an averaging over all inputs {\small$\boldsymbol{x}\in \mathcal{D}$}, and the inner expectation, {\small$\mathbb{E}_{T,\hat{T}}$}, represents an averaging over all pairs of prompt templates {\small$(T,\hat{T})$}. This metric evaluates prompt sensitivity of LLMs by calculating the symmetric mean absolute percentage error of salient interactions over all samples in the dataset {\small$\mathcal{D}$}. 

\textit{Models and Datasets.} We conduct experiments on \textbf{50 open-source LLMs from 6 model families}. This diverse set includes 10 LLMs from the Llama family: Llama-2, Llama-3, Llama-MoE \cite{touvron2023llama, grattafiori2024llama, zhu2024llama, qu2024llama}; 4 LLMs from the Mistral family: Mistral, Mixtral \cite{2023arXiv231006825J, jiang2024mixtral}; 25 LLMs from the Qwen family: Qwen2, Qwen2.5, Qwen3, Qwen1.5-MoE, Qwen3-30B-A3B \cite{yang2024qwen2technicalreport, yang2025qwen3}; 9 LLMs from the OLMo family: OLMo, OLMo-2, OLMoE \cite{groeneveld2024olmo, olmo20242, muennighoff2024olmoe}; 2 LLMs from the InternLM family: InternLM2 \cite{cai2024internlm2}. 
We evaluate all LLMs on \textbf{two widely used MCQ benchmarks}: ARC \cite{clark2018think} and MMLU \cite{hendrycks2020measuring}. Experiments on \textbf{open-ended tasks} are shown in Section~\ref{sec:4.3}. For each input, we apply five distinct prompt templates to it. These templates maintain the same core content and differ only in minor formatting details, such as letter case and separators.
For masking words in the input sentences, we follow the approach of \citet{cheng2025revisiting} and utilize a certain {\small$\text{[MASK]}$} token for each LLM.
A comprehensive list of all LLMs, along with the specific prompt templates, mask tokens used in experiments is provided in Appendix~\ref{sec:5}.

We apply the IPS metric to evaluate the prompt sensitivity of 50 open-source LLMs. As shown in Figure~\ref{fig:all_model} (a), we observe a wide variance in IPS, ranging from 1.268 (Qwen2.5-72B-Instruct) to 1.752 (Mistral-7B-v0.3). However, no LLM series or families achieve a complete victory. The distribution indicates that an LLM's prompt sensitivity is influenced by multiple underlying factors.
Identifying these factors is of great significance for the robustness research of LLMs.

To validate the reliability of the IPS metric, we investigate its correlation with the traditional metrics output consistency, which is defined as the proportion of samples where the LLM generates identical predictions across different prompt templates. Figure~\ref{fig:all_model} (b) shows that the IPS score exhibits a negative correlation with output consistency. This result confirms that \textbf{IPS aligns well with the output consistency while offering a more fine-grained perspective to quantify prompt sensitivity beyond output matching}.

\textbf{Robustness to the threshold {\small$\tau$}.}
In the main paper, we set the threshold {\small$\tau$} as 0.1 for all experiments.
To verify that our findings are robust to the selection of {\small$\tau$}, we compute interactions across 16 distinct threshold values, ranging from 0.05 to 0.20 with a step size of 0.01.
We evaluate the consistency of IPS rankings across these thresholds, observing an average Spearman's rank correlation of \textbf{0.9905} and an average Pearson correlation of \textbf{0.9957} (Appendix~\ref{app:threshold_experiemnt}). These near-perfect correlations indicate that the \textit{relative} ranking among LLMs remains highly stable.
More crucially, we conduct all the experiment with {\small$\tau=0.05$} and {\small$\tau=0.15$}. As detailed in Appendix~\ref{app:threshold_experiemnt2}, the main conclusions remain same as {\small$\tau=0.1$}, proving that our findings are robust to different {\small$\tau$}.

\begin{figure}[t]
\centering
\includegraphics[width=1\columnwidth]{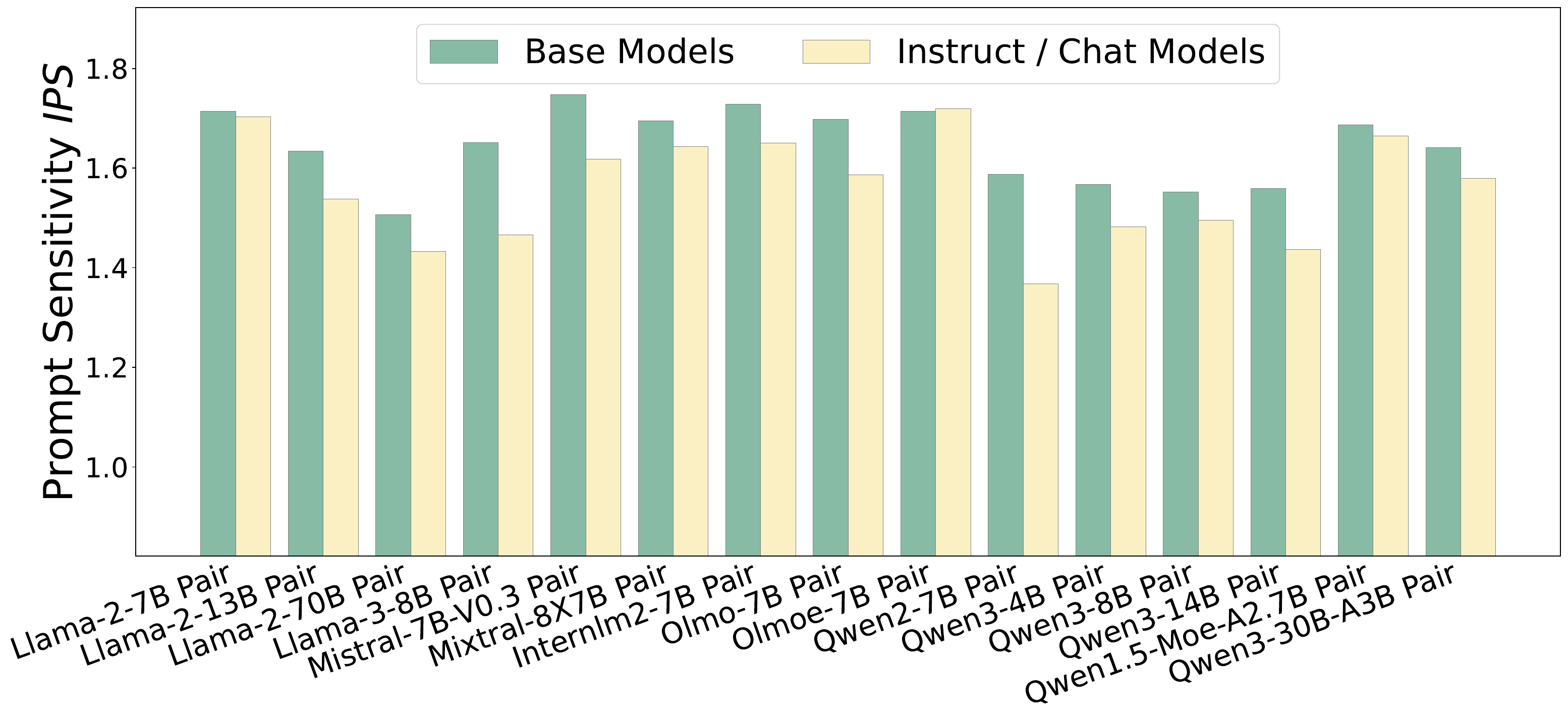}
\caption{A comparison of the prompt sensitivity between instruct/chat models and base models. Results show that instruct/chat models are less sensitive than corresponding base models.}
\label{fig:instruct_base}
\end{figure}

\subsection{Analyzing the Factors Impacting Prompt Sensitivity}\label{sec:4.2}


In this section, we investigate four factors that might influence the prompt sensitivity of LLMs, including (1) supervised fine-tuning, (2) model scales, (3) model architectures, and (4) prompting methods.

\textbf{Factor 1: instruct/chat models vs. base models.}
Supervised fine-tuning (\textit{e.g.}, instruction tuning) is now a common practice to align base models with human preferences for certain tasks, yielding models often referred to as instruct or chat models (for different tasks or purposes). Therefore, we investigate the impact of supervised fine-tuning by comparing the prompt sensitivity of instruct/chat models with base models. 
Results\footnote{\label{footnote2}Results on the MMLU dataset in Appendix~\ref{sec:6} exhibit the same conclusion.} on the ARC dataset in Figure~\ref{fig:instruct_base} show that almost all instruct/chat models exhibit lower prompt sensitivity than their corresponding base models. This demonstrates that supervised fine-tuning enables the LLM to encode more stable interactions. A valid explanation is that base models are pre-trained on unstructured and raw texts, but instruct/chat models are further fine-tuned on instruction-response datasets or dialogue datasets. Thus, instruct/chat models can precisely understand the function of prompt templates and focus on the task-relevant inputs.



\begin{figure}[t]
\centering
\includegraphics[width=1\columnwidth]{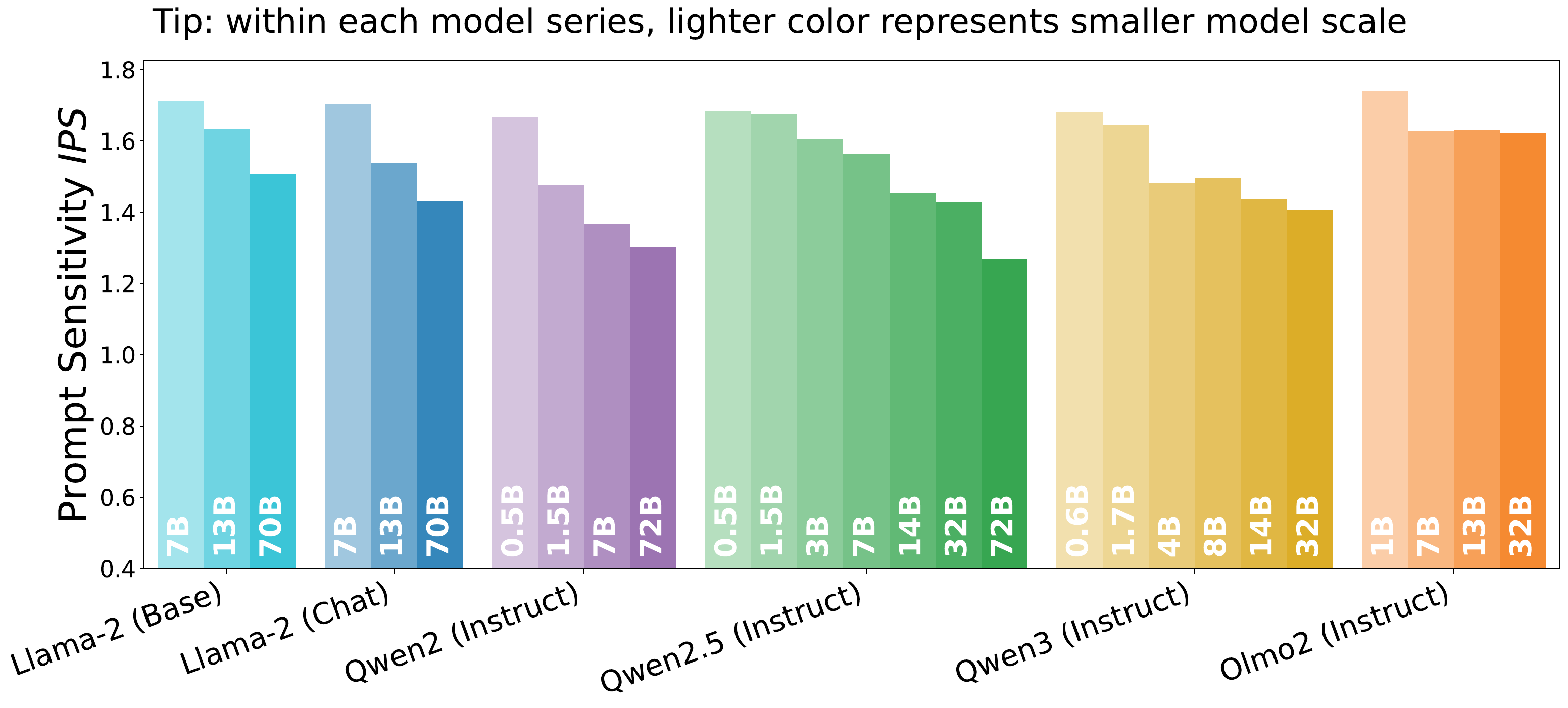}
\caption{A comparison of prompt sensitivity across different model scales. As the model scale increases, the prompt sensitivity within a model series systematically decreases.}
\label{fig:model_size}
\end{figure}

\textbf{Factor 2: model scales.}
We investigate the relationship between the model scale (\textit{i.e.}, the number of parameters) and the prompt sensitivity. Results\footref{footnote2} on the ARC dataset in Figure~\ref{fig:model_size} show that within the same model series, as the model scale increases, the overall prompt sensitivity systematically decreases. This demonstrates that larger LLMs encode more stable interactions, making them less susceptible to superficial changes in the prompt template.


\textbf{Factor 3: dense models vs. MoE models.}
MoE models scale up model capacity with minimal computational cost by dynamically activating different subsets of \textit{``expert"} sub-networks \cite{cai2025survey}. In contrast, all parameters in dense models participate in every computation. We aim to investigate the impact of model architectures by comparing the prompt sensitivity of dense models with MoE models. Results\footref{footnote2} on the ARC dataset in Figure~\ref{fig:dense_moe} show that in model families including Llama-2, Llama-3, Qwen and Olmo, all MoE models exhibit higher prompt sensitivity than dense models. This suggests that MoE models encode more unstable interactions. 
A potential confounding factor is the number of active parameters. Given that smaller models are more sensitive (Factor 2), one might attribute MoE instability to lower active parameters.
To mitigate the influence of active parameters, we conduct controlled variable analysis. Results (Table~\ref{tab:moe_dense_controlled}, Appendix~\ref{sec:moe_explain}) confirms that \textit{MoE models remain more sensitive than dense models even with similar active parameters.}
We attribute this to the dynamic routing mechanism. For different prompt templates, the gating network may route the input to different experts so that it is processed by different sub-networks, leading to different interactions encoded and thus higher sensitivity. 
In conclusion, while MoE models achieve impressive performance with reduced computational overhead, this benefit comes at the cost of weaker stability.

\begin{figure}[t]
\centering
\includegraphics[width=1\columnwidth]{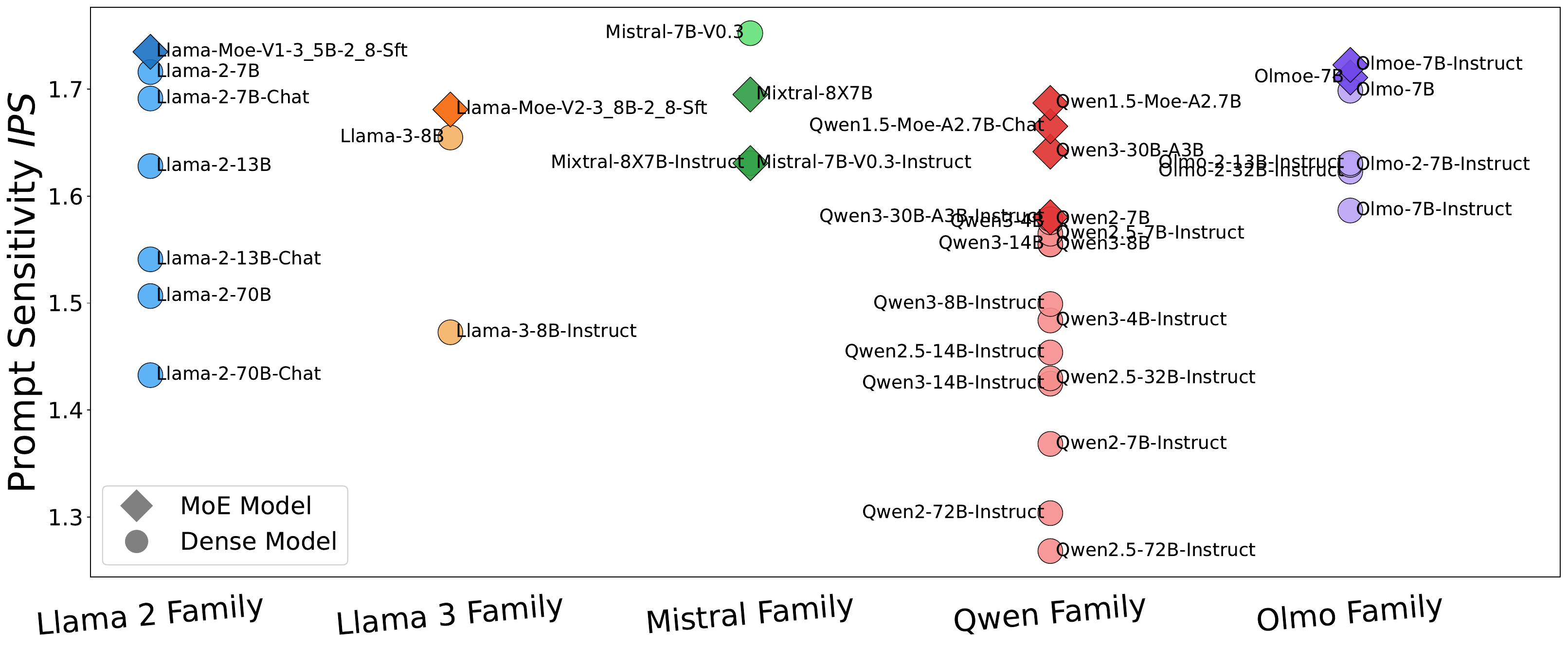}
\caption{A  Comparison of prompt sensitivity between MoE models and dense models. Generally, MoE models tend to be more sensitive than dense models in the same model family.}
\label{fig:dense_moe}
\end{figure}

\begin{figure}[t]
\centering
\includegraphics[width=1\columnwidth]{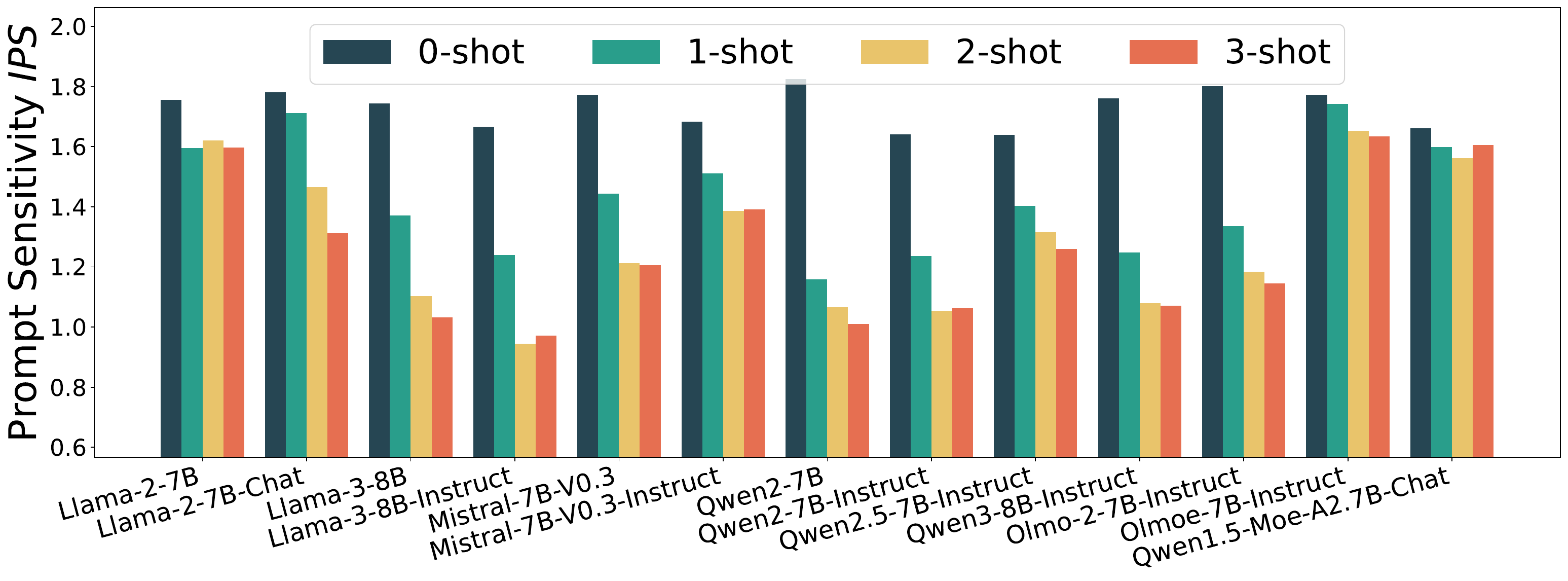}
\caption{A comparison of prompt sensitivity between 0-shot learning and few-shot learning. The drop in prompt sensitivity is substantial from 0-shot to 1-shot.}
\label{fig:fsl}
\end{figure}

\textbf{Factor 4: few-shot learning vs. 0-shot learning.}
Few-shot learning is utilized to improve LLMs' performance by providing in-context examples to better specify the task \cite{wang2020generalizing}. We investigate the impact of prompting methods on prompt sensitivity by comparing 0-shot learning with few-shot learning\footnote{See Appendix~\ref{sec:5} for the prompt templates and settings of few-shot learning.}.
Results\footref{footnote2} on the ARC dataset in Figure~\ref{fig:fsl} show that incorporating in-context examples leads to a significant reduction in prompt sensitivity across all tested LLMs. For most of the LLMs, the most substantial drop occurs when moving from 0-shot to 1-shot, while adding more in-context examples yields slower reductions. 
This suggests that even a single example is sufficient to establish the LLM's understanding of the task, leading it to ignore superficial template variations and focus on the core input.


\begin{figure}[t]
\centering
\includegraphics[width=1\columnwidth]{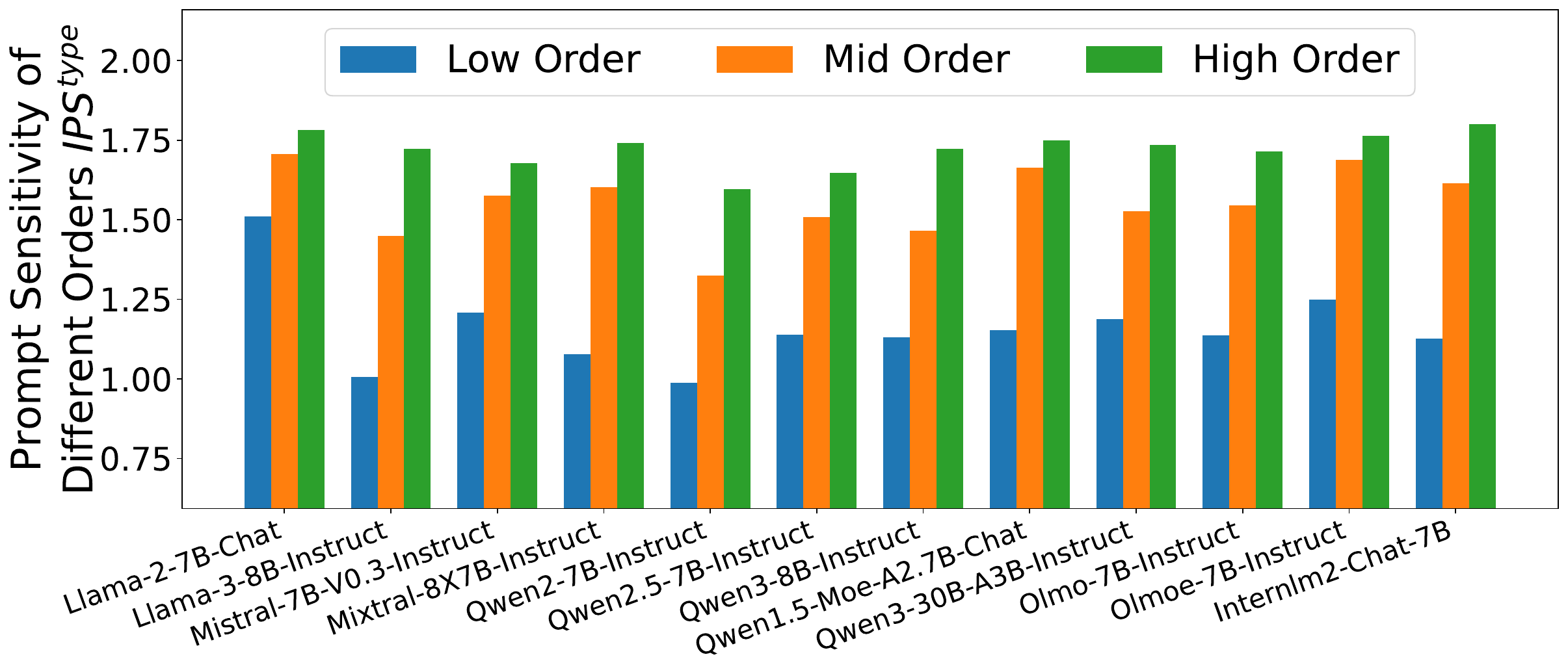}
\caption{Prompt sensitivity of LLMs for different order types.}
\label{fig:order}
\end{figure}

\begin{figure*}[t]
\centering
\includegraphics[width=1\textwidth]{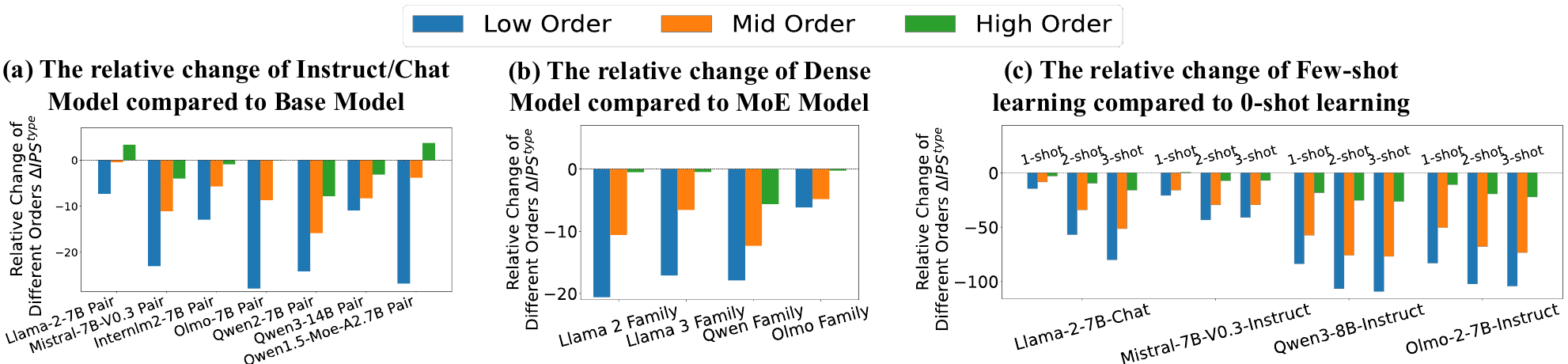}
\caption{Comparing the relative change in the prompt sensitivity of low-, mid-, and high-order interactions for different factors.}
\label{fig:order_analysis_0.1_new}
\end{figure*}

\begin{figure}[t]
\centering
\includegraphics[width=1\columnwidth]{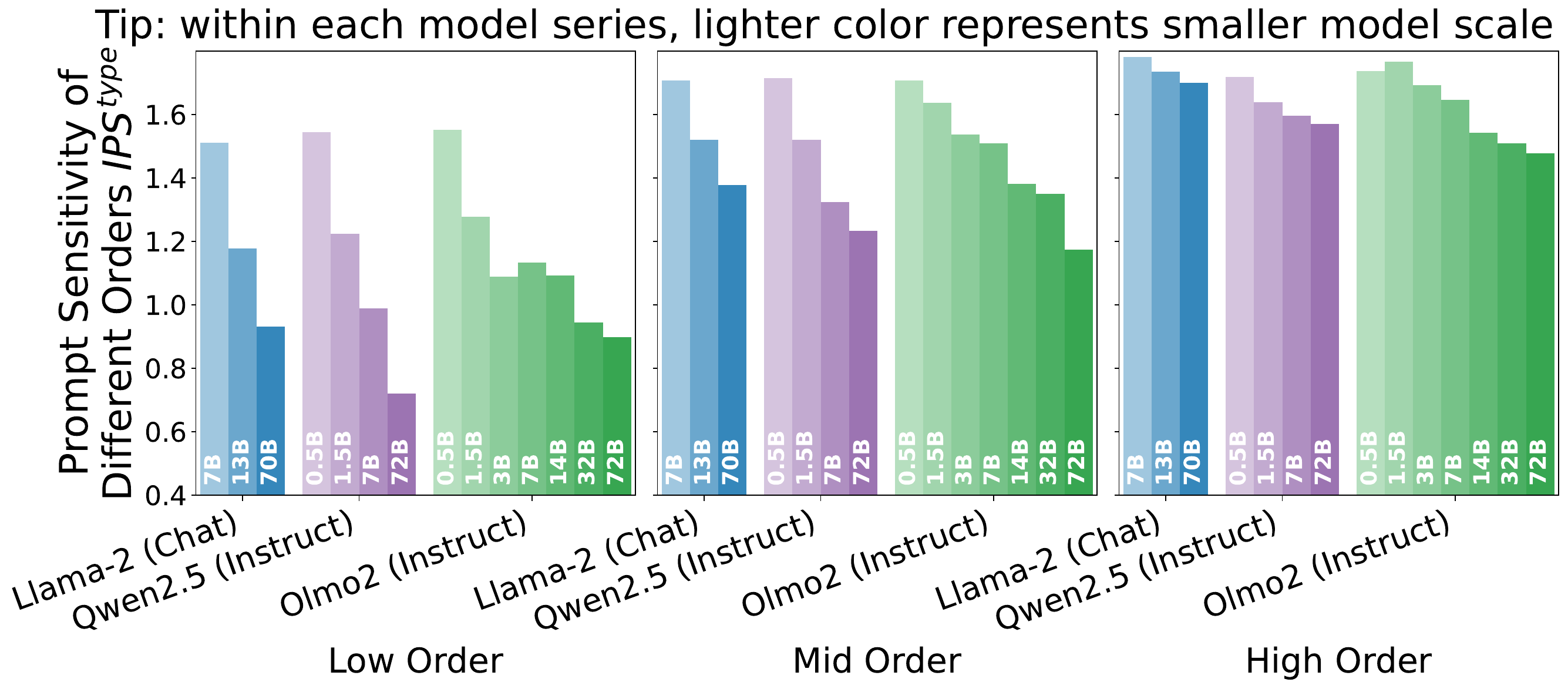}
\caption{A comparison of prompt sensitivity of different order types across different model scales.}
\label{fig:model_size_order}
\end{figure}

\subsection{Explore the Underlying Mechanisms of Improved Stability for All Factors}\label{sec:4.3}
In this section, we aim to explore \textbf{whether there exists a common reason to explain the underlying mechanisms by which the four aforementioned factors reduce the prompt sensitivity of LLMs}. Specifically, we analyze the prompt sensitivity of different types of interactions, so as to reveal the source of the LLM's prompt sensitivity. To this end, we analyze the sensitivity of interactions with different complexities, which are defined as the orders of interactions.
The order of an interaction {\small$S$} is defined as the number of input variables involved, \textit{i.e.}, {\small$|S|$}. An interaction with high order indicates an intricate relationship including many input variables, while an interaction with low order represents a simple relationship including few input variables. We further define three types of prompt sensitivity metrics corresponding to different types of interaction orders. Specifically, we partition interactions in {\small${\Omega}_{\text{union}}$} in Eq.~(\ref{eq:model_sens}) into three distinct groups based on their orders: low-order, mid-order, and high-order. Given an input {\small$\boldsymbol{x}$} with {\small$n$} words, {\small${\Omega}^{\textit{low}}_{\text{union}}\triangleq\{S\in{\Omega}_{\text{union}}\mid1\le|S|\le\lfloor\frac{1}{3}n\rfloor\},{\Omega}^{\textit{mid}}_{\text{union}}\triangleq\{S\in{\Omega}_{\text{union}}\mid\lfloor\frac{1}{3}n\rfloor< |S| \le \lfloor\frac{2}{3}n\rfloor\},{\Omega}^{\textit{high}}_{\text{union}}\triangleq\{S\in{\Omega}_{\text{union}}\mid\lfloor\frac{2}{3}n\rfloor<|S|\le n\}$}.
Then we calculate prompt sensitivity for low-order, mid-order, and high-order interactions, as {\small$\textit{IPS}^{\textit{low}}$}, {\small$\textit{IPS}^{\textit{mid}}$}, {\small$\textit{IPS}^{\textit{high}}$}.

Figure~\ref{fig:order} presents the prompt sensitivity of different order types on the ARC dataset. Results\footref{footnote1} show that the prompt sensitivity of low-order interactions is the lowest, followed by mid-order, while high-order interactions exhibit the highest prompt sensitivity. This indicates that low-order interactions encoded by LLMs are relatively stable when faced with subtle changes to prompt templates, \textit{i.e.}, simple interaction patterns are more robust.
Conversely, the high sensitivity of high-order interactions reveals that the LLMs' internal representation of complex patterns is highly unstable.

\textbf{Explaining why the four factors can reduce prompt sensitivity.}
Inspired by the results in Figure~\ref{fig:order}, we now investigate how the four aforementioned factors influence the prompt sensitivity of low-, mid-, and high-order interactions. This order-level analysis aims to reveal the common mechanism by which these factors reduce the LLM's prompt sensitivity.
For each factor, we quantify its effect on low-, mid-, and high-order interactions by computing the relative change in IPS between the LLM with the factor and its counterpart without it.
Specifically, given {\small$\textit{type}\in \{\textit{low},\textit{mid},\textit{high}\}$}, the relative change is defined as {\small$\Delta \textit{IPS}^{\textit{type}}=(\textit{IPS}^{\textit{type}}_\text{A}-\textit{IPS}^{\textit{type}}_\text{B})/\textit{IPS}^{\textit{type}}_\text{B}$}. For \textbf{Factor 1} (fine-tuned vs. base), A and B are fine-tuned and base models, respectively. For \textbf{Factor 3} (dense vs. MoE), A and B are dense and MoE models, with the final {\small$\Delta \textit{IPS}^{\textit{type}}$} being the average over all pairs of a specific dense model and a specific MoE model within a model family. For \textbf{Factor 4} (few-shot vs. 0-shot), A and B are x-shot ({\small$x\in{\{1,2,3\}}$}) and 0-shot learning.
The relative change metric is unsuitable for \textbf{Factor 2} (model scales), as scale is a continuous variable. Instead, we directly analyze the trend of IPS values as model parameters increase.

Results\footref{footnote1} shown in Figure~\ref{fig:order_analysis_0.1_new} and Figure~\ref{fig:model_size_order} converge on a common explanation for how the four factors reduce prompt sensitivity. \textbf{The most significant reduction in prompt sensitivity is consistently observed in low-order interactions.} An obvious, though less pronounced, decrease is also seen at the mid-order level. In contrast, the change of the sensitivity of high-order interactions is relatively minimal, remaining at a high level. It indicates that the stability of low-order interactions is critical to the overall robustness of LLMs.

This phenomenon is unexpected. Although results in Figure~\ref{fig:order} show that low-order interactions are naturally more robust than other types of interactions, the four factors above still significantly reduce the sensitivity of low-order interactions. Instead, they fail to reduce the sensitivity of high-order interactions, which are inherently the most sensitive. This phenomenon indicates that stable low-order interactions are much easier for LLMs to learn, while it is difficult for LLMs to make high-order interactions more stable.


\textbf{Robustness on open-ended tasks.}
To verify the robustness of our findings, we conduct additional experiments on open-ended generation tasks. We utilize the Dolly-15k dataset \citep{DatabricksBlog2023DollyV2}, which contains a diverse range of non-MCQ tasks, including open Q\&A, classification, and others.
The results of this analysis, detailed in Appendix~\ref{sec:open-ended}, consistently verify our main conclusions drawn from the MCQ experiments.
This strongly suggests that the four factors and the underlying mechanisms of prompt sensitivity we have uncovered can be generalized to open-ended questions.

\textbf{Robustness to more complex prompt perturbations.}
In this experimental setup, we use the Dolly-15k dataset and introduce more complex prompt perturbations, specifically semantic paraphrases and instruction reordering (detailed in Appendix~\ref{sec:paraphrase.1}), to test the robustness of our conclusions.
The results in Figures~\ref{fig:factor_analysis_dolly_main}, \ref{fig:order_analysis_dolly_main}, and \ref{fig:model_size_order_dolly_main} consistently affirm our conclusions drawn from the template-based experiments.
This indicates that our conclusions are robust to more complex prompt perturbations.

\begin{figure*}[t]
\centering
\includegraphics[width=0.96\textwidth]{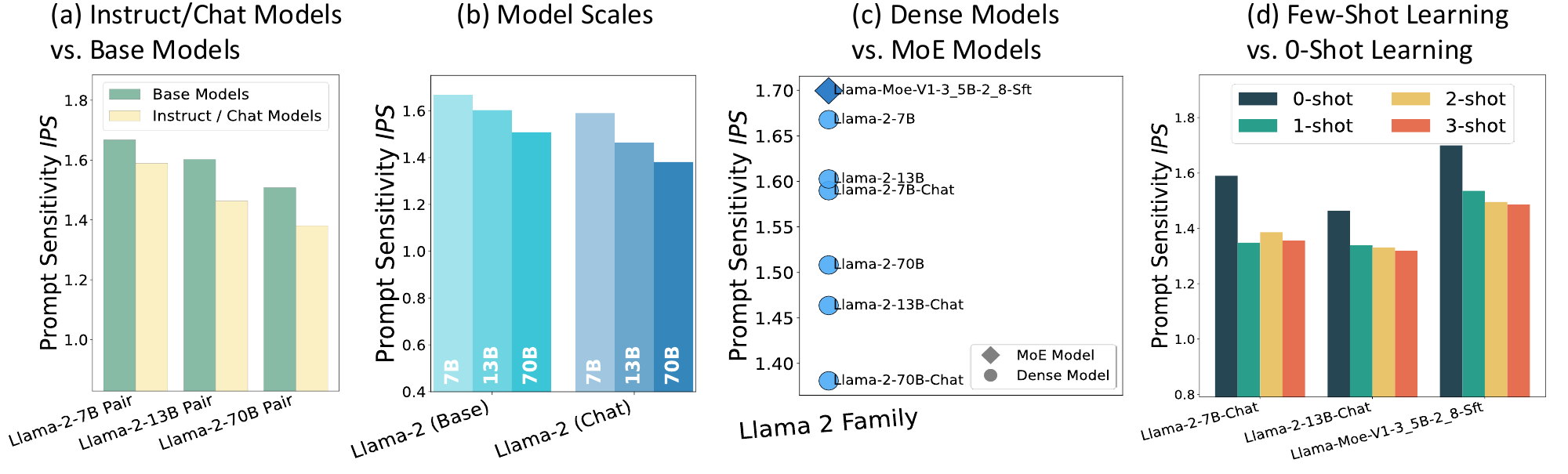}

\caption{(a) A comparison of the prompt sensitivity between instruct/chat models and base models. (b) A comparison of prompt sensitivity across different model scales. (c) A comparison of prompt sensitivity between MoE models and dense models. (d) A comparison of prompt sensitivity between 0-shot learning and few-shot learning.}
\label{fig:factor_analysis_dolly_main}

\end{figure*}

\begin{figure*}[t]
\centering
\includegraphics[width=0.96\textwidth]{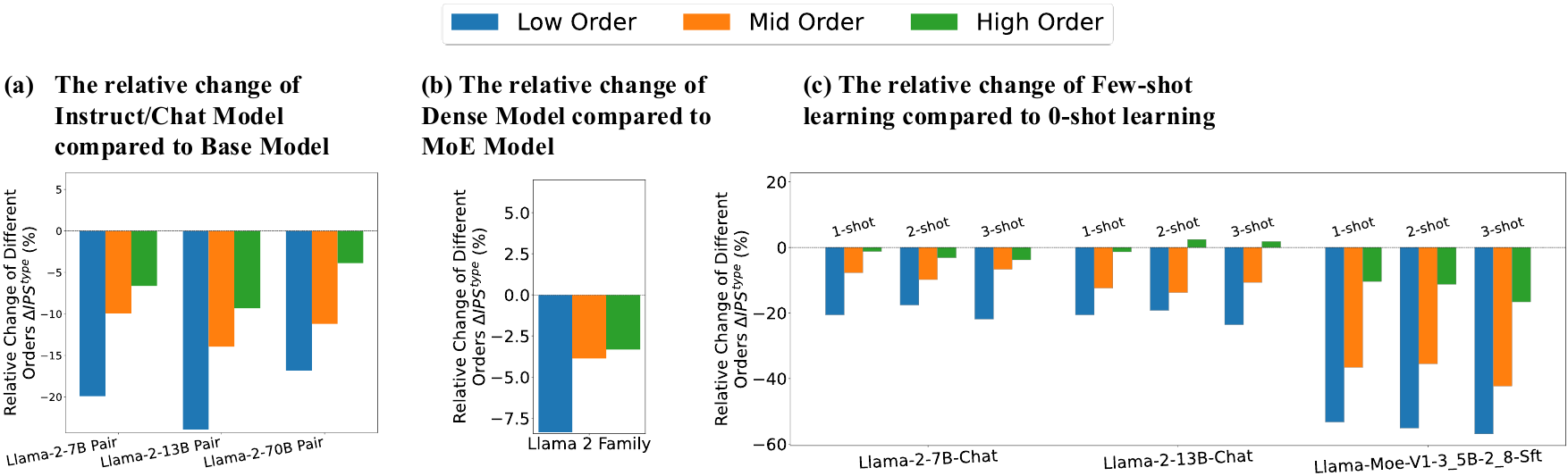}

\caption{Comparing the relative change in the prompt sensitivity of low-, mid-, and high-order interactions for different factors.}
\label{fig:order_analysis_dolly_main}

\end{figure*}

\begin{figure}[t]
\centering
\includegraphics[width=0.96\columnwidth]{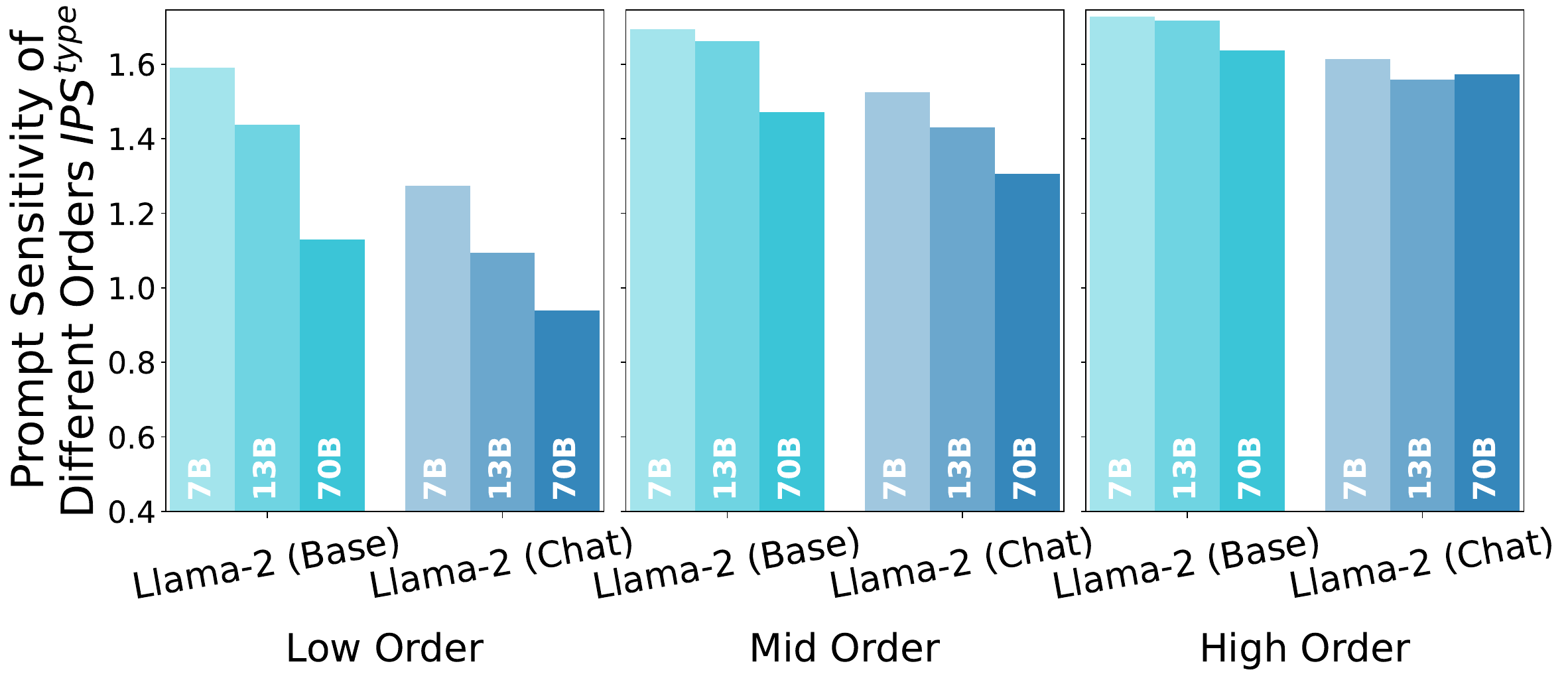}

\caption{A comparison of prompt sensitivity at the order-level across different model scales.}
\label{fig:model_size_order_dolly_main}

\end{figure}

\textbf{Strategies for reducing the computational cost.}
To reduce the computational cost of the interaction framework, recent studies \cite{chen2024defining, cheng2025revisiting} used the following two strategies: \textit{(1) Select informative words as input variables while treating uninformative ones (e.g., stop words) as fixed background context.} \textit{(2) Merge related words into combined phrases as input variables}.
The specific selection strategies are detailed in Appendix~\ref{sec:speed_up}. 
Results in Appendix~\ref{sec:open-ended-exp} show that using the above two strategies on open-ended tasks yields the same conclusions.
In addition to selection strategies, techniques specifically designed for efficiently computing sparse interactions to bypass exhaustive {\small $O(2^n)$} evaluations \citep{kang2025spex,butler2026proxyspex} represent a clear path for reducing computational cost.
Further details are provided in Appendix~\ref{sec:7}.

\section{Conclusion}

In this paper, we propose an interaction-based metric to evaluate the prompt sensitivity of LLMs. We discover that employing supervised fine-tuning, increasing model scale, using dense over MoE architectures, and applying few-shot learning all serve to reduce the prompt sensitivity of LLMs. Our findings offer novel insights into both model designs and prompting methods for improving the robustness of LLMs. More crucially, we find that these factors achieve lower sensitivity primarily by reducing the sensitivity of low-order interactions, while the prompt sensitivity of high-order interactions remains at a relatively high level.
In future studies, new training methods could be designed to increase the LLM's reliance on stable low-order interactions or, alternatively, to reduce the instability of high-order interactions.

\clearpage
\section*{Acknowledgements}
This work is partially supported by the Shanghai Science and Technology Commission (No. 25511102900), the National Nature Science Foundation of China (No.62376199,62576249), and the Shanghai Municipal Education Commission (No. 24CGA20).

\section*{Impact Statement}
This paper presents work whose goal is to advance the field of Machine Learning. There are many potential societal consequences of our work, none which we feel must be specifically highlighted here.

\bibliography{example_paper}
\bibliographystyle{icml2026}

\newpage
\appendix
\onecolumn
\section{Masking Strategies of Input Variables}\label{sec:1}
In attribution method research, it is common to employ a specific token or embedding to mask the input variables of a deep neural network (DNN) \cite{lundberg2017unified, ancona2019explaining, fong2019understanding} and use changes in network outputs on the masked samples to estimate attributions of different input variables. The selection of a masking approach is complex, as each method has its weakness. For example, replacing input variables with the mean baseline value (the average of all samples) or the zero baseline value can introduce out-of-distribution signals, thereby providing the model with artificial information, such as uniform grey or black dots in an image \cite{dabkowski2017real, ancona2019explaining, sundararajan2017axiomatic}. Additionally, blurring image pixels using a Gaussian kernel \cite{fong2017interpretable, fong2019understanding} as the masked state removes high-frequency signals but fails to eliminate low-frequency signals \cite{covert2021explaining, sturmfels2020visualizing}.

Given these challenges, we adopt a token replacement strategy, which is standard for the text domain. This involves substituting the target input word with a dedicated $\text{[MASK]}$ token at the embedding level. For example, to mask the word "green" in the input "He is a green hand," we would provide the LLM with the modified input "He is a [MASK] hand." This approach effectively nullifies the specific semantic contribution of the target word without introducing out-of-distribution artifacts, ensuring a clean and consistent baseline for our interaction analysis. For the specific $\text{[MASK]}$ token for each LLM, please refer to Section 5 for details.

\section{Proof of Theorem}\label{sec:2}
\subsection{Proof of Universal Matching Property}
In the main body of the paper, for the sake of simplicity and clarity, we introduced the Universal Matching Property (Theorem 1) primarily through the lens of AND interactions. However, our empirical analysis and the underlying theoretical framework are built upon a more comprehensive AND-OR interaction framework. This extended framework, which incorporates both AND and OR interaction patterns, also adheres to the Universal Matching Property.

In this section, we provide the formal proof for the Universal Matching Property of the complete AND-OR interaction framework. This proof is more general and naturally subsumes the proof for the AND interaction framework presented as Theorem 1 in the main text. We will demonstrate that the output of the surrogate logical model, which is the sum of all AND-OR interaction effects, can perfectly match the output of the Deep Neural Network (DNN) for any masked sample.

The \textbf{surrogate logical model $\phi(\cdot)$} is defined as follows:
\begin{equation}
\begin{aligned}
    \phi(\boldsymbol{x}_T) \triangleq  \phi(\boldsymbol{x}_\emptyset)  +  \sum_{S\subseteq N, S\ne \emptyset} \mathds{1}_{\text{\rm AND}}{(S \mid \boldsymbol{x}_T)} \cdot I^{\text{\rm AND}}_S + \sum_{S\subseteq N, S\ne \emptyset} \mathds{1}_{\text{\rm OR}}{(S \mid \boldsymbol{x}_T)} \cdot I^{\text{\rm OR}}_S,
\end{aligned}
\end{equation}
where the AND trigger function $\mathds{1}_{\text{\rm AND}}({S\mid \boldsymbol{x}_T}) \in \{0,1\}$ represents an \textbf{AND relationship} between input variables in $S$, which can also be termed \textbf{AND interaction pattern}; the OR trigger function $\mathds{1}_{\text{\rm OR}}({S\mid \boldsymbol{x}_T}) \in \{0,1\}$ represents an \textbf{OR relationship} between input variables in $S$, which can also be termed \textbf{OR interaction pattern}. The scalar weight $I^{\text{\rm AND}}_S$ quantifies the effect of an AND relationship, which can also be termed \textbf{AND interaction effect}; the scalar weight $I^{\text{\rm OR}}_S$ quantifies the effect of an OR relationship, which can also be termed \textbf{OR interaction effect}. 
An AND relationship is activated only by the joint presence of all input variables in the set $S$, \textit{i.e.}, all input variables in $S$ are not masked. For instance, given the input sentence $\boldsymbol{x} = \textit{``He\ is\ a\ green\ hand,''}$ the co-occurrence of the input variables in the set $S = \{\textit{green},\textit{hand}\}$ contributes a numerical effect $I^{\text{\rm AND}}_S$ that pushes the surrogate logical model's inference towards the semantic meaning of $\textit{``beginner."}$ If an AND interaction $S$ is triggered, \textit{i.e.}, $\mathds{1}_{\text{\rm AND}}({S\mid \boldsymbol{x}_T}) = 1$, the corresponding interaction effect $I^{\text{\rm AND}}_S$ is added to the output of the logical model. Otherwise, if any word in $S$ is masked and the AND interaction is not triggered, \textit{i.e.}, $\mathds{1}_{\text{\rm AND}}({S\mid \boldsymbol{x}_T}) = 0$, the interaction effect $I^{\text{\rm AND}}_S$ is not added to the output of the logical model. 
An OR relationship is activated by the presence of any of all input variables in the set $S$, \textit{i.e.}, any input variables in $S$ are not masked. For instance, given the input sentence $\boldsymbol{x} = \textit{``The\ service\ was\ terrible\ and\ the\ food\ was\ awful,''}$ the presence of any input variables in the set $S = \{\textit{terrible},\textit{awful}\}$ contributes a numerical effect $I^{\text{\rm OR}}_S$ that pushes the surrogate logical model's inference towards a negative sentiment classification. If an OR interaction $S$ is triggered, \textit{i.e.}, $\mathds{1}_{\text{\rm OR}}({S\mid \boldsymbol{x}_T}) = 1$, the corresponding interaction effect $I^{\text{\rm OR}}_S$ is added to the output of the logical model. Otherwise, if all words in $S$ are masked and the OR interaction is not triggered, \textit{i.e.}, $\mathds{1}_{\text{\rm OR}}({S\mid \boldsymbol{x}_T}) = 0$, the interaction effect $I^{\text{\rm OR}}_S$ is not added to the output of the logical model.
$\boldsymbol{x}_{\emptyset}$ represents that all input variables in $N$ are masked.

\textbf{Definition of universal matching property for AND-OR interactions.}
When the scalar weights in the surrogate logical model $\phi(\cdot)$ are set to $I^{\text{\rm AND}}_S =\sum\nolimits_{T \subseteq S}(-1)^{|S|-|T|}v_{\text{and}}(\boldsymbol{x}_T)$ and $ I^{\text{\rm OR}}_S =  -\sum\nolimits_{T \subseteq S}(-1)^{|S|-|T|}v_{\text{or}}(\boldsymbol{x}_{N\setminus T})$, the output of $\phi(\cdot)$ can always match the output score of the DNN $v(\cdot)$, \textit{i.e.}, $\forall T\subseteq N, v(\boldsymbol{x}_T)=\phi(\boldsymbol{x}_T)$. Here $v_{\text{and}}(\boldsymbol{x}_T) + v_{\text{\rm or}}(\boldsymbol{x}_T)  = v(\boldsymbol{x}_T)$.

We need to prove that given an input sample $\boldsymbol{x}$, for each masked sample $\{\boldsymbol{x}_T|T\subseteq N\}$, the network output score $v(\boldsymbol{x}_T) \in \mathbb{R}$ can be well matched by the surrogate logical model $\phi(\boldsymbol{x}_T)$. The surrogate logical model $\phi(\boldsymbol{x}_T)$ uses the sum of AND interactions and OR interactions to accurately explain/match the network output score $v(\boldsymbol{x}_T)$.
\begin{equation}
\begin{aligned}
    &\forall T\subseteq N, v(\boldsymbol{x}_T) = \phi(\boldsymbol{x}_T). \\
    &\phi(\boldsymbol{x}_T) =   \phi(\boldsymbol{x}_\emptyset)  +  \sum_{S\subseteq N, S\ne \emptyset} \mathds{1}_{\text{\rm AND}}{(S \mid \boldsymbol{x}_T)} \cdot I^{\text{\rm AND}}_S + \sum_{S\subseteq N, S\ne \emptyset} \mathds{1}_{\text{\rm OR}}{(S \mid \boldsymbol{x}_T)} \cdot I^{\text{\rm OR}}_S, \\
    & \,\enspace \qquad =  \underbrace{v(\boldsymbol{x}_\emptyset) + \sum\nolimits_{ S\subseteq T, S\ne \emptyset }I^{\text{\rm AND}}_S}_{v_{ \text{\rm and}}(\boldsymbol{x}_T)} + \underbrace{\sum\nolimits_{S\subseteq N, S\cap T \neq \emptyset}I^{\text{\rm OR}}_S}_{v_{ \text{\rm or}}(\boldsymbol{x}_T)} \label{eq:universal}
\end{aligned}
\end{equation}

\begin{proof} \textbf{(1) Universal matching property of AND interactions.}
For all $2^n$ masked samples $\{\bm{x}_T\mid T\subseteq N\}$, what we need to prove is that the output $v_{\text{\rm and}}(\bm{x}_T)$ of a DNN can be universally explained by all the interactions in $T\subseteq N$, \textit{i.e.}, $\forall S\subseteq T, S\ne \emptyset, v_{\text{\rm and}}(\bm{x}_T)=\sum_{ S\subseteq T, S\ne \emptyset }I_S^{\text{\rm AND}}(\bm{x})=v(\bm{x}_\emptyset) + \sum_{ S\subseteq T, S\ne \emptyset }I_S^{\text{\rm AND}}$. Here, $v(\bm{x}_\emptyset)=v_{\text{\rm and}}(\bm{x}_\emptyset)$.

According to the definition of the AND interaction, $I_S^{\text{\rm AND}}(\bm{x}) =  \sum\nolimits_{L \subseteq S}(-1)^{|S|-|L|}v_{\text{and}}(\bm{x}_L)$. 
To simplify the computation of the sum of AND interactions $\sum_{ S\subseteq T, S\ne \emptyset }I_S^{\text{\rm AND}}(\bm{x}) =  \sum\nolimits_{ S\subseteq T, S\ne \emptyset } \sum\nolimits_{L \subseteq S} (-1)^{\vert S \vert - \vert L \vert} v_{\text{and}}(\bm{x}_L)$, we exchange the order of summation of the set $L\subseteq S\subseteq T$ and the set $S \supseteq L$. Given a set of input variables $L$, we compute all linear combinations of all sets $S$ containing $L$ with respect to the model outputs $v_{\text{and}}(\bm{x}_S)$, \textit{i.e.}, $\sum\nolimits_{S: L \subseteq S \subseteq T} (-1)^{|S|-|L|}v_{\text{and}}(\bm{x}_L)$. 
Then, we compute all summations over the set $L\subseteq T$ as $\sum_{ S\subseteq T, S\ne \emptyset }I_S^{\text{\rm AND}}(\bm{x}) =\sum\nolimits_{L \subseteq T}\sum\nolimits_{S: L \subseteq S \subseteq T} (-1)^{|S|-|L|}v_{\text{and}}(\bm{x}_L)$. Then, we can compute different cases of $L\subseteq S\subseteq T$ as follows:

(1) When $L=T=S$, $\sum\nolimits_{S: L \subseteq S \subseteq T} (-1)^{|S|-|L|}v_{\text{and}}(\bm{x}_L)=(-1)^{|T|-|T|} v_{\text{and}}(\bm{x}_L) = v_{\text{and}}(\bm{x}_L)$.

(2) When $L\subseteq S\subseteq T, L\ne T$, let us consider the linear combinations of all sets $S$ with number $|S|$ for the model output $v_{\text{and}}(\bm{x}_L)$, respectively. Let $m := |S| - |L|$, ($0\le m\le |T|-|L|$),  then there are a total of $C_{|T|-|L|}^{m}$ combinations of all sets $S$ of order $|S|$. Given $L$, accumulating the model outputs $v_{\text{and}}(\bm{x}_L)$ corresponding to all $S\supseteq L$, we can get $\sum\nolimits_{S: L \subseteq S \subseteq T} (-1)^{|S|-|L|}v_{\text{and}}(\bm{x}_L) = v_{\text{and}}(\bm{x}_L) \cdot \underbrace{\sum\nolimits_{m=0}^{\vert T \vert - \vert L \vert} C_{|T|-|L|}^m(-1)^m}_{=0} = 0$.

Considering all the cases, the complete derivation of the sum of AND interactions is as follows.

{\begin{equation}\begin{aligned}
    \label{eq:universal-and}
    &\sum\nolimits_{ S\subseteq T, S\ne \emptyset } I_S^{\text{\rm AND}} \\
    = &  \sum\nolimits_{ S\subseteq T, S\ne \emptyset } \sum\nolimits_{L \subseteq S} (-1)^{\vert S \vert - \vert L \vert} v_{\text{and}}(\bm{x}_L) \\
    = & \sum\nolimits_{L \subseteq T} \sum\nolimits_{S: L \subseteq S \subseteq T} (-1)^{\vert S \vert - \vert L \vert} v_{\text{and}}(\bm{x}_L)  - v_{\text{and}}(\bm{x}_\emptyset) \\
    = & \underbrace{v_{\text{and}}(\bm{x}_T)}_{L = T} + \sum\nolimits_{L \subseteq T, L \neq T} v_{\text{and}}(\bm{x}_L) \cdot \underbrace{\sum\nolimits_{m=0}^{\vert T \vert - \vert L \vert} C_{|T|-|L|}^m(-1)^m}_{=0}   - v_{\text{and}}(\bm{x}_\emptyset) \\
     = & v_{\text{and}}(\bm{x}_T)  - v(\bm{x}_\emptyset)
\end{aligned}\end{equation}}

Therefore, we have proven that $\forall \emptyset \neq T\subseteq N, v_{\text{\rm and}}(\bm{x}_T)=v(\bm{x}_\emptyset) + \sum_{ S\subseteq T, S\ne \emptyset }I_S^{\text{\rm AND}}$.

\textbf{(2) Universal matching theorem of OR interactions.} What we need to prove is that $\forall T\subseteq N, v_{\text{\rm or}}(\boldsymbol{x}_T)=  \sum\nolimits_{S\in \{S:S\cap T \neq \emptyset\}\cup \{\emptyset\}}I^{\text{\rm OR}}_S=\sum\nolimits_{S:S\cap T \neq \emptyset}I^{\text{\rm OR}}_S$. Here $I^{\text{\rm OR}}_{\emptyset} = v_{\text{or}}(\boldsymbol{x}_{\emptyset}) = 0$.

According to the definition of the OR interaction, $I^{\text{\rm OR}}_S :=  -\sum\nolimits_{L \subseteq S}(-1)^{|S|-|L|}v_{\text{or}}(\boldsymbol{x}_{N\setminus L})$. 
To simplify the computation of the sum of OR interactions $\sum\nolimits_{S:S \cap T \neq \emptyset} I^{\text{\rm OR}}_S = \sum\nolimits_{S:S \cap T \neq \emptyset} \left[- \sum\nolimits_{L \subseteq S} (-1)^{\vert S \vert - \vert L \vert} v_{\text{or}}(\boldsymbol{x}_{N \setminus L}) \right]$, we also exchange the order of summation of the set $L\subseteq S\subseteq N$ and the set $S:S\cap T=\emptyset$. Given a set of input variables $L$, we compute all linear combinations of all sets $S$ containing $L$ with respect to the model outputs $v_{\text{or}}(\boldsymbol{x}_{N \setminus L})$ , \textit{i.e.}, $\sum\nolimits_{S:S \cap T \neq \emptyset,  N \supseteq S \supseteq L} (-1)^{\vert S \vert - \vert L \vert} v_{\text{or}}(\boldsymbol{x}_{N \setminus L})$. 
Then, we compute all summations over the set $L\subseteq N$ as $\sum\nolimits_{S:S \cap T \neq \emptyset} I^{\text{\rm OR}}_S = -\sum\nolimits_{L\subseteq N}\sum\nolimits_{S:S \cap T \neq \emptyset,  N \supseteq S \supseteq L} (-1)^{\vert S \vert - \vert L \vert} v_{\text{or}}(\boldsymbol{x}_{N \setminus L})$. Then, we can compute different cases of $L\subseteq S\subseteq N, S \cap T \neq \emptyset$ as follows:

(1) When $L=N$ (then $S=N$), $\sum\nolimits_{S:S \cap T \neq \emptyset, S \supseteq L} (-1)^{\vert S \vert - \vert L \vert} v_{\text{or}}(\boldsymbol{x}_{N \setminus L})= (-1)^{\vert N \vert - \vert N \vert} v_{\text{or}}(\boldsymbol{x}_{\emptyset}) = v_{\text{or}}(\boldsymbol{x}_{\emptyset}) = 0$,  Here $I^{\text{\rm OR}}_{\emptyset} = v_{\text{or}}(\boldsymbol{x}_{\emptyset}) = 0$.

(2) When $L = N \setminus T$, for all sets $S:S\supseteq L, S \cap T \neq \emptyset$ (then $S \neq N \setminus T, S \neq L$), let us consider the linear combinations of all sets $S$ with number $|S|$ for the model output $v_{\text{or}}(\boldsymbol{x}_{T})$, respectively. Let $|S'| := |S| - |L|$, ($1\le |S'|\le |T|$), then there are a total of $C_{|T|}^{|S'|}$ combinations of all sets $S$ of order $|S|$. Thus, $\sum\nolimits_{S:S \cap T \neq \emptyset, S \supseteq L} (-1)^{\vert S \vert - \vert L \vert} v_{\text{or}}(\boldsymbol{x}_{N \setminus L}) = v_{\text{or}}(\boldsymbol{x}_{T}) \cdot \underbrace{\sum\nolimits_{|S'|=1}^{\vert T \vert } C_{|T|}^{|S'|}(-1)^{|S'|}}_{=-1} = -v_{\text{or}}(\boldsymbol{x}_{T})$.

(3) When $L \cap T \neq \emptyset, L \neq N$, for all sets $S:S\supseteq L, S \cap T \neq \emptyset$, let us consider the linear combinations of all sets $S$ with number $|S|$ for the model output $v_{\text{or}}(\boldsymbol{x}_{T})$, respectively. Let us split $|S| - |L|$ into $|S'|$ and $|S''|$, \textit{i.e.},$|S| - |L| = |S'| + |S''|$, where $S'=\{i|i\in S, i\notin L, i\in N\setminus T\}$, $S''=\{i|i\in S, i\notin L, i\in T\}$ (then $0\le|S''|\le|T|-|T\cap L|$) and $S' + S'' + L = S$. Thus, there are a total of $C_{|T|-|T\cap L|}^{|S''|}$ combinations of all sets $S''$ of order $|S''|$. Thus, $\sum\nolimits_{S:S \cap T \neq \emptyset, S \supseteq L} (-1)^{\vert S \vert - \vert L \vert} v_{\text{or}}(\boldsymbol{x}_{N \setminus L}) = v_{\text{or}}(\boldsymbol{x}_{N \setminus L}) \cdot \sum_{S' \subseteq N\setminus T \setminus L}\underbrace{\sum\nolimits_{\vert S'' \vert = 0}^{\vert T \vert-\vert T \cap L \vert} C_{\vert T \vert - \vert T \cap L \vert}^{\vert S''\vert } (-1)^{\vert S' \vert + \vert S'' \vert}  }_{=0} = 0$.

(4) When $L \cap T=\emptyset, L \neq N \setminus T$, let us split $|S| - |L|$ into $|S'|$ and $|S''|$, \textit{i.e.},$|S| - |L| = |S'| + |S''|$, where $S'=\{i|i\in S, i\notin L, i\in N\setminus T\}$, $S''=\{i|i\in S, i\in T\}$ (then $0\le|S''|\le|T|$) and $S' + S'' + L = S$. Thus, there are a total of $C_{|T|}^{|S''|}$ combinations of all sets $S''$ of order $|S''|$. Thus, $\sum\nolimits_{S:S \cap T \neq \emptyset, S \supseteq L} (-1)^{\vert S \vert - \vert L \vert} v_{\text{or}}(\boldsymbol{x}_{N \setminus L}) = v_{\text{or}}(\boldsymbol{x}_{N \setminus L}) \cdot \sum_{S' \subseteq N\setminus T \setminus L}\underbrace{\sum\nolimits_{\vert S'' \vert = 0}^{\vert T \vert} C_{\vert T \vert }^{\vert S''\vert } (-1)^{\vert S' \vert + \vert S'' \vert}  }_{=0} = 0$.

Considering all the cases, the complete derivation of the sum of OR interactions is as follows.

\begin{equation}
\begin{aligned}
\sum\nolimits_{S:S \cap T \neq \emptyset} I^{\text{\rm OR}}_S
        &= \sum\nolimits_{S:S \cap T \neq \emptyset} \left[- \sum\nolimits_{L \subseteq S} (-1)^{\vert S \vert - \vert L \vert} v_{\text{or}}(\boldsymbol{x}_{N \setminus L}) \right]\\
        &= - \sum\nolimits_{L \subseteq N} \sum\nolimits_{S:S \cap T \neq \emptyset, N \supseteq S \supseteq L} (-1)^{\vert S \vert - \vert L \vert} v_{\text{or}}(\boldsymbol{x}_{N \setminus L}) \\
        &=  - \left[\sum_{\vert S' \vert = 1}^{\vert T \vert} C_{\vert T \vert}^{\vert S' \vert} (-1)^{\vert S' \vert} \right] \cdot \underbrace{v_{\text{or}}(\boldsymbol{x}_T)}_{L=N\setminus T} - \underbrace{v_{\text{or}}(\boldsymbol{x}_{\emptyset})}_{L=N} \\
        &\quad- \sum_{L \cap T \neq \emptyset, L \neq N} \left[\sum_{S' \subseteq N\setminus T \setminus L} \left( \sum_{\vert S'' \vert = 0}^{\vert T \vert-\vert T \cap L \vert} C_{\vert T \vert - \vert T \cap L \vert}^{\vert S''\vert } (-1)^{\vert S' \vert + \vert S'' \vert} \right) \right]\cdot v_{\text{or}}(\boldsymbol{x}_{N \setminus L})  \\
        &\quad- \sum_{L \cap T=\emptyset, L \neq N \setminus T} \left[ \sum_{S' \subseteq N\setminus T \setminus L} \left( \sum_{\vert S'' \vert=0}^{\vert T \vert} C_{\vert T \vert}^{\vert S'' \vert} (-1)^{\vert S' \vert + \vert S'' \vert}\right) \right] \cdot v_{\text{or}}(\boldsymbol{x}_{N \setminus L})  \\
        &=  - (-1) \cdot v_{\text{or}}(\boldsymbol{x}_T) - v_{\text{or}}(\boldsymbol{x}_{\emptyset}) - \sum_{L \cap T \neq \emptyset, L \neq N} \left[\sum_{S' \subseteq N\setminus T \setminus L} 0 \right]\cdot v_{\text{or}}(\boldsymbol{x}_{N \setminus L})  \\
        &\quad- \sum_{L \cap T=\emptyset, L \neq N \setminus T}\left[\sum_{S' \subseteq N\setminus T \setminus L} 0 \right] \cdot v_{\text{or}}(\boldsymbol{x}_{N \setminus L})  \\
        &= v_{\text{or}}(\boldsymbol{x}_T)- v_{\text{or}}(\boldsymbol{x}_{\emptyset})\\
        &= v_{\text{or}}(\boldsymbol{x}_T)
\end{aligned}
\end{equation}

Therefore, we have proven that $\forall T\subseteq N, v_{\text{\rm or}}(\bm{x}_T)=\sum\nolimits_{S:S \cap T \neq \emptyset} I^{\text{\rm OR}}_S$.

\textbf{(3) Universal matching theorem of AND-OR interactions.} With the universal matching property of AND interactions and the universal matching property of OR interactions, we can easily get $v(\boldsymbol{x}_T) =\phi (\boldsymbol{x}_T)= v_{ \text{\rm and}}(\boldsymbol{x}_T) + v_{\text{\rm or}}(\boldsymbol{x}_T) = v(\boldsymbol{x}_\emptyset) +  \sum_{S\subseteq T, S\ne \emptyset}I^{\text{\rm AND}}_S + \sum_{S\subseteq N, S\cap T \neq \emptyset}I^{\text{\rm OR}}_S$, thus, we obtain the universal matching property of AND-OR interactions.
\end{proof}

\subsection{Proof of Sparsity Property}
Given all the masked samples $\{\boldsymbol{x}_T\mid T\subseteq N\}$, the surrogate logical model $\phi(\boldsymbol{x}_T)$ only utilizes a small set of salient AND interactions in $\Omega^{\text{\rm AND}}$ and salient OR interactions in $\Omega^{\text{\rm OR}}$ to approximate the network output score $v(\boldsymbol{x}_T)$. That is, the network's output can be well approximated by a small set of AND-OR interactions.
\begin{equation} 
v(\boldsymbol{x}_T) \!=\! \phi(\boldsymbol{x}_T) \! \approx \!  v(\boldsymbol{x}_\emptyset) + \sum\nolimits_{S\subseteq T, S\ne \emptyset,S\in \Omega_\text{AND}} I^{\text{\rm AND}}_S + \sum\nolimits_{S\subseteq T, S\ne \emptyset,S\in \Omega_\text{OR}} I^{\text{\rm OR}}_S
\end{equation}

\begin{proof}

It has been proven by \citet{ren2024where} that under three common conditions\footnote{Here are the three conditions: (1) The DNN doesn't encode extremely high-order AND interactions. (2) The DNN performs effectively on masked samples and exhibits greater confidence as the input sample is less masked. (3) When we increase the number of masked input variables, the confidence of the DNN does not drop significantly.}, the output score $v_{ \text{\rm and}}(\boldsymbol{x}_T)$ of a well-trained DNN on all $2^n$ masked samples $\{\boldsymbol{x}_T|T\subseteq N\}$ could be universally estimated by a small number of AND interactions $T\in \Omega^{\text{\rm AND}}$ with salient interaction effects $I^{\text{\rm AND}}_S$, \emph{s.t.}, $|\Omega^{\text{\rm AND}}|\ll 2^n$, \textit{i.e.}, $\forall T\subseteq N, v_{\text{\rm and}}(\boldsymbol{x}_T)=\sum_{S\subseteq T, S\ne \emptyset}I^{\text{\rm AND}}_S\approx \sum_{S\subseteq T, S\ne \emptyset,S\in \Omega^{\text{\rm AND}}}I^{\text{\rm AND}}_S$. According to Eq.~(\ref{eq:universal-and}), $v_{\text{\rm and}}(\boldsymbol{x}_T)= v(\boldsymbol{x}_\emptyset)  +  \sum\nolimits_{S\subseteq T, S\ne \emptyset} I^{\text{\rm AND}}_S$. Therefore, $v_{\text{\rm and}}(\boldsymbol{x}_T) \approx v(\boldsymbol{x}_\emptyset) + \sum_{S\subseteq T, S\ne \emptyset,S\in \Omega^{\text{\rm AND}}}I^{\text{\rm AND}}_S$. 

Besides, as proven in Section~\ref{proof:or-special-and}, the OR interaction can be considered as a special AND interaction. Thus, the confidence score $v_{ \text{\rm or}}(\boldsymbol{x}_T)$ of a well-trained DNN on all $2^n$ masked samples $\{\boldsymbol{x}_T|T\subseteq N\}$ could be universally estimated by a small number of OR interactions $T\in \Omega^{\text{\rm OR}}$ with salient interaction effects $I^{\text{\rm OR}}_S$, \emph{s.t.}, $|\Omega^{\text{\rm OR}}|\ll 2^n$. Similarly, $v_{\text{\rm or}}(\boldsymbol{x}_T)= \sum\nolimits_{S\subseteq T, S\ne \emptyset} I^{\text{\rm OR}}_S\approx \sum_{S\subseteq T, S\ne \emptyset,S\in \Omega^{\text{\rm OR}}}I^{\text{\rm OR}}_S$

Thus, for each randomly masked sample $\boldsymbol{x}_T, T\subseteq N$, the surrogate logical model $\phi(\boldsymbol{x}_T)$ can use a small number of salient AND-OR interactions to approximate the network output score $v(\boldsymbol{x}_T)$, \textit{i.e.},  $v(\boldsymbol{x}_T) \!=\! \phi(\boldsymbol{x}_T) = v_{\text{and}}(\boldsymbol{x}_T)+ v_{\text{or}}(\boldsymbol{x}_T) \! \approx (\boldsymbol{x}_\emptyset) + \sum\nolimits_{S\subseteq T, S\ne \emptyset,S\in \Omega_\text{AND}} I^{\text{\rm AND}}_S + \sum\nolimits_{S\subseteq T, S\ne \emptyset,S\in \Omega_\text{OR}} I^{\text{\rm OR}}_S$. 

\end{proof}

\section{OR Interactions Can Be Considered as Special AND Interactions}\label{sec:3}
\label{proof:or-special-and}
If we reverse the definition of the masked state and the unmasked state of the input variable, the OR interaction $I^{\text{\rm OR}}_S$ can be considered as a special kind of AND interaction $I^{\text{\rm AND}}_S$. 

Given an input sample $\boldsymbol{x}\in \mathbb{R}^n$ and the output score of a DNN as $v(\cdot)$, if we randomly mask input variables in $\boldsymbol{x}$, we can get all $2^n$ masked samples. Let $\boldsymbol{x}_S$ denote the certain masked input sample when input variables in $N\setminus S$ are all masked and input variables in S are kept unchanged.
\begin{equation}\label{appx:eq_masked_state}
   (\boldsymbol{x}_S)_i =\begin{cases}
x_i,& \text{$i\in S$}\\
b_i,& \text{$i\in N\setminus S$}
\end{cases}
\end{equation}
where $\mathbf{b}\in \mathbb{R}^n$ are baseline values to represent the masked state of input variables.

If we reverse the definition of the masked state and the unmasked state of an input variable, \textit{i.e.}, we consider $\mathbf{b}$ as the input sample and consider $\boldsymbol{x}$ as the masked state, then the masked sample $\widetilde{\boldsymbol{x}}_S$ can be defined as follows.
\begin{equation}\label{appx:eq_inversed_masked_state}
   (\widetilde{\boldsymbol{x}}_S)_i =\begin{cases}
b_i,& \text{$i\in S$}\\
x_i,& \text{$i\in N\setminus S$}
\end{cases}
\end{equation}
Thus, we can get $\boldsymbol{x}_{N\setminus S}=\widetilde{\boldsymbol{x}}_S$. To simplify the analysis, let us assume $v_{\text{and}}(\boldsymbol{x}_S) = v_{\text{or}}(\boldsymbol{x}_S) = 0.5v(\boldsymbol{x}_S)$, then the OR interaction $I^{\text{\rm OR}}_S$ can be regarded as a specific AND interaction $I^{\text{AND}}_S(\widetilde{\boldsymbol{x}})$ as follows.
\begin{equation}\label{eq:relationship_and_or_interactions}
\begin{split}
  I^{\text{\rm OR}}_S(\textbf{x}) &=
-\sum\nolimits_{T \subseteq S}(-1)^{|S|-|T|}v_{\text{or}}(\boldsymbol{x}_{N\setminus T}), \\ 
&= -\sum\nolimits_{T \subseteq S}(-1)^{|S|-|T|}v_{\text{or}}(\widetilde{\boldsymbol{x}}_T),        \\
&= -\sum\nolimits_{T \subseteq S}(-1)^{|S|-|T|}v_{\text{and}}(\widetilde{\boldsymbol{x}}_T),          \\
&= -I^{\text{AND}}_S(\widetilde{\boldsymbol{x}}).
\end{split}
\end{equation}
Now we have proven that OR interactions can be considered as special AND interactions.

\section{Details of Extracting the Sparsest AND-OR Interactions}\label{sec:4}
We follow \citet{li2023does} to extract AND-OR interactions. Given a masked sample $\bm{x}_T$, the output score of the network $v(\bm{x}_T)$ can be decomposed into a combination of AND interaction and OR interaction, \textit{i.e.}, $v(\bm{x}_T)=v_\text{and}(\bm{x}_T) + v_\text{or}(\bm{x}_T)$. Specifically, $v_\text{and}(\bm{x}_T)= 0.5  \cdot v(\bm{x}_T)+\gamma_T$ and $v_\text{or}(\bm{x}_T)= 0.5 \cdot v(\bm{x}_T) -\gamma_T$, where $\{\gamma_T\mid T\subseteq N\}$ is a set of learnable parameters. The parameters $\{\gamma_T\}$ were trained through minimizing the following LASSO-like loss to obtain sparse interactions:
\begin{equation}
\label{eq:loss-pq}
    \min_{\{\gamma_T\}} \sum_{S\subseteq N} \vert I^\text{AND}_S(\bm{x}) \vert + \vert I^\text{OR}_S(\bm{x}) \vert,
\end{equation}
where $I^{\text{\rm AND}}_S(\bm{x}) =\sum\nolimits_{T \subseteq S}(-1)^{|S|-|T|}v_{\text{and}}(\boldsymbol{x}_T)=\sum\nolimits_{T \subseteq S}(-1)^{|S|-|T|}(0.5  \cdot v(\bm{x}_T)+\gamma_T)$ and $ I^{\text{\rm OR}}_S(\bm{x}) =  -\sum\nolimits_{T \subseteq S}(-1)^{|S|-|T|}v_{\text{or}}(\boldsymbol{x}_{N\setminus T})=  -\sum\nolimits_{T \subseteq S}(-1)^{|S|-|T|}(0.5 \cdot v(\bm{x}_T) -\gamma_T)$.
Thus, we can extract the sparsest set of AND-OR interactions.

\begin{table}[htbp]
  \caption{A comprehensive list and characteristics of LLMs, grouped by model family and model series.}
  \label{tab:model_list_detailed}
  \setlength{\tabcolsep}{2.5pt} 
  \centering
  \small 
  \begin{tabular}{ll l ccr}
    \toprule
    \textbf{Model Family} & \textbf{Model Series} & \textbf{Model} & \textbf{Type} & \textbf{Architecture} & \textbf{Scale} \\
    \midrule

    \multirow{10}{*}{Llama (10 models)} & \multirow{6}{*}{Llama 2 (6 models)} & Llama-2-7b & Base & Dense & 7B \\
    & & Llama-2-7b-chat & Chat & Dense & 7B \\
    & & Llama-2-13b & Base & Dense & 13B \\
    & & Llama-2-13b-chat & Chat & Dense & 13B \\
    & & Llama-2-70b & Base & Dense & 70B \\
    & & Llama-2-70b-chat & Chat & Dense & 70B \\
    \cmidrule(lr){2-6}
    & \multirow{2}{*}{Llama 3 (2 models)} & Llama-3-8b & Base & Dense & 8B \\
    & & Llama-3-8b-instruct & Instruct & Dense & 8B \\
    \cmidrule(lr){2-6}
    & \multirow{2}{*}{Llama MoE (2 models)} & Llama-moe-v1-3\_5b-2\_8-sft & Instruct & MoE & 3.5B (Activated) \\
    & & Llama-moe-v2-3\_8b-2\_8-sft & Instruct & MoE & 3.8B (Activated) \\
    \midrule

    \multirow{4}{*}{Mistral (4 models)} & \multirow{2}{*}{Mistral (2 models)} & Mistral-7b-v0.3 & Base & Dense & 7B \\
    & & Mistral-7b-v0.3-instruct & Instruct & Dense & 7B \\
    \cmidrule(lr){2-6}
    & \multirow{2}{*}{Mixtral (2 models)} & Mixtral-8x7b & Base & MoE & 13B (Activated) \\
    & & Mixtral-8x7b-instruct & Instruct & MoE & 13B  (Activated) \\
    \midrule
    
    \multirow{25}{*}{Qwen (25 models)} & \multirow{2}{*}{Qwen 1.5 MoE (2 models)} & Qwen1.5-moe-a2.7b-chat & Chat & MoE & 2.7B (Activated)\\
    & & Qwen1.5-moe-a2.7b & Base & MoE & 2.7B (Activated) \\
    \cmidrule(lr){2-6}
    & \multirow{5}{*}{Qwen 2 (5 models)} & Qwen2-7b & Base & Dense & 7B \\
    & & Qwen2-0.5b-instruct & Instruct & Dense & 0.5B \\
    & & Qwen2-1.5b-instruct & Instruct & Dense & 1.5B \\
    & & Qwen2-7b-instruct & Instruct & Dense & 7B \\
    & & Qwen2-72b-instruct & Instruct & Dense & 72B \\
    \cmidrule(lr){2-6}
    & \multirow{7}{*}{Qwen 2.5 (7 models)} & Qwen2.5-0.5b-instruct & Instruct & Dense & 0.5B \\
    & & Qwen2.5-1.5b-instruct & Instruct & Dense & 1.5B \\
    & & Qwen2.5-3b-instruct & Instruct & Dense & 3B \\
    & & Qwen2.5-7b-instruct & Instruct & Dense & 7B \\
    & & Qwen2.5-14b-instruct & Instruct & Dense & 14B \\
    & & Qwen2.5-32b-instruct & Instruct & Dense & 32B \\
    & & Qwen2.5-72b-instruct & Instruct & Dense & 72B \\
    \cmidrule(lr){2-6}
    & \multirow{9}{*}{Qwen 3 (9 models)} & Qwen3-0.6b-instruct & Instruct & Dense & 0.6B \\
    & & Qwen3-1.7b-instruct & Instruct & Dense & 1.7B \\
    & & Qwen3-4b & Base & Dense & 4B \\
    & & Qwen3-4b-instruct & Instruct & Dense & 4B \\
    & & Qwen3-8b & Base & Dense & 8B \\
    & & Qwen3-8b-instruct & Instruct & Dense & 8B \\
    & & Qwen3-14b & Base & Dense & 14B \\
    & & Qwen3-14b-instruct & Instruct & Dense & 14B \\
    & & Qwen3-32b-instruct & Instruct & Dense & 32B \\
    \cmidrule(lr){2-6}
    & \multirow{2}{*}{Qwen 3 A3B (2 models)} & Qwen3-30b-a3b-instruct & Instruct & MoE & 3B (Activated) \\
    & & Qwen3-30b-a3b & Base & MoE & 3B (Activated) \\
    \midrule

    \multirow{9}{*}{Olmo (9 models)} & \multirow{3}{*}{Olmo v1 (3 models)} & Olmo-1b & Base & Dense & 1B \\
    & & Olmo-7b & Base & Dense & 7B \\
    & & Olmo-7b-instruct & Instruct & Dense & 7B \\
    \cmidrule(lr){2-6}
    & \multirow{4}{*}{Olmo v2 (4 models)} & Olmo-2-1b-instruct & Instruct & Dense & 1B \\
    & & Olmo-2-7b-instruct & Instruct & Dense & 7B \\
    & & Olmo-2-13b-instruct & Instruct & Dense & 13B \\
    & & Olmo-2-32b-instruct & Instruct & Dense & 32B \\
    \cmidrule(lr){2-6}
    & \multirow{2}{*}{OlmoE (2 models)} & Olmoe-7b & Base & MoE & 1B (Activated) \\
    & & Olmoe-7b-instruct & Instruct & MoE & 1B (Activated) \\
    \midrule

    \multirow{2}{*}{InternLM (2 models)} & \multirow{2}{*}{InternLM 2 (2 models)} & Internlm2-7b & Base & Dense & 7B \\
    & & Internlm2-chat-7b & Chat & Dense & 7B \\
    \bottomrule
  \end{tabular}
\end{table}

\section{Experimental Details}\label{sec:5}

\subsection{Computing Infrastructure}
We conducted all our experiments on four NVIDIA Tesla V100-DGXS GPUs, each with 32 GB of VRAM. The software environment consisted of NVIDIA Driver version 570.133.07 and CUDA 12.8. 

For all of the evaluated LLMs, we used a torch.float16 data type, which provides a standard level of precision for inference tasks.

\subsection{Model Details}
We conduct experiments on 50 open-source LLMs from 6 major model families. A comprehensive list of all evaluated models is provided in Table~\ref{tab:model_list_detailed}. To facilitate a controlled analysis of the factors influencing prompt sensitivity, we group these models into specific subsets for each comparison, as detailed below.

\noindent (1) \textbf{Instruct/Chat vs. Base Models.} 
To investigate the impact of the alignment process, we form pairs of instruct/chat models and their corresponding base models. This comparison includes models from the Llama, Mistral, Qwen, InternLM, and Olmo families. The main LLMs used for this comparison are: 
\begin{itemize}
    \item \textit{Llama Family:}
    \begin{itemize}
        \item \texttt{llama-2-7b-chat} vs. \texttt{llama-2-7b}
        \item \texttt{llama-2-13b-chat} vs. \texttt{llama-2-13b}
        \item \texttt{llama-2-70b-chat} vs. \texttt{llama-2-70b}
        \item \texttt{llama-3-8b-instruct} vs. \texttt{llama-3-8b}
    \end{itemize}
    \item \textit{Mistral Family:}
    \begin{itemize}
        \item \texttt{mistral-7b-v0.3-instruct} vs. \texttt{mistral-7b-v0.3}
        \item \texttt{mixtral-8x7b-instruct} vs. \texttt{mixtral-8x7b}
    \end{itemize}
    \item \textit{Qwen Family:}
    \begin{itemize}
        \item \texttt{qwen3-4b-instruct} vs. \texttt{qwen3-4b}
        \item \texttt{qwen3-8b-instruct} vs. \texttt{qwen3-8b}
        \item \texttt{qwen3-14b-instruct} vs. \texttt{qwen3-14b}
        \item \texttt{qwen1.5-moe-a2.7b-chat} vs. \texttt{qwen1.5-moe-a2.7b}
        \item \texttt{qwen3-30b-a3b-instruct} vs. \texttt{qwen3-30b-a3b}
    \end{itemize}
    \item \textit{Olmo Family:}
    \begin{itemize}
        \item \texttt{olmo-7b-instruct} vs. \texttt{olmo-7b}
        \item \texttt{olmoe-7b-instruct} vs. \texttt{olmoe-7b}
    \end{itemize}
    \item \textit{InternLM Family:}
    \begin{itemize}
        \item \texttt{internlm2-chat-7b} vs. \texttt{internlm2-7b}
    \end{itemize}
\end{itemize}

\noindent (2) \textbf{Dense vs. MoE Models.}
To analyze the effect of architecture, we compare dense and Mixture-of-Experts (MoE) models, primarily within the same model family to control for other variables. The main LLMs used for this comparison are:
\begin{itemize}
    \item \textit{Llama Family:}
    \begin{itemize}
        \item Dense: \texttt{llama-2-7b}, \texttt{llama-2-7b-chat}, \texttt{llama-2-13b}, \texttt{llama-2-13b-chat}, \texttt{llama-2-70b}, \texttt{llama-2-70b-chat}, \texttt{llama-3-8b}, \texttt{llama-3-8b-instruct}.
        \item MoE: \texttt{llama-moe-v1-3\_5b-2\_8-sft}, \texttt{llama-moe-v2-3\_8b-2\_8-sft}.
    \end{itemize}
    \item \textit{Mistral Family:}
    \begin{itemize}
        \item Dense: \texttt{mistral-7b-v0.3}, \texttt{mistral-7b-v0.3-instruct}.
        \item MoE: \texttt{mixtral-8x7b}, \texttt{mixtral-8x7b-instruct}.
    \end{itemize}
    \item \textit{Qwen Family:}
    \begin{itemize}
        \item Dense: \texttt{qwen2-7b}, \texttt{qwen2-7b-instruct}, \texttt{qwen2-72b-instruct}, \texttt{qwen2.5-7b-instruct}, \texttt{qwen2.5-14b-instruct}, \texttt{qwen2.5-32b-instruct}, \texttt{qwen2.5-72b-instruct}, \texttt{qwen3-4b}, \texttt{qwen3-4b-instruct}, \texttt{qwen3-8b}, \texttt{qwen3-8b-instruct}, \texttt{qwen3-14b}, \texttt{qwen3-14b-instruct}, \texttt{qwen3-32b-instruct}.
        \item MoE: \texttt{qwen1.5-moe-a2.7b}, \texttt{qwen1.5-moe-a2.7b-chat}, \texttt{qwen3-30b-a3b}, \texttt{qwen3-30b-a3b-instruct}.
    \end{itemize}
    \item \textit{Olmo Family:}
    \begin{itemize}
        \item Dense: \texttt{olmo-7b}, \texttt{olmo-7b-instruct}, \texttt{olmo-2-7b-instruct}, \texttt{olmo-2-13b-instruct}, \texttt{olmo-2-32b-instruct}.
        \item MoE: \texttt{olmoe-7b}, \texttt{olmoe-7b-instruct}.
    \end{itemize}
\end{itemize}

\noindent (3) \textbf{Model Scale.}
To study the impact of model scale, we analyze a series of LLMs from the same family and with the same training paradigm but with varying parameter counts. The main LLMs used for this comparison are:
    \begin{itemize}
    \item \textit{Llama-2 (Base):} \texttt{7b}, \texttt{13b}, \texttt{70b}.
    \item \textit{Llama-2 (Chat):} \texttt{7b}, \texttt{13b}, \texttt{70b}.
    \item \textit{Qwen2 (Instruct):} \texttt{0.5b}, \texttt{1.5b}, \texttt{7b}, \texttt{72b}.
    \item \textit{Qwen2.5 (Instruct):} \texttt{0.5b}, \texttt{1.5b}, \texttt{3b}, \texttt{7b}, \texttt{14b}, \texttt{32b}, \texttt{72b}.
    \item \textit{Qwen3 (Instruct):} \texttt{0.6b}, \texttt{1.7b}, \texttt{4b}, \texttt{8b}, \texttt{14b}, \texttt{32b}.
    \item \textit{Olmo-2 (Instruct):} \texttt{1b}, \texttt{7b}, \texttt{13b}, \texttt{32b}.
\end{itemize}

\subsection{Generation Configuration of LLMs}
To ensure reproducible results, we employ a greedy search strategy for all LLMs. This is achieved by setting the ``\textit{do\_sample}'' parameter to ``\textit{False}'' in our generation configuration.
When ``\textit{do\_sample=False}'', the LLM selects the token with the highest probability as the next token in the sequence. By adopting this greedy approach, we eliminate the randomness inherent in sampling-based methods. The configuration ensures that for a given input, the same LLM will generate the exact same output every time, which is a critical requirement for the replicability of our experiments.

\subsection{How to Mask Input Words For Different LLMs}
To compute interactions, we follow the approach of \citet{cheng2025revisiting} and mask the words in $N \setminus S$ by replacing them with a LLM-specific {\small$\text{[MASK]}$} token. Our selection of this token follows a prioritized strategy:
(1) We preferentially use the LLM's designated unknown (\texttt{<unk>}) token.
(2) If an unknown token is not available or suitable, we use the padding (\texttt{<pad>}) token as a fallback.
Since the specific token strings and their corresponding IDs vary across different LLMs, the exact mask token used for each LLM is detailed below:

\begin{itemize}
    \item For \texttt{llama-2-7b}, \texttt{llama-2-7b-chat}, \texttt{llama-moe-v1-3.5b-2.8-sft}, \texttt{mistral-7b-v0.3}, \texttt{mistral-7b-v0.3-instruct}, \texttt{mixtral-8x7b}, \texttt{mixtral-8x7b-instruct}, \texttt{internlm2-7b}, \texttt{internlm2-chat-7b}, we use the \texttt{<unk>} token (ID: \texttt{0}) to mask words.

    \item For \texttt{llama-2-13b}, \texttt{llama-2-13b-chat}, \texttt{llama-2-70b}, \texttt{llama-2-70b-chat}, we use the \texttt{<|pad\_token|>} token (ID: \texttt{0}) to mask words.

    \item For \texttt{llama-3-8b}, \texttt{llama-3-8b-instruct}, we use the \texttt{<|pad\_token|>/<|reserved\_special\_token\_250|>} token (ID: \texttt{128255}) to mask words.
    
    \item For \texttt{llama-moe-v2-3.8b-2.8-sft}, we use the \texttt{<|pad\_token|>/<|eot\_id|>} token (ID: \texttt{128009}) to mask words.
    
    \item For \texttt{qwen2-7b}, we use the \texttt{<|PAD\_TOKEN|>} token (ID: \texttt{151646}) to mask words.
    
    \item For a large group of Qwen models, including \texttt{qwen2-0.5b-instruct}, \texttt{qwen2-1.5b-instruct}, \texttt{qwen2-7b-instruct}, \texttt{qwen2-72b-instruct}, \texttt{qwen2.5} series, \texttt{qwen3-0.6b-instruct}, \texttt{qwen3-1.7b-instruct}, \texttt{qwen3-4b} series, \texttt{qwen3-32b-instruct}, \texttt{qwen1.5-moe} series, and \texttt{qwen3-30b-a3b} series, we use the \texttt{<|pad\_token|>/<|endoftext|>} token (ID: \texttt{151643}) to mask words.
    
    \item For \texttt{qwen3-8b}, \texttt{qwen3-8b-instruct}, \texttt{qwen3-14b}, \texttt{qwen3-14b-instruct}, we use the \texttt{<|pad\_token|>/<|vision\_pad|>} token (ID: \texttt{151654}) to mask words.

    \item For the Olmo V1 series models, including \texttt{olmo-1b}, \texttt{olmo-7b}, \texttt{olmo-7b-instruct}, \texttt{olmoe-7b}, and \texttt{olmoe-7b-instruct}, we use the \texttt{<|padding|>} token (ID: \texttt{1}) to mask words.
    
    \item For the Olmo V2 series models, including \texttt{olmo-2-1b-instruct}, \texttt{olmo-2-7b-instruct}, \texttt{olmo-2-13b-instruct}, and \texttt{olmo-2-32b-instruct}, we use the \texttt{<|pad\_token|>/<|endoftext|>} token (ID: \texttt{100257}) to mask words.
\end{itemize}
 
\subsection{Prompt Templates}

\begin{figure*}[t]
\centering
\includegraphics[width=0.96\textwidth]{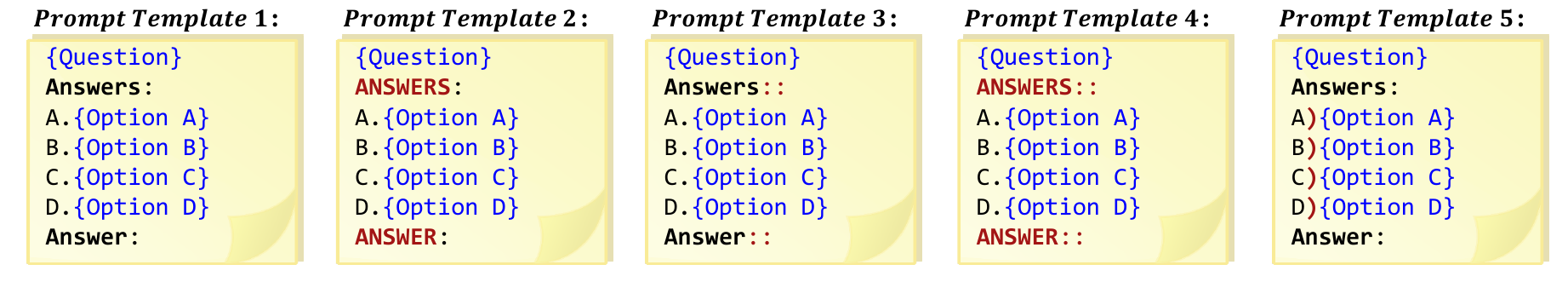}
\caption{Different prompt templates. Red parts show the difference between the current prompt template with the first prompt template, \textit{i.e.}, Prompt Template 1.}
\label{figure:prompt-templates}
\end{figure*}

To systematically evaluate the prompt sensitivity of LLMs, we designed a set of five distinct prompt templates. As illustrated in Figure~\ref{figure:prompt-templates}, these templates are derived from a base prompt template (\text{i.e.}, Prompt Template 1) through a series of subtle, semantically irrelevant modifications. These variations include changes in letter case, \text{e.g.}, ``Answers'' vs. ``ANSWERS'' and alterations to separators, \text{e.g.}, ``:'' vs. ``::'' or the format of option markers, \text{e.g.}, ``A.'' vs. ``A)''. Crucially, these changes only affect the superficial formatting while preserving the core semantic meaning of the prompt template.

In our experimental procedure, for a given input, which consists of a question and options, we apply each of the five prompt templates to generate five prompts. For every unique pair of these five prompts, we then calculate the prompt sensitivity by quantifying the change in the interactions among the input variables (\textit{i.e.}, words within the question and options). This procedure allows us to precisely measure how much the LLM's interaction patterns of the core input are perturbed by superficial changes in the prompt template, thus evaluating the prompt sensitivity of LLMs.

\begin{figure*}[t]
\centering
\includegraphics[width=0.92\textwidth]{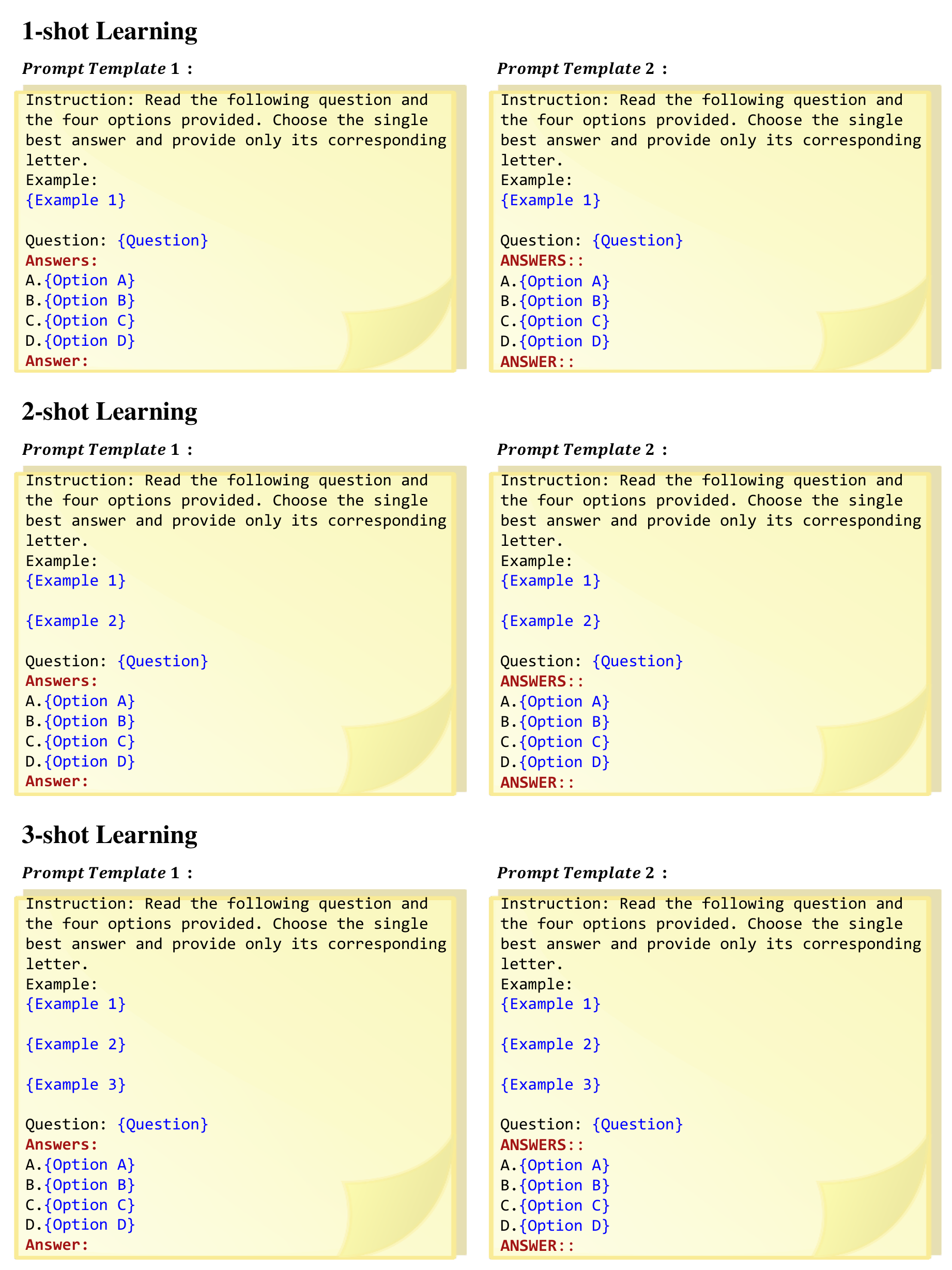}
\caption{Prompt templates of few-shot learning.}
\label{figure:fsl-templates}
\end{figure*}

\subsection{Few-shot Learning Templates}
For this experiment, we selected the pair of prompt templates that exhibited the highest average prompt sensitivity in the 0-shot setting, aiming to test if few-shot learning could help the most severe situation. To investigate whether few-shot learning can mitigate high prompt sensitivity, we conducted a follow-up experiment. We selected \textbf{Prompt Template 1} and \textbf{Prompt Template 4} from Figure~\ref{figure:prompt-templates} for this analysis, as this pair exhibited the highest average prompt sensitivity in our 0-shot setting. This allowed us to test the efficacy of few-shot learning in the most challenging scenario.

Based on these two base templates, we constructed few-shot learning prompts with one, two, and three in-context examples (\textit{i.e.}, 1-shot, 2-shot, and 3-shot learning), as illustrated in Figure~\ref{figure:fsl-templates}. The examples were formulated using certain questions and their corresponding answers, randomly selected from a set of datasets that are not included in the test set. The structure of each example is related to its corresponding prompt template. For instance, the first example (\texttt{Example 1}) is formatted differently for each template:
\begin{itemize}
    \item \textbf{Example 1 For Prompt Template 1:}
\begin{lstlisting}
Question: Which type of precipitation consists of frozen rain drops?
Answers:
A.sleet
B.hail
C.snow
D.fog
Answer: A
\end{lstlisting}

    \item \textbf{Example 1 For Prompt Template 4 (Note the different format):}
\begin{lstlisting}
Question: Which type of precipitation consists of frozen rain drops?
ANSWERS::
A.sleet
B.hail
C.snow
D.fog
ANSWER:: A
\end{lstlisting}
\end{itemize}

The other two examples (\texttt{Example 2} and \texttt{Example 3}) are presented below:
\begin{itemize}
    \item \textbf{Example 2 For Prompt Template 1:}
\begin{lstlisting}
Question: Decayed prehistoric plants have helped in the formation of
Answers:
A.coal, shale, and quartz.
B.coal, oil, and gas.
C.shale, quartz, and coal.
D.oil, shale, and granite. 
Answer: B
\end{lstlisting}

    \item \textbf{Example 2 For Prompt Template 4:}
\begin{lstlisting}
Question: Decayed prehistoric plants have helped in the formation of
ANSWERS::
A.coal, shale, and quartz.
B.coal, oil, and gas.
C.shale, quartz, and coal.
D.oil, shale, and granite. 
ANSWER:: B
\end{lstlisting}

    \item \textbf{Example 3 For Prompt Template 1:}
\begin{lstlisting}
Question: Which describes a material that is not a food?
Answers:
A.It stores energy but not nutrients.
B.It does not store energy or nutrients.
C.It stores energy and nutrients.
D.It does not store energy but stores nutrients. 
Answer: B
\end{lstlisting}

    \item \textbf{Example 3 For Prompt Template 4:}
\begin{lstlisting}
Question: Which describes a material that is not a food?
ANSWERS::
A.It stores energy but not nutrients.
B.It does not store energy or nutrients.
C.It stores energy and nutrients.
D.It does not store energy but stores nutrients. 
ANSWER:: B
\end{lstlisting}

\end{itemize}

\clearpage
\section{More Experimental Results}\label{sec:6}

\subsection{More Results on the Verification of the Sparsity of Interactions}
Here are more results on the verification of the sparsity of interactions. As illustrated in Figure~\ref{fig:sparsity_all}, the results verify that only a small set of interactions have salient effects, while most of the interactions have negligible effects and can be considered as noise patterns.

\begin{figure}[!htb]
\centering
\includegraphics[width=1\columnwidth]{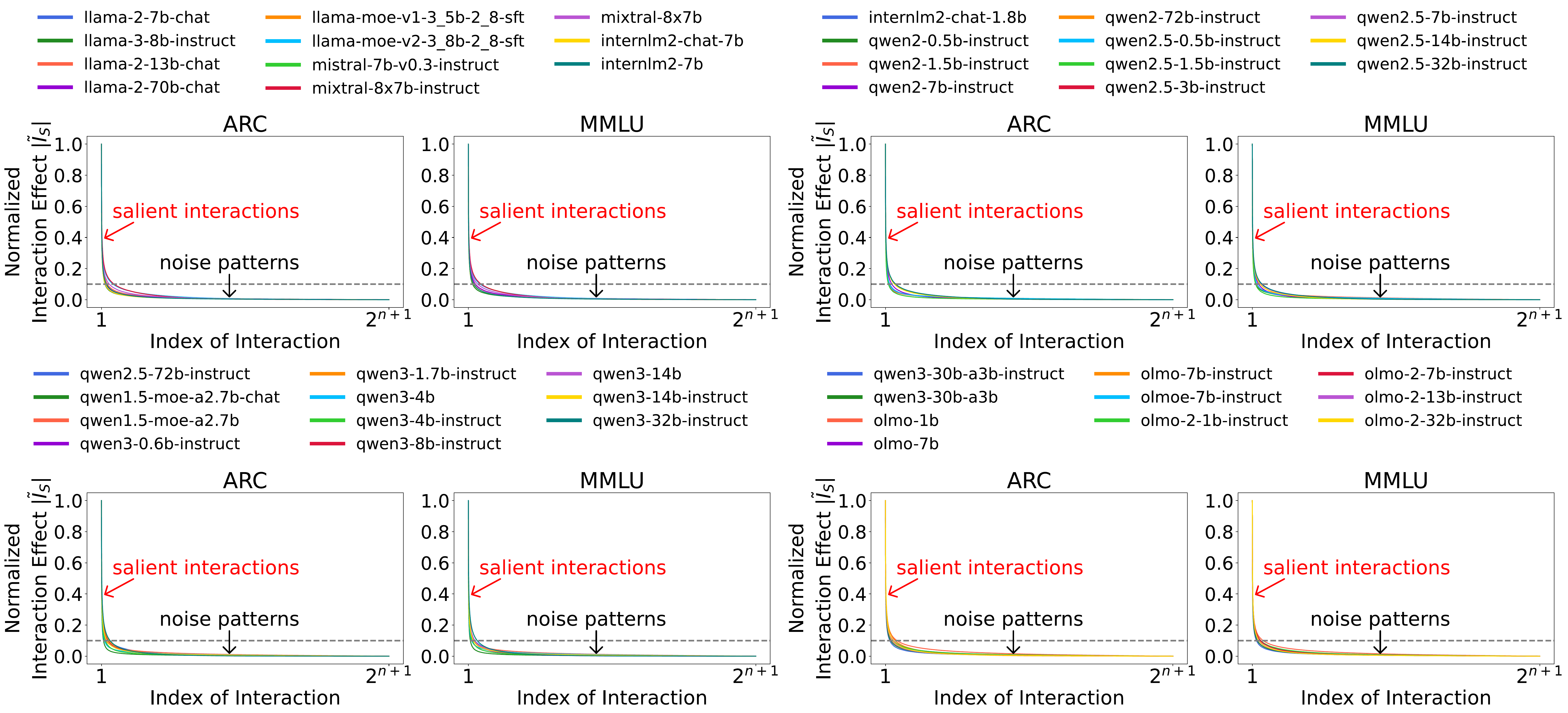}
\caption{Verifying the sparsity of interactions. We show absolute values of normalized interactions in a descending order. LLMs all encode a small number of salient interactions, while most of the interaction effects are negligible.}
\label{fig:sparsity_all}
\end{figure}

\subsection{More Results on the Verification of the Sparsity of Interactions}
Here are more results on the verification of quality of universal matching. Figure~\ref{fig:mimic} compares the LLM's true output {\small$v(\boldsymbol{x}_T)$} for all masked inputs against the logical model using only the most salient interactions. Even when using just the top 3\% or top 5\% of all interactions, the matching error is minimal. This empirically demonstrates that the LLM's output can be faithfully approximated by a small, sparse set of salient interactions.

\begin{figure}[!htb]
\centering
\includegraphics[width=1\columnwidth]{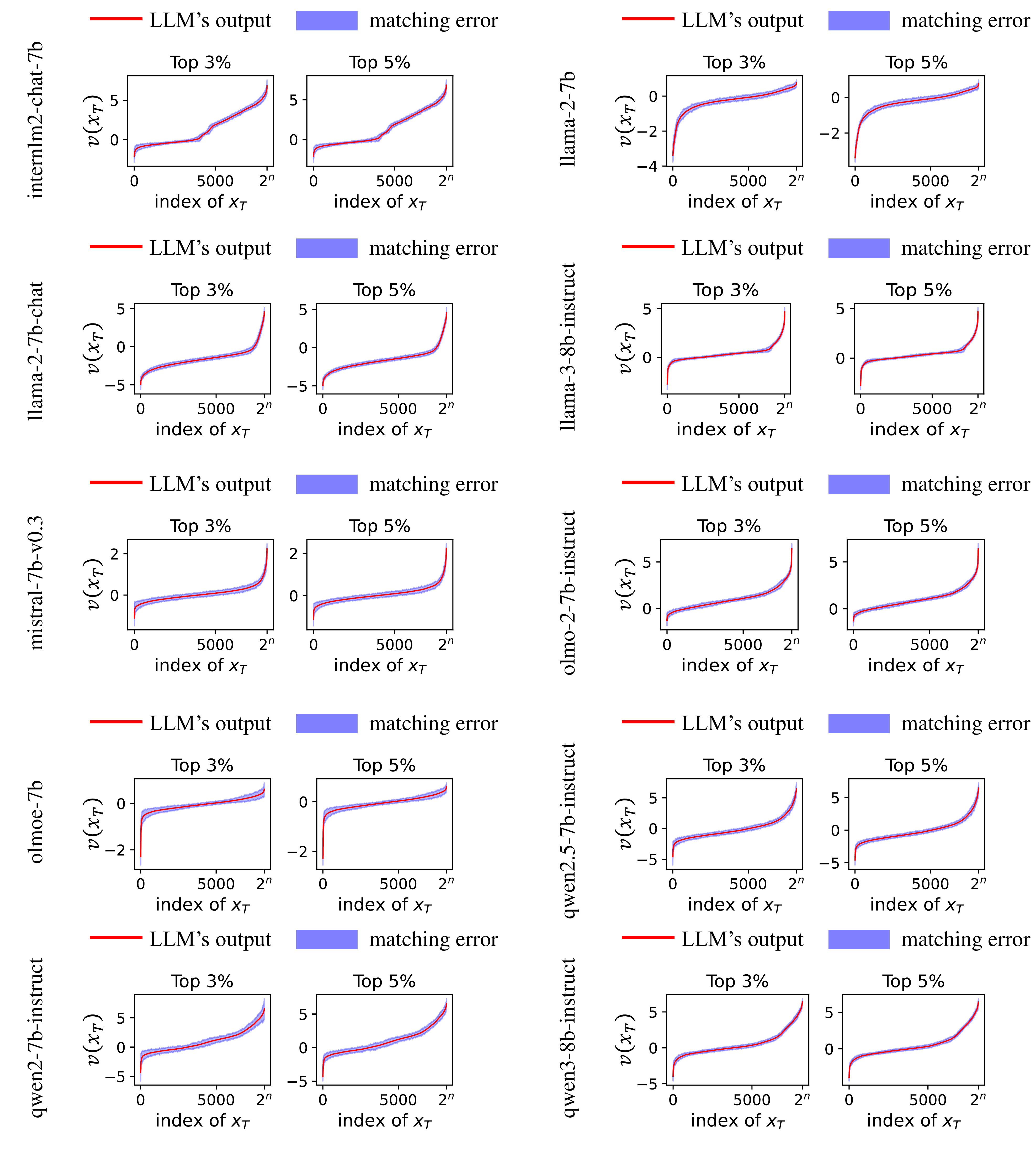}
\caption{Verifying the quality of universal matching for any {\small$2^n$} masked inputs. The red line plots outputs of the LLM in an ascending order.}
\label{fig:mimic}
\end{figure}

\clearpage
\subsection{Detailed Case Study}
Figure~\ref{fig:case_study_detail_appendix} is the detailed case study of how to use our interaction-based analytical tool. It offers preliminary evidence that semantically irrelevant alterations to the prompt template can lead to significant changes in the salient interaction patterns, even when the input and output remains unchanged. This reveals the existence of unstable interactions, which we propose as the underlying cause of prompt sensitivity.

\begin{figure}[!htb]
\centering
\includegraphics[width=1\columnwidth]{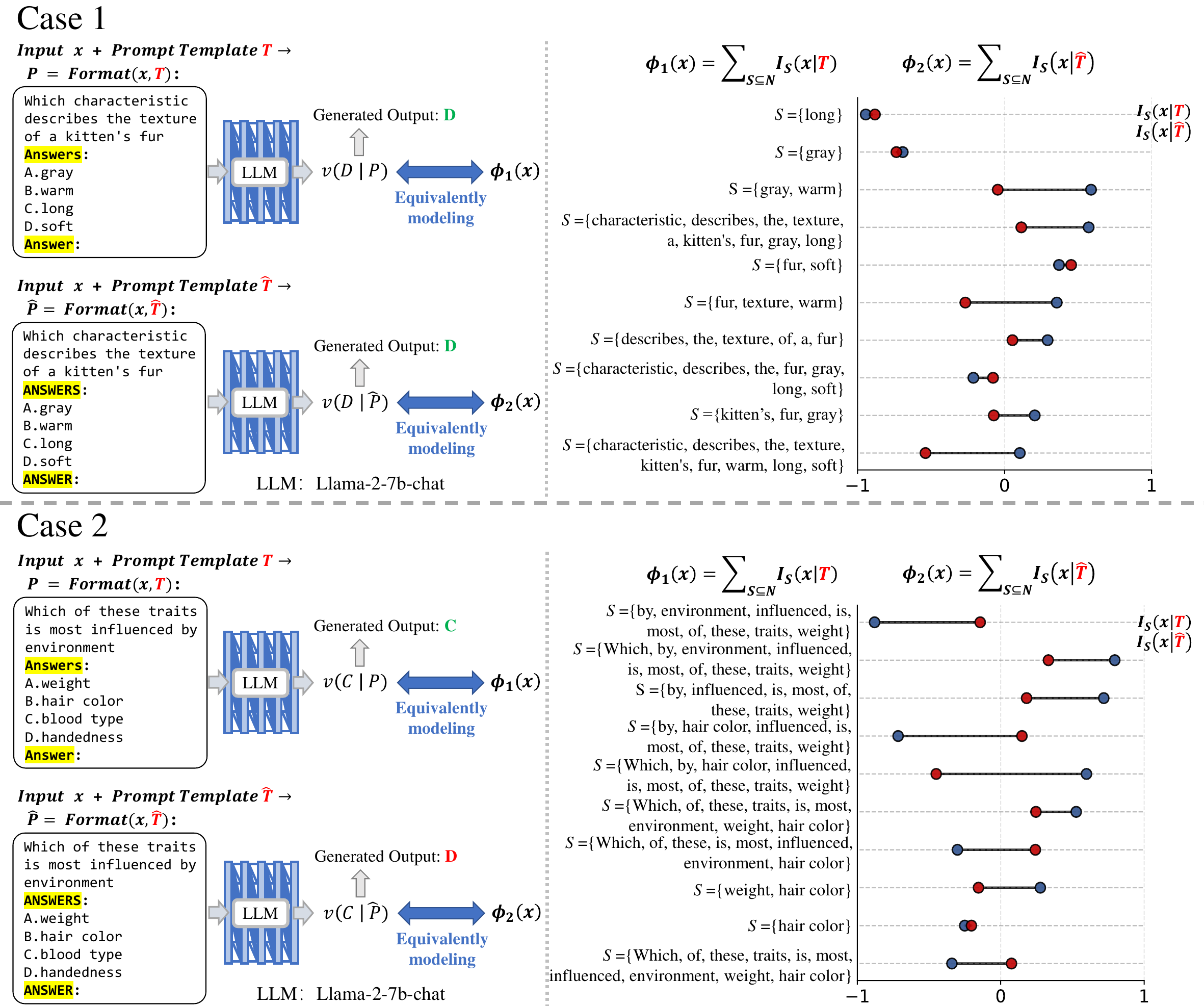}
\caption{A case study of interaction-level analysis revealing latent instability. The same input {\small$\boldsymbol{x}$} is formatted with two semantically identical templates, {\small$T$} and {\small$\hat{T}$}, differing only in letter case (e.g.,``Answer'' vs. ``ANSWER''). Although the LLM generates the same correct output (``D'') in both cases, the composition of the interaction-based logical model {\small$\phi(\boldsymbol{x})$} reveals significant internal divergence. Many interaction effects are highly unstable, changing in either sign or magnitude. This highlights a critical risk of prompt sensitivity that is invisible to output-level}
\label{fig:case_study_detail_appendix}
\end{figure}

\clearpage
\subsection{More Results on the Prompt Sensitivity of Different Orders}
Here are more results on the prompt sensitivity of different orders on the ARC dataset. As illustrated in Figure~\ref{fig:order_all_models_combined}, it shows that the prompt sensitivity of low-order interactions is the lowest, followed by mid-order, while high-order interactions exhibit the highest prompt sensitivity. This indicates that low-order interactions encoded by LLMs are highly stable when faced with subtle changes to prompt templates, \textit{i.e.}, simple interaction patterns are more robust.
Conversely, the high sensitivity of high-order interactions reveals that the LLMs' internal representation of complex patterns is highly unstable.

\begin{figure}[!htb]
\centering
\includegraphics[width=0.96\textwidth]{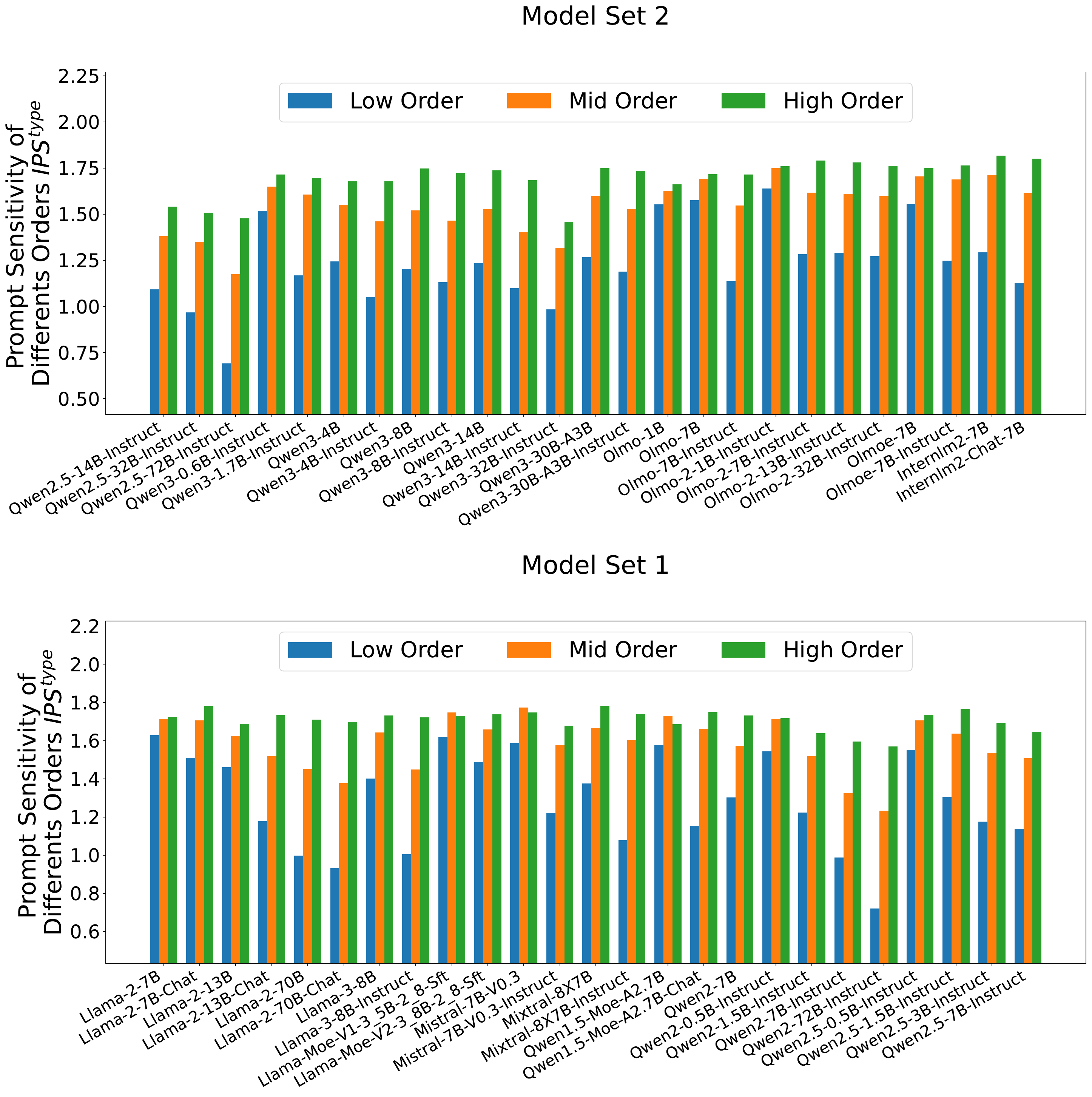}
\caption{A comparison of the prompt sensitivity of three order types. Results show that low-order interactions are the least sensitive, while high-order interactions are the most sensitive.}
\label{fig:order_all_models_combined}
\end{figure}

\clearpage
\subsection{More Results on Relative Change in the Prompt Sensitivity of Low-, Mid-, and High-Order Interactions for Different Factors.}

\begin{figure}[!htb]
\centering
\includegraphics[width=1\textwidth]{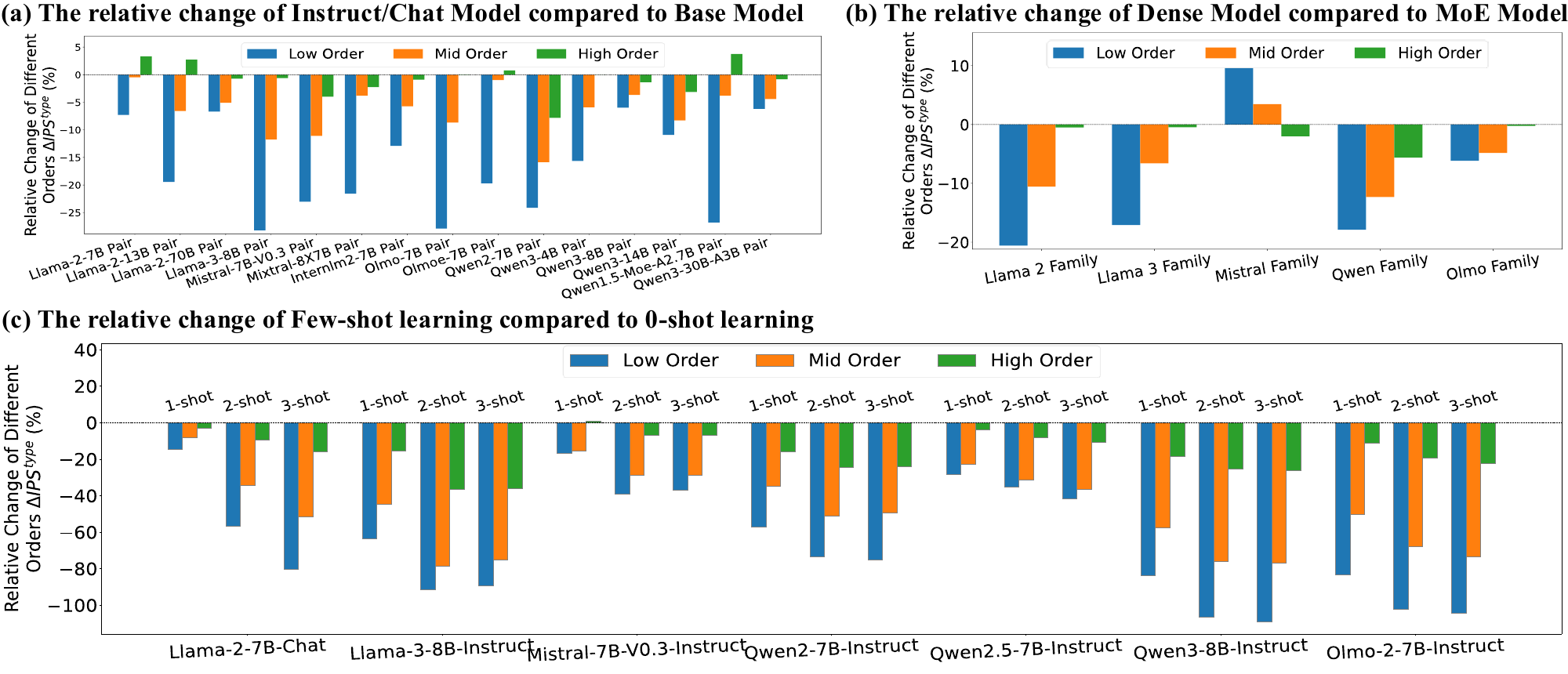}
\caption{Comparing the relative change in the prompt sensitivity of low-, mid-, and high-order interactions for different factors.}
\label{fig:order_analysis_0}
\end{figure}

\subsection{More Results on the Prompt Sensitivity of Different Order Types across Different Model Scales.}
\begin{figure}[!htb]
\centering
\includegraphics[width=1\textwidth]{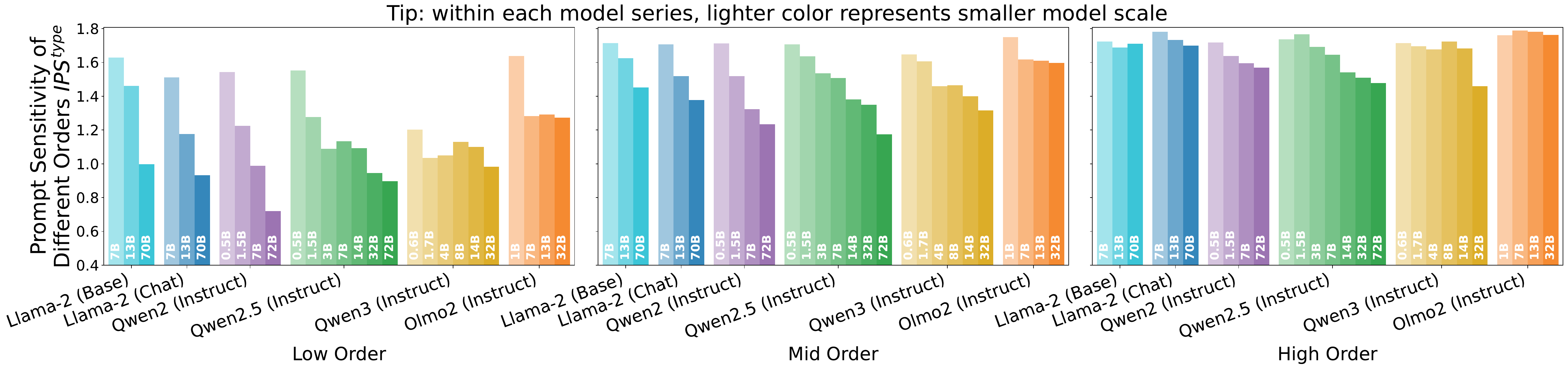}
\caption{A comparison of prompt sensitivity of different order types across different model scales.}
\label{fig:model_size_order_0.1}
\end{figure}

\subsection{More Results on the Prompt Sensitivity of Different Orders for Each Individual LLM when Applying Few-Shot Learning}

\begin{figure}[!htb]
\centering
\includegraphics[width=1\textwidth]{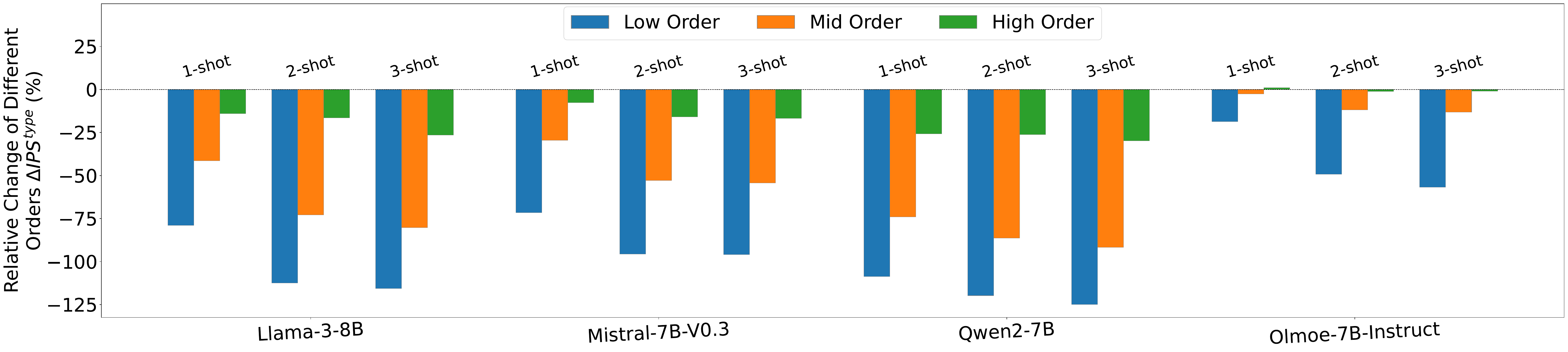}
\caption{A comparison of prompt sensitivity of low-, mid-, and high-order interactions between 0-shot learning and few-shot learning. Prompt sensitivity at all three order levels shows an clear drop when applying few-shot learning.}
\label{fig:fsl_order_extra}
\end{figure}

\clearpage
\subsection{Hyperparameter Experiments of the threshold $\tau$}\label{app:threshold}

To rigorously evaluate the robustness of our Interaction-based Prompt Sensitivity (IPS) metric, we conducted a hyperparameter sweep on the threshold $\tau$.

\subsubsection{Detailed Model Rankings under Varying Thresholds}\label{app:threshold_experiemnt}
We aggregated the ranking and scoring consistency across all 16 thresholds using five metrics. As shown in Table~\ref{tab:threshold_metrics}, the high correlation coefficients and low error rates demonstrate that the IPS metric is highly robust to the choice of $\tau$.

\begin{table}[h]
    \centering
    \caption{Summary of consistency metrics across 16 different $\tau$ thresholds (0.05--0.20). The high values in correlation metrics and low RMSE indicate that the relative ranking of model sensitivity remains stable regardless of the specific threshold used.}
    \label{tab:threshold_metrics}
    \renewcommand{\arraystretch}{1.3}
    \begin{tabular}{l c p{8cm}}
        \toprule
        \textbf{Metric} & \textbf{Value} & \textbf{Interpretation} \\
        \midrule
        \textbf{Spearman's $\rho$} & 0.9905 & \textbf{Rank Correlation:} Measures the average similarity of the overall ranking trends. A value close to 1.0 indicates near-perfect monotonic consistency. \\
        \textbf{Pearson's $r$} & 0.9957 & \textbf{Linearity:} Measures the linear correlation of the raw IPS scores, indicating that the scale of sensitivity shifts linearly across thresholds. \\
        \textbf{Kendall's $\tau$} & 0.9465 & \textbf{Pairwise Consistency:} Indicates the probability that any pair of models maintains their relative order (better/worse) across different thresholds. \\
        \textbf{RMSE} & 1.72 & \textbf{Ranking Stability:} On average, a model's rank fluctuates by only $\pm 1.72$ positions across different threshold settings. \\
        \textbf{Top-10 Overlap} & 96.7\% & \textbf{SOTA Stability:} The set of the top-10 most stable models remains 96.7\% identical, ensuring reliable identification of the best-performing models. \\
        \bottomrule
    \end{tabular}
\end{table}

To provide a granular view of robustness, Table~\ref{tab:full_tau_data_10} details the IPS scores across 10 distinct thresholds ranging from $\tau=0.05$ to $\tau=0.20$. Models are sorted based on their stability at the baseline threshold $\tau=0.05$. The data reveals that while absolute scores fluctuate, the relative ranking of model stability remains highly consistent.

\setlength{\tabcolsep}{5pt} 
\begin{longtable}{@{}l cccccccccc@{}}
\caption{Detailed IPS scores for 50 LLMs across 10 different thresholds. The consistency in color gradients (implied by values) across rows confirms the robustness of the metric.} \label{tab:full_tau_data_10} \\
\toprule
 & \multicolumn{10}{c}{\textbf{IPS Score ($\downarrow$) at Threshold $\tau$}} \\
\cmidrule(l){2-11}
\textbf{Model Name} & \textbf{0.05} & \textbf{0.06} & \textbf{0.07} & \textbf{0.08} & \textbf{0.09} & \textbf{0.10} & \textbf{0.12} & \textbf{0.15} & \textbf{0.18} & \textbf{0.20} \\
\midrule
\endfirsthead

\multicolumn{11}{c}%
{{\bfseries \tablename\ \thetable{} -- continued from previous page}} \\
\toprule
 & \multicolumn{10}{c}{\textbf{IPS Score ($\downarrow$) at Threshold $\tau$}} \\
\cmidrule(l){2-11}
\textbf{Model Name} & \textbf{0.05} & \textbf{0.06} & \textbf{0.07} & \textbf{0.08} & \textbf{0.09} & \textbf{0.10} & \textbf{0.12} & \textbf{0.15} & \textbf{0.18} & \textbf{0.20} \\
\midrule
\endhead

\bottomrule
\multicolumn{11}{r}{{Continued on next page}} \\
\endfoot

\bottomrule
\endlastfoot

Qwen2.5-72B-Instruct & 1.328 & 1.312 & 1.297 & 1.286 & 1.276 & 1.268 & 1.255 & 1.240 & 1.228 & 1.222 \\
Qwen2-72B-Instruct & 1.362 & 1.346 & 1.333 & 1.322 & 1.312 & 1.304 & 1.290 & 1.278 & 1.271 & 1.265 \\
Qwen2-7B-Instruct & 1.411 & 1.400 & 1.391 & 1.383 & 1.376 & 1.368 & 1.357 & 1.341 & 1.327 & 1.318 \\
Qwen3-32B-Instruct & 1.424 & 1.419 & 1.415 & 1.411 & 1.408 & 1.406 & 1.403 & 1.401 & 1.396 & 1.395 \\
Qwen3-14B-Instruct & 1.443 & 1.438 & 1.434 & 1.429 & 1.427 & 1.425 & 1.419 & 1.415 & 1.412 & 1.407 \\
Llama-2-70B-Chat & 1.450 & 1.446 & 1.441 & 1.438 & 1.436 & 1.433 & 1.428 & 1.420 & 1.408 & 1.398 \\
Qwen2.5-32B-Instruct & 1.463 & 1.454 & 1.446 & 1.440 & 1.434 & 1.430 & 1.423 & 1.415 & 1.411 & 1.409 \\
Llama-3-8B-Instruct & 1.486 & 1.483 & 1.480 & 1.477 & 1.475 & 1.473 & 1.467 & 1.460 & 1.447 & 1.436 \\
Qwen2.5-14B-Instruct & 1.490 & 1.481 & 1.473 & 1.465 & 1.459 & 1.454 & 1.446 & 1.436 & 1.430 & 1.425 \\
Qwen3-4B-Instruct & 1.496 & 1.493 & 1.490 & 1.488 & 1.485 & 1.484 & 1.480 & 1.473 & 1.470 & 1.464 \\
Qwen2-1.5B-Instruct & 1.497 & 1.494 & 1.492 & 1.487 & 1.483 & 1.477 & 1.465 & 1.447 & 1.429 & 1.416 \\
Llama-2-70B & 1.505 & 1.504 & 1.504 & 1.505 & 1.505 & 1.507 & 1.509 & 1.516 & 1.523 & 1.527 \\
Qwen3-8B-Instruct & 1.515 & 1.511 & 1.508 & 1.505 & 1.502 & 1.499 & 1.494 & 1.491 & 1.491 & 1.490 \\
Qwen3-8B & 1.553 & 1.554 & 1.554 & 1.554 & 1.554 & 1.555 & 1.557 & 1.558 & 1.557 & 1.555 \\
Qwen3-4B & 1.555 & 1.562 & 1.566 & 1.570 & 1.573 & 1.575 & 1.578 & 1.579 & 1.574 & 1.572 \\
Llama-2-13B-Chat & 1.559 & 1.555 & 1.551 & 1.547 & 1.544 & 1.541 & 1.535 & 1.528 & 1.520 & 1.514 \\
Qwen3-14B & 1.559 & 1.558 & 1.557 & 1.556 & 1.556 & 1.555 & 1.553 & 1.552 & 1.549 & 1.548 \\
Qwen2-7B & 1.570 & 1.574 & 1.576 & 1.577 & 1.578 & 1.578 & 1.579 & 1.580 & 1.574 & 1.569 \\
Qwen3-30B-A3B-Instruct & 1.575 & 1.575 & 1.576 & 1.577 & 1.579 & 1.580 & 1.582 & 1.583 & 1.583 & 1.582 \\
Qwen2.5-7B-Instruct & 1.580 & 1.575 & 1.572 & 1.569 & 1.567 & 1.565 & 1.564 & 1.562 & 1.563 & 1.563 \\
Llama-2-13B & 1.592 & 1.602 & 1.610 & 1.617 & 1.623 & 1.628 & 1.636 & 1.645 & 1.653 & 1.655 \\
Olmo-1B & 1.601 & 1.610 & 1.617 & 1.623 & 1.628 & 1.632 & 1.639 & 1.647 & 1.652 & 1.655 \\
Olmo-7B-Instruct & 1.604 & 1.599 & 1.595 & 1.592 & 1.589 & 1.587 & 1.583 & 1.579 & 1.578 & 1.576 \\
Llama-3-8B & 1.618 & 1.628 & 1.637 & 1.644 & 1.650 & 1.655 & 1.664 & 1.674 & 1.679 & 1.682 \\
Qwen2.5-3B-Instruct & 1.622 & 1.618 & 1.615 & 1.611 & 1.608 & 1.606 & 1.601 & 1.595 & 1.590 & 1.585 \\
Olmo-2-13B-Instruct & 1.625 & 1.627 & 1.628 & 1.629 & 1.630 & 1.631 & 1.632 & 1.633 & 1.632 & 1.631 \\
Olmo-2-7B-Instruct & 1.626 & 1.627 & 1.627 & 1.628 & 1.629 & 1.629 & 1.629 & 1.627 & 1.628 & 1.626 \\
Olmo-2-32B-Instruct & 1.626 & 1.626 & 1.625 & 1.624 & 1.623 & 1.623 & 1.622 & 1.620 & 1.619 & 1.618 \\
Qwen3-30B-A3B & 1.629 & 1.631 & 1.634 & 1.637 & 1.639 & 1.642 & 1.646 & 1.651 & 1.655 & 1.656 \\
Mistral-7B-v0.3-Instruct & 1.630 & 1.629 & 1.629 & 1.629 & 1.629 & 1.631 & 1.633 & 1.638 & 1.641 & 1.644 \\
InternLM2-Chat-7B & 1.634 & 1.639 & 1.641 & 1.644 & 1.647 & 1.650 & 1.656 & 1.660 & 1.664 & 1.667 \\
Qwen2-0.5B-Instruct & 1.639 & 1.650 & 1.658 & 1.663 & 1.666 & 1.669 & 1.669 & 1.665 & 1.658 & 1.651 \\
Mixtral-8x7B-Instruct & 1.641 & 1.638 & 1.636 & 1.633 & 1.632 & 1.631 & 1.629 & 1.628 & 1.629 & 1.630 \\
Qwen1.5-MoE-A2.7B & 1.645 & 1.658 & 1.668 & 1.675 & 1.682 & 1.687 & 1.696 & 1.705 & 1.712 & 1.717 \\
Qwen1.5-MoE-A2.7B-Chat & 1.648 & 1.654 & 1.657 & 1.660 & 1.663 & 1.665 & 1.667 & 1.668 & 1.664 & 1.660 \\
Qwen3-1.7B-Instruct & 1.648 & 1.645 & 1.642 & 1.641 & 1.640 & 1.639 & 1.639 & 1.637 & 1.638 & 1.637 \\
Qwen2.5-0.5B-Instruct & 1.652 & 1.663 & 1.671 & 1.677 & 1.681 & 1.684 & 1.688 & 1.689 & 1.687 & 1.685 \\
Qwen2.5-1.5B-Instruct & 1.658 & 1.663 & 1.667 & 1.671 & 1.674 & 1.676 & 1.679 & 1.684 & 1.686 & 1.689 \\
Llama-MoE-v2-3\_8B-2\_8-SFT & 1.665 & 1.669 & 1.673 & 1.676 & 1.678 & 1.681 & 1.685 & 1.691 & 1.696 & 1.698 \\
Llama-2-7B-Chat & 1.670 & 1.677 & 1.683 & 1.687 & 1.690 & 1.691 & 1.692 & 1.693 & 1.691 & 1.689 \\
Olmo-7B & 1.671 & 1.679 & 1.685 & 1.690 & 1.695 & 1.699 & 1.706 & 1.714 & 1.718 & 1.721 \\
Mixtral-8x7B & 1.677 & 1.682 & 1.686 & 1.689 & 1.692 & 1.695 & 1.701 & 1.709 & 1.715 & 1.720 \\
Olmoe-7B & 1.680 & 1.689 & 1.696 & 1.702 & 1.707 & 1.711 & 1.720 & 1.733 & 1.742 & 1.746 \\
Qwen3-0.6B-Instruct & 1.682 & 1.678 & 1.675 & 1.673 & 1.671 & 1.668 & 1.665 & 1.659 & 1.654 & 1.653 \\
Llama-2-7B & 1.682 & 1.692 & 1.699 & 1.705 & 1.711 & 1.716 & 1.724 & 1.733 & 1.740 & 1.745 \\
Olmo-2-1B-Instruct & 1.698 & 1.710 & 1.719 & 1.727 & 1.734 & 1.739 & 1.747 & 1.756 & 1.760 & 1.760 \\
InternLM2-7B & 1.706 & 1.711 & 1.716 & 1.720 & 1.724 & 1.728 & 1.734 & 1.743 & 1.751 & 1.758 \\
Llama-MoE-v1-3\_5B-2\_8-SFT & 1.715 & 1.720 & 1.724 & 1.728 & 1.732 & 1.735 & 1.743 & 1.753 & 1.761 & 1.765 \\
Olmoe-7B-Instruct & 1.716 & 1.717 & 1.719 & 1.721 & 1.722 & 1.723 & 1.724 & 1.726 & 1.730 & 1.732 \\
Mistral-7B-v0.3 & 1.720 & 1.729 & 1.737 & 1.743 & 1.748 & 1.752 & 1.760 & 1.768 & 1.775 & 1.779 \\
\end{longtable}

\subsubsection{Verifying the generalizability of the methods and conclusions on different threshold $\tau$.} \label{app:threshold_experiemnt2}
In Sections~\ref{sec:4.2} and \ref{sec:4.3}, we set the threshold $\tau$ to 0.1 to distinguish salient interactions from noise. 
This threshold directly influences the proportion of interactions classified as salient interactions. A higher $\tau$ value usually generates a smaller set of salient interactions with more significant effects. 
\citet{li2023does} conducted experiments which show that conclusions are not sensitive to the choice of $\tau$.
Our choice of $\tau$ is guided by the empirical sparsity of interactions. 
Figure~\ref{fig:sparsity} (a) shows a sharp ``elbow" in the distribution of interaction effects, clearly separating a small set of high-magnitude salient interactions from a long tail of near-zero noise interactions. A threshold $\tau$ chosen from the range of 0.05 to 0.15 effectively captures this salient set, satisfying the sparsity assumption without being overly restrictive.
To ensure the robustness of our findings, we conducted hyperparameter experiments with $\tau=0.05$ and $\tau=0.15$. Our main conclusions remain consistent across different threshold values.

\clearpage
\textbf{Here are the results for $\tau=0.15$.}

\begin{figure}[!htb]
\centering
\begin{minipage}{0.48\columnwidth}
    \centering
    \includegraphics[width=\linewidth]{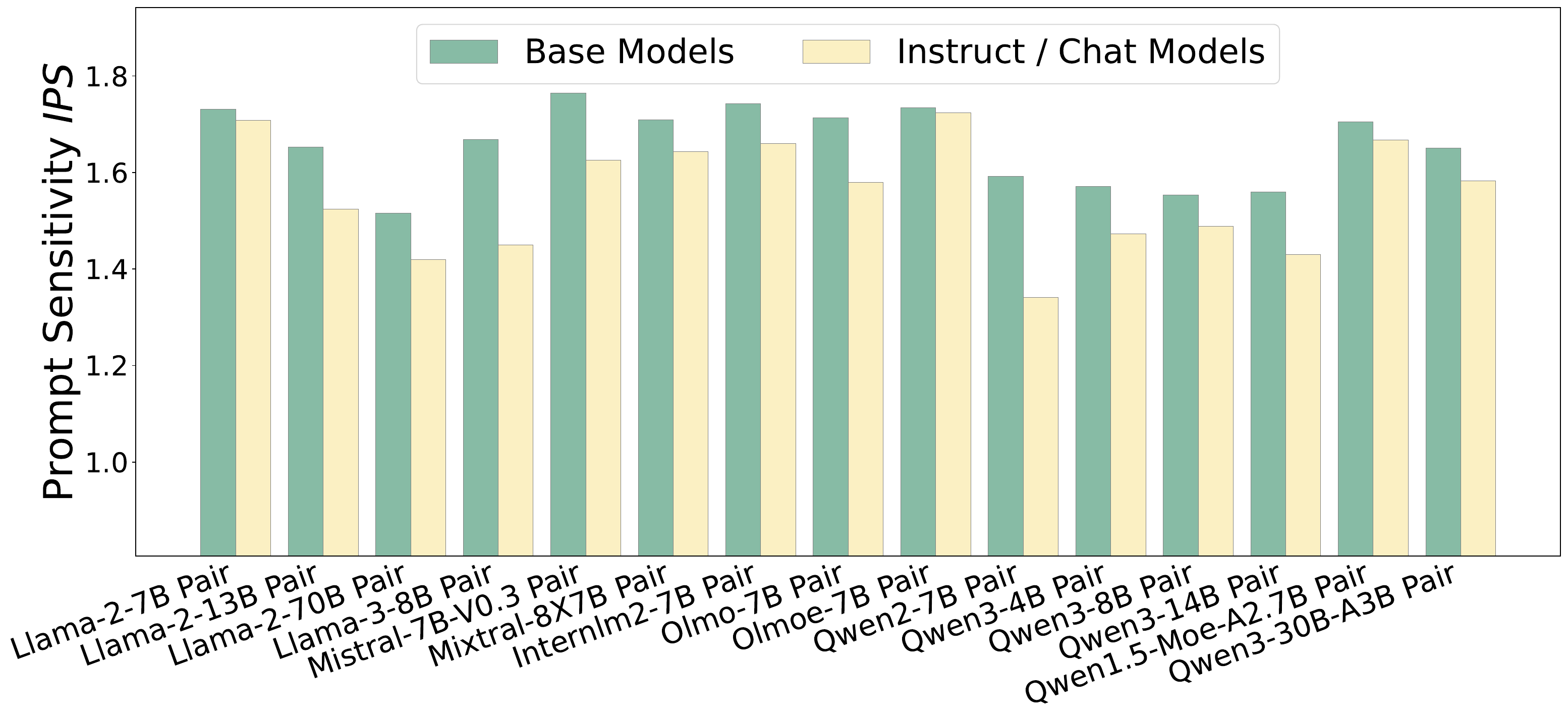}
    
    \caption{A comparison of the prompt sensitivity between instruct/chat models and base models. Results show that instruct/chat models are less sensitive than corresponding base models.}
    \label{fig:instruct_base_0.15}
    
\end{minipage}\hfill 
\begin{minipage}{0.48\columnwidth}
    \centering
    \includegraphics[width=\linewidth]{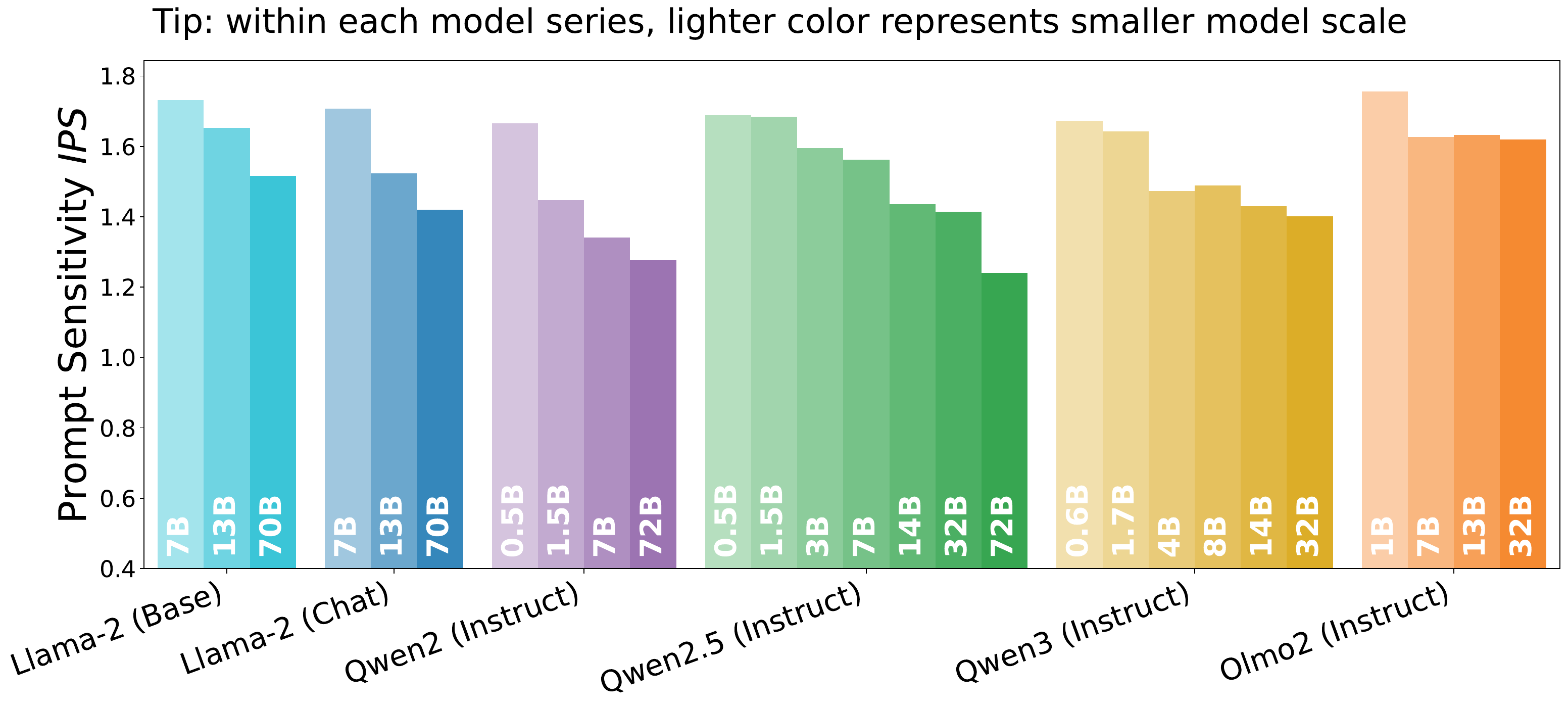}
    
    \caption{A comparison of prompt sensitivity across different model scales. As the model scale increases, the prompt sensitivity within a model series systematically decreases.}
    \label{fig:model_size_0.15}
    
\end{minipage}
\end{figure}

\begin{figure}[!htb]
\centering
\begin{minipage}{0.48\columnwidth}
    \centering
    \includegraphics[width=\linewidth]{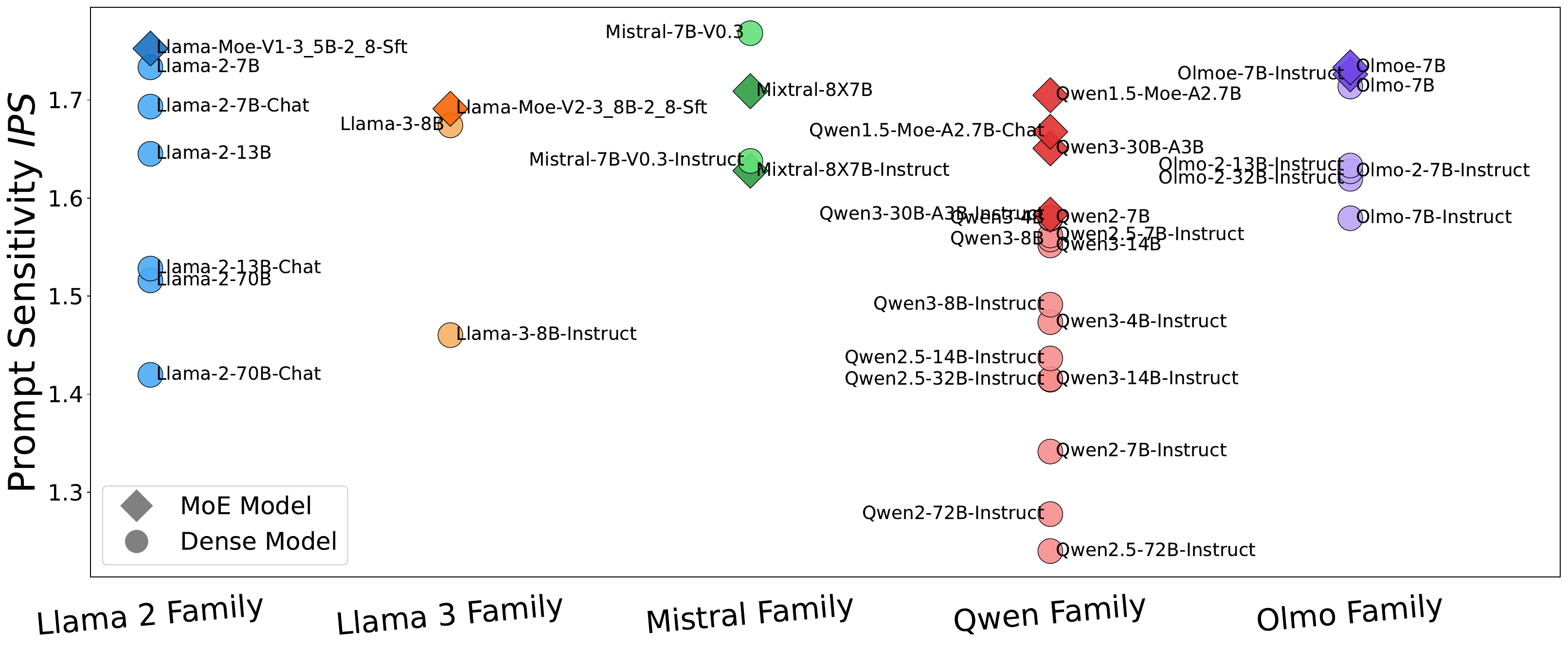}
    
    \caption{A Comparison of prompt sensitivity between MoE models and dense models. Generally, MoE models tend to be more sensitive than dense models in the same model family.}
    \label{fig:dense_moe_0.15}
    
\end{minipage}\hfill 
\begin{minipage}{0.48\columnwidth}
    \centering
    \includegraphics[width=\linewidth]{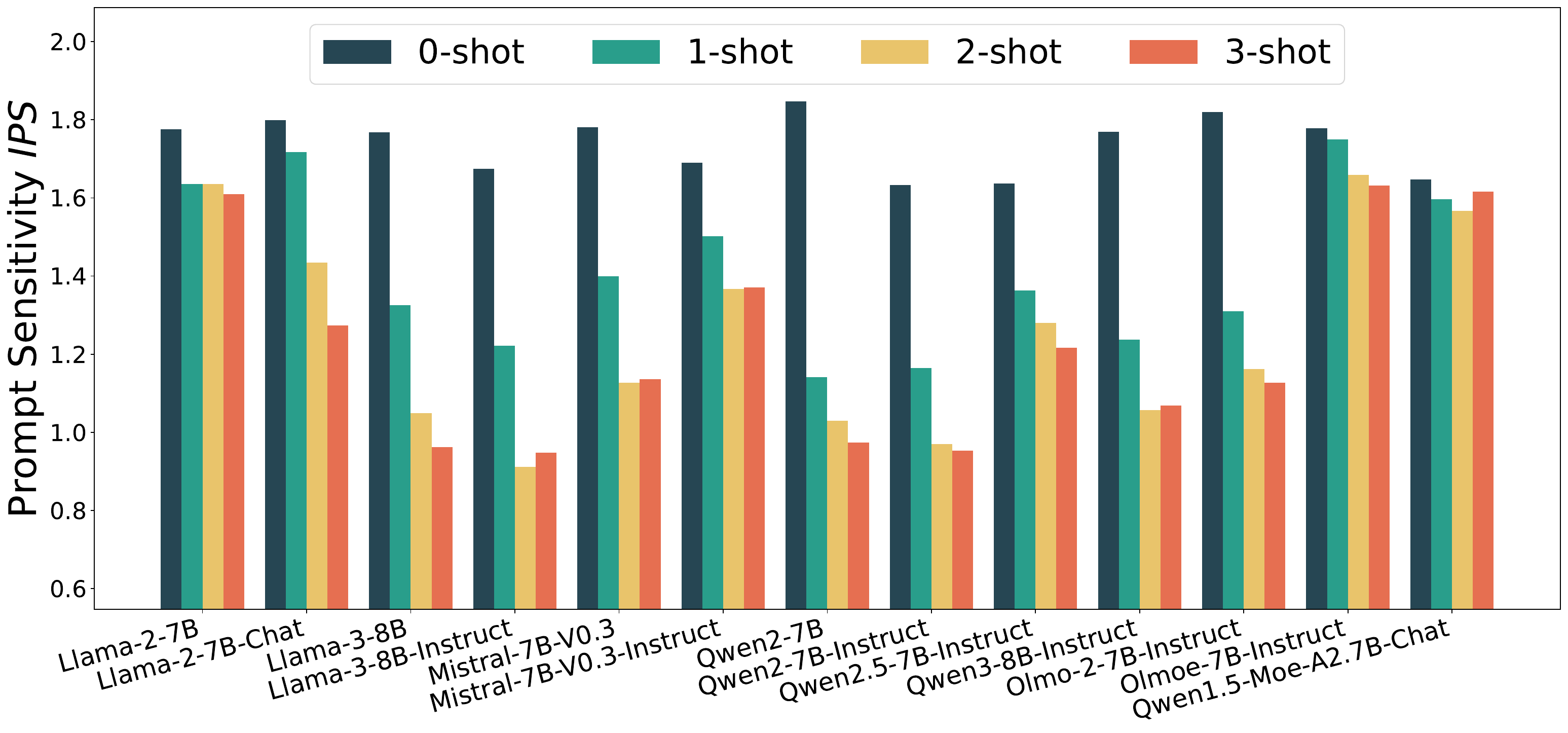}
    
    \caption{A comparison of prompt sensitivity between 0-shot learning and few-shot learning. The drop in prompt sensitivity is substantial from 0-shot to 1-shot.}
    \label{fig:fsl_0.15}
    
\end{minipage}
\end{figure}

\begin{figure}[!htb]
\centering
\includegraphics[width=1\textwidth]{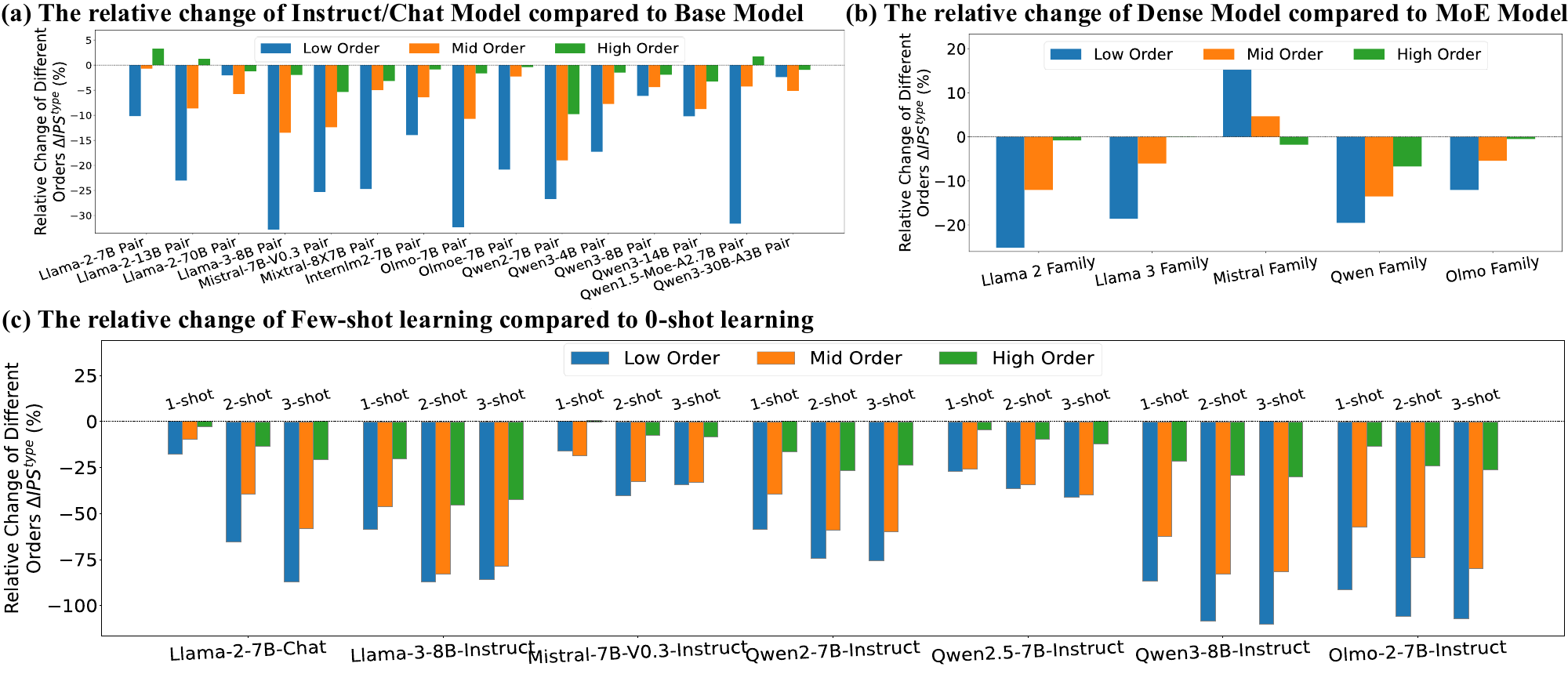}

\caption{Comparing the relative change in the prompt sensitivity of low-, mid-, and high-order interactions for different factors.}
\label{fig:order_analysis_0.15}

\end{figure}

\begin{figure}[!htb]
\centering
\includegraphics[width=0.9\textwidth]{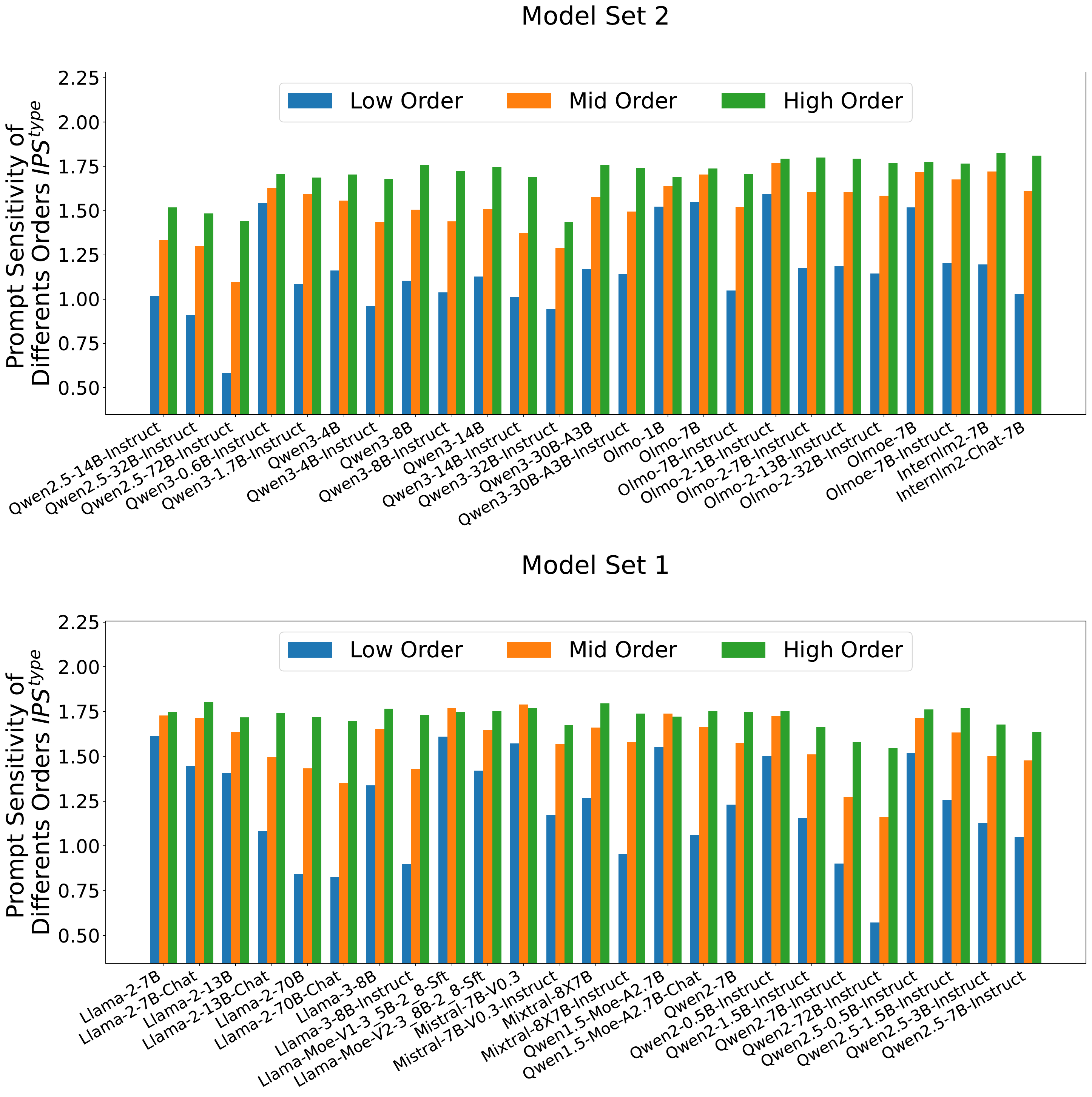}
\caption{A comparison of the prompt sensitivity of three order types. Results show that low-order interactions are the least sensitive, while high-order interactions are the most sensitive.}
\label{fig:order_all_models_combined_0.15}
\end{figure}

\begin{figure}[!htb]
\centering
\includegraphics[width=1\textwidth]{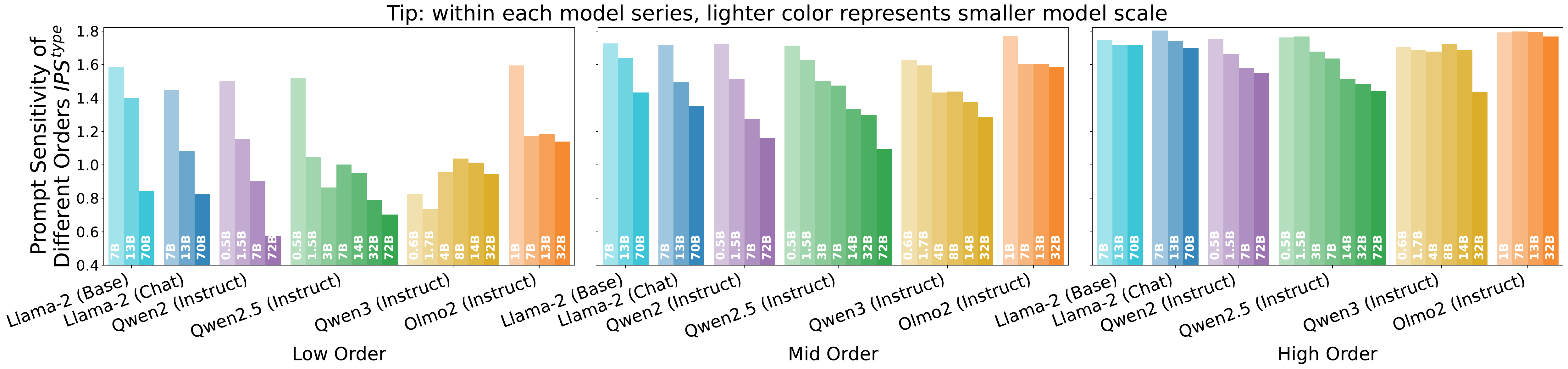}

\caption{A comparison of prompt sensitivity at the order-level across different model scales.}
\label{fig:model_size_order_0.15}

\end{figure}

\clearpage

\textbf{Here are the results for $\tau=0.05$.}

\begin{figure}[!htb]
\centering
\begin{minipage}{0.48\columnwidth}
    \centering
    \includegraphics[width=\linewidth]{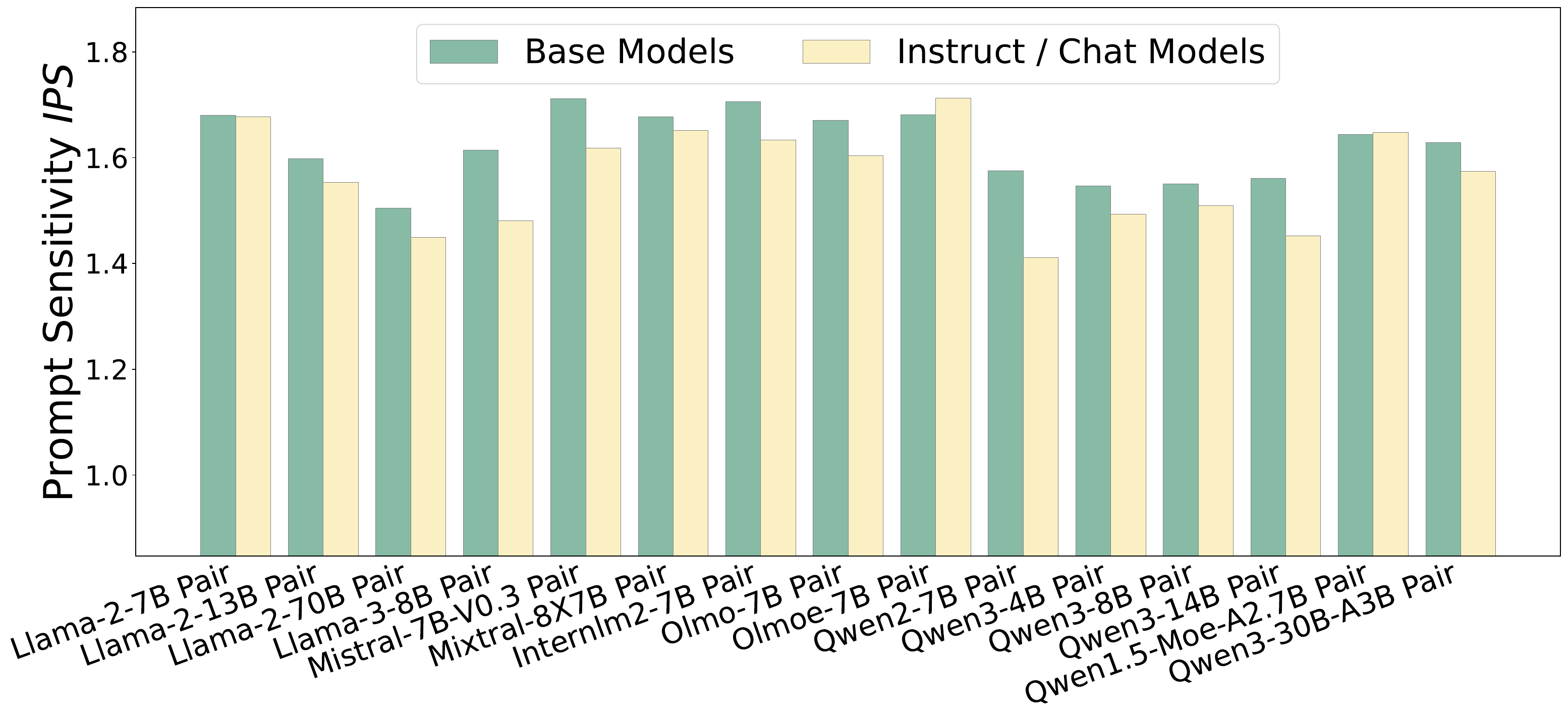}
    
    \caption{A comparison of the prompt sensitivity between instruct/chat models and base models. Results show that instruct/chat models are less sensitive than corresponding base models.}
    \label{fig:instruct_base_0.05}
    
\end{minipage}\hfill 
\begin{minipage}{0.48\columnwidth}
    \centering
    \includegraphics[width=\linewidth]{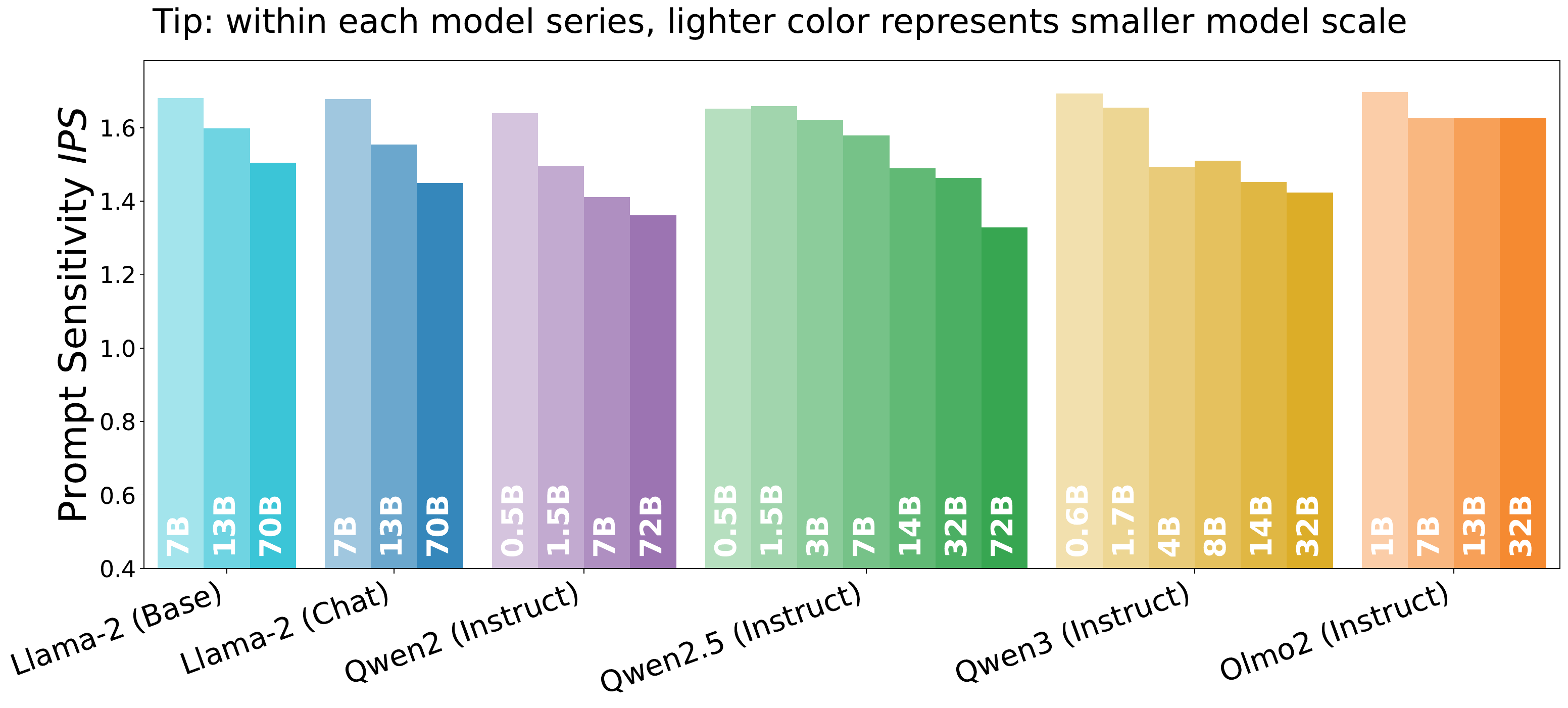}
    
    \caption{A comparison of prompt sensitivity across different model scales. As the model scale increases, the prompt sensitivity within a model series systematically decreases.}
    \label{fig:model_size_0.05}
\end{minipage}
\end{figure}

\begin{figure}[!htb]
\centering
\begin{minipage}{0.48\columnwidth}
    \centering
    \includegraphics[width=\linewidth]{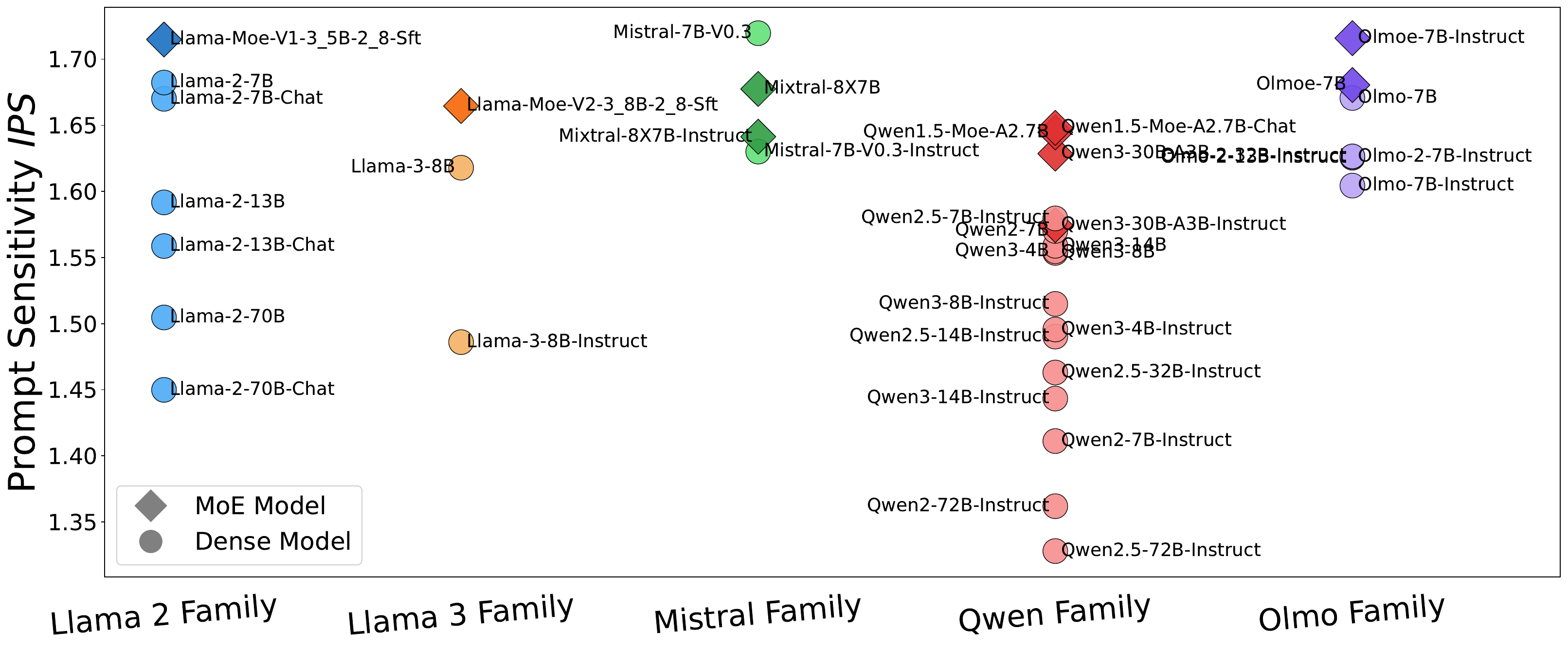}
    
    \caption{A Comparison of prompt sensitivity between MoE models and dense models. Generally, MoE models tend to be more sensitive than dense models in the same model family.}
    \label{fig:dense_moe_0.05}
    
\end{minipage}\hfill 
\begin{minipage}{0.48\columnwidth}
    \centering
    \includegraphics[width=\linewidth]{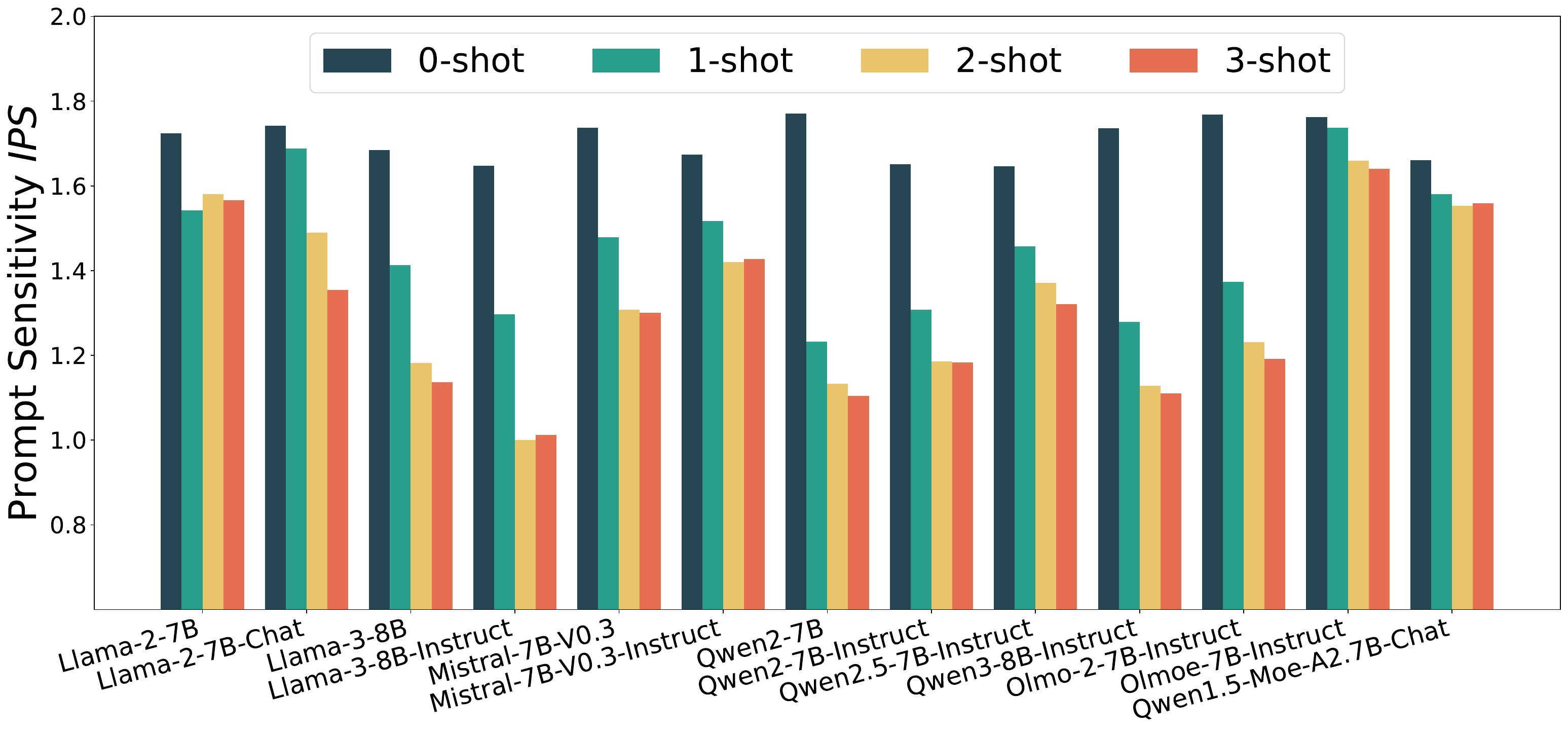}
    
    \caption{A comparison of prompt sensitivity between 0-shot learning and few-shot learning. The drop in prompt sensitivity is substantial from 0-shot to 1-shot.}
    \label{fig:fsl_0.05}
    
\end{minipage}
\end{figure}

\begin{figure}[!htb]
\centering
\includegraphics[width=1\textwidth]{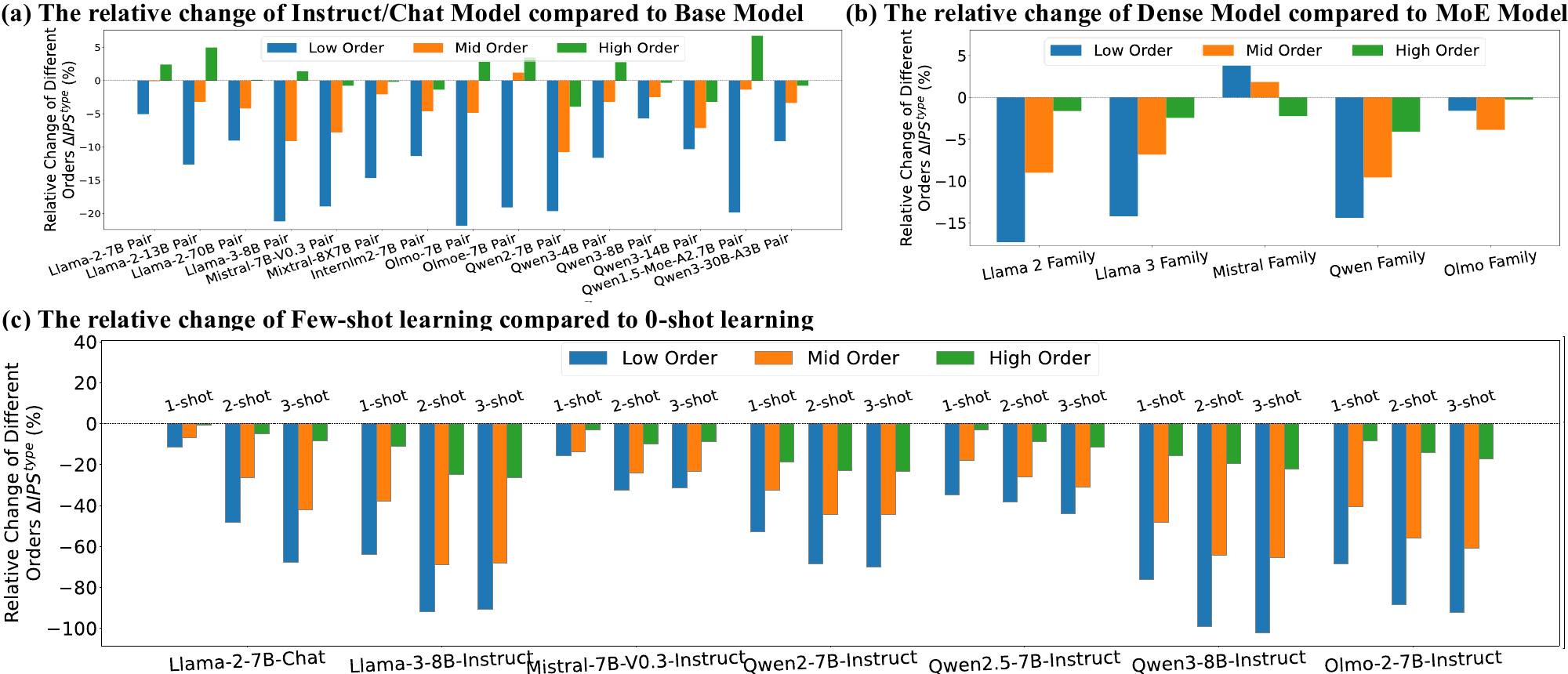}

\caption{Comparing the relative change in the prompt sensitivity of low-, mid-, and high-order interactions for different factors.}
\label{fig:order_analysis_0.05}

\end{figure}

\begin{figure}[!htb]
\centering
\includegraphics[width=0.96\textwidth]{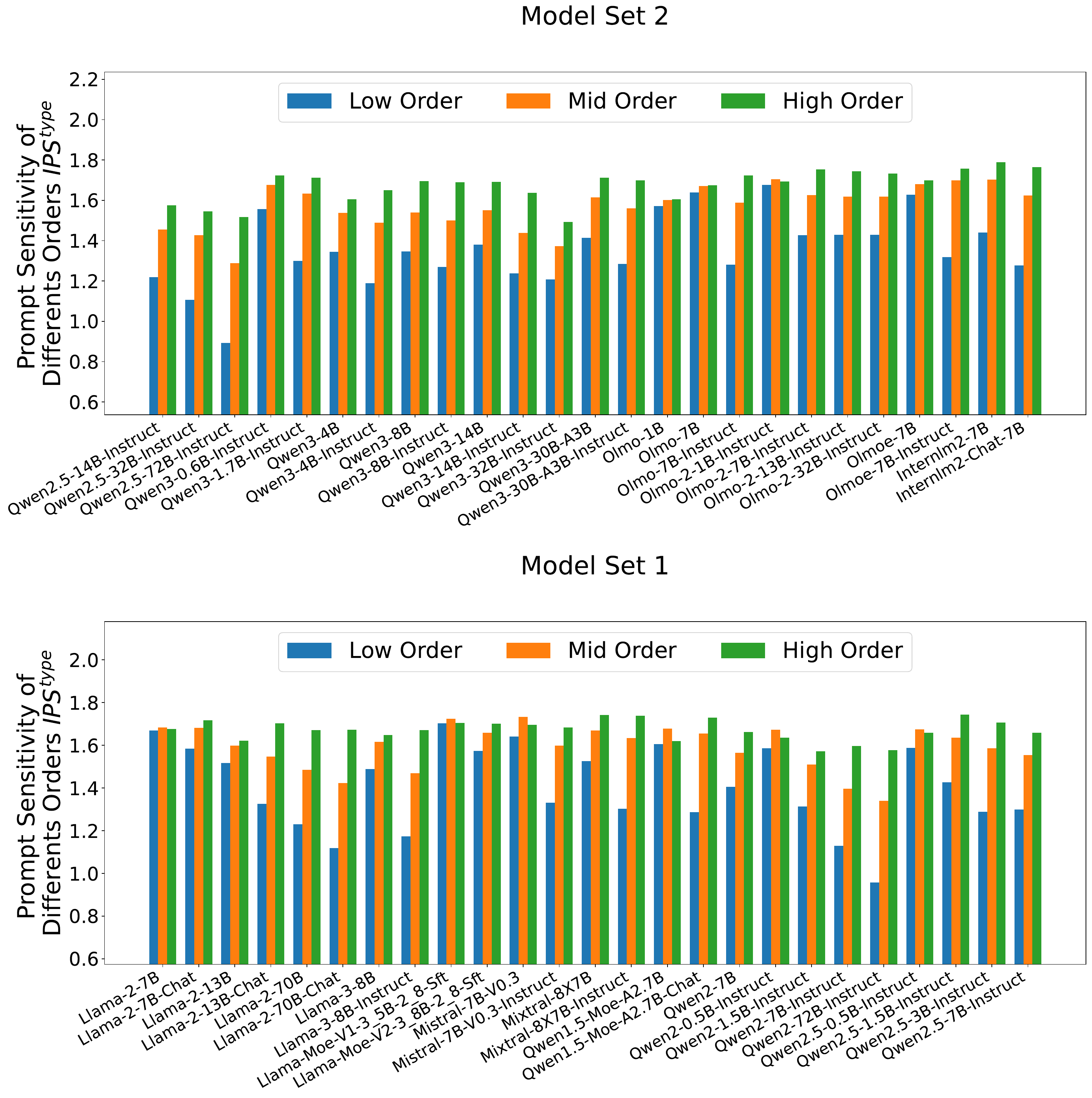}
\caption{A comparison of the prompt sensitivity of three order types. Results show that low-order interactions are the least sensitive, while high-order interactions are the most sensitive.}
\label{fig:order_all_models_combined_0.05}
\end{figure}

\begin{figure}[!htb]
\centering
\includegraphics[width=1\textwidth]{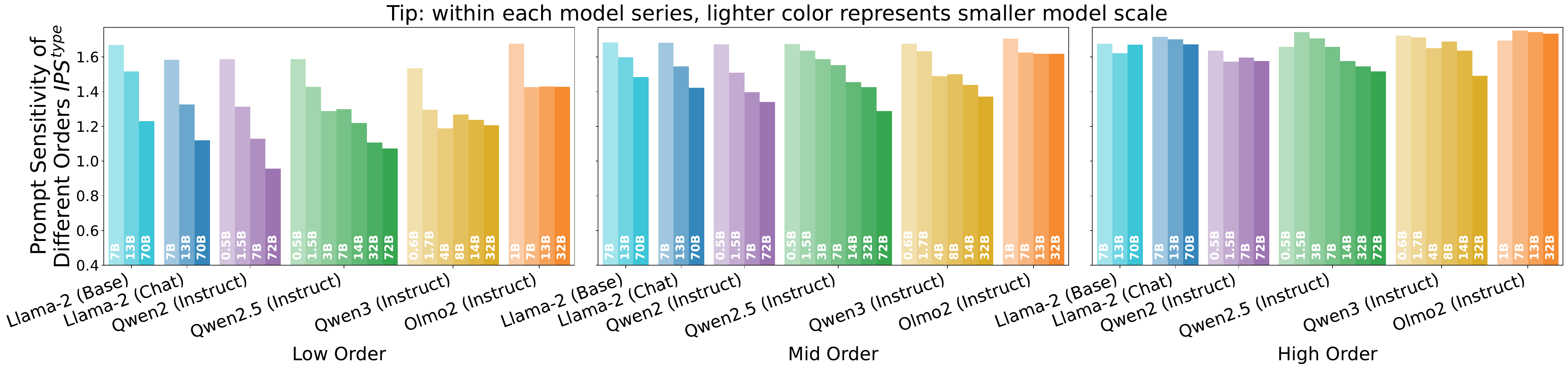}

\caption{A comparison of prompt sensitivity at the order-level across different model scales.}
\label{fig:model_size_order_0.05}

\end{figure}

\clearpage
\subsection{Results on MMLU Dataset}
Here are the results on the MMLU dataset and $\tau$ is set to 0.1, we can observe the same conclusions on this dataset.

\begin{figure}[!htb]
\centering
\begin{minipage}{0.48\columnwidth}
    \centering
    \includegraphics[width=\linewidth]{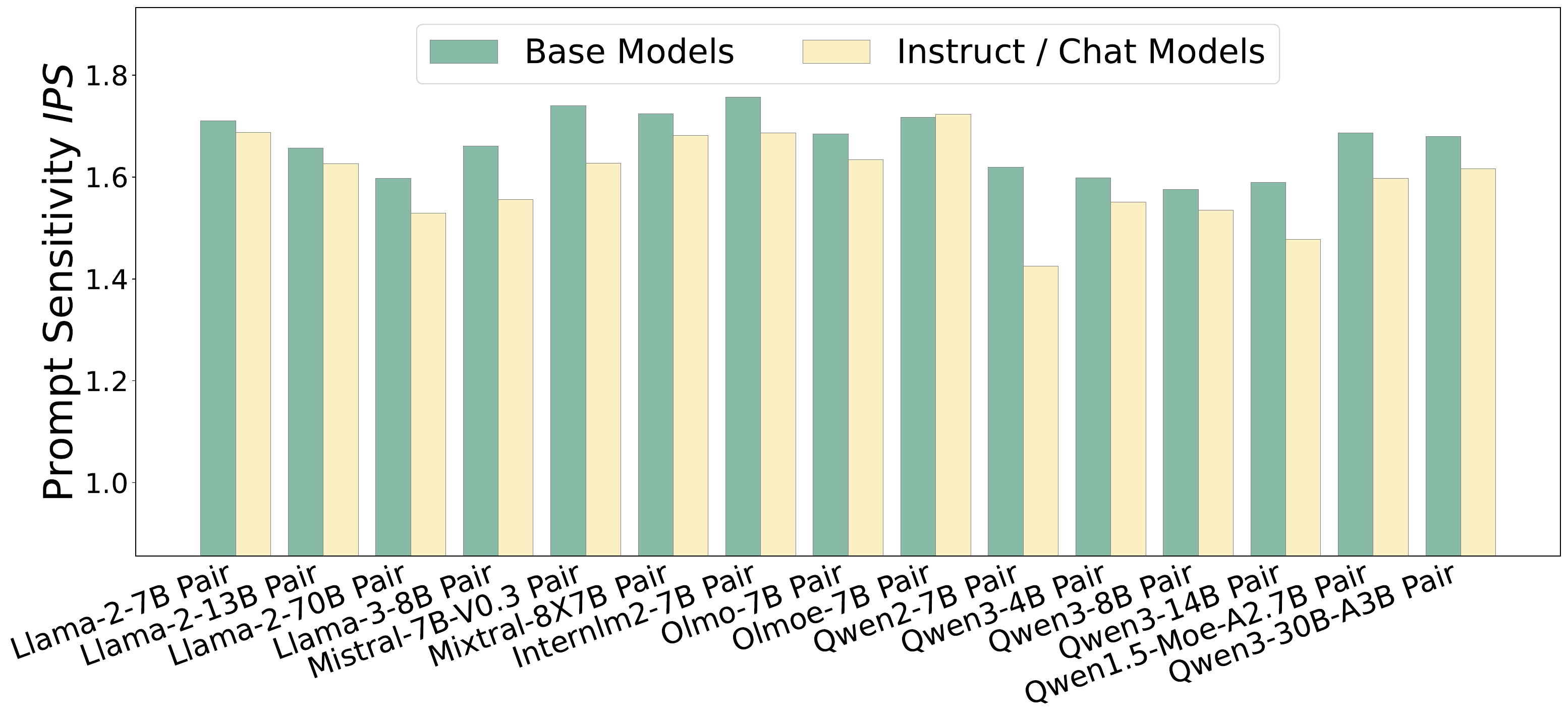}
    
    \caption{A comparison of the prompt sensitivity between instruct/chat models and base models. Results show that instruct/chat models are less sensitive than corresponding base models.}
    \label{fig:instruct_base_mmlu}
    
\end{minipage}\hfill 
\begin{minipage}{0.48\columnwidth}
    \centering
    \includegraphics[width=\linewidth]{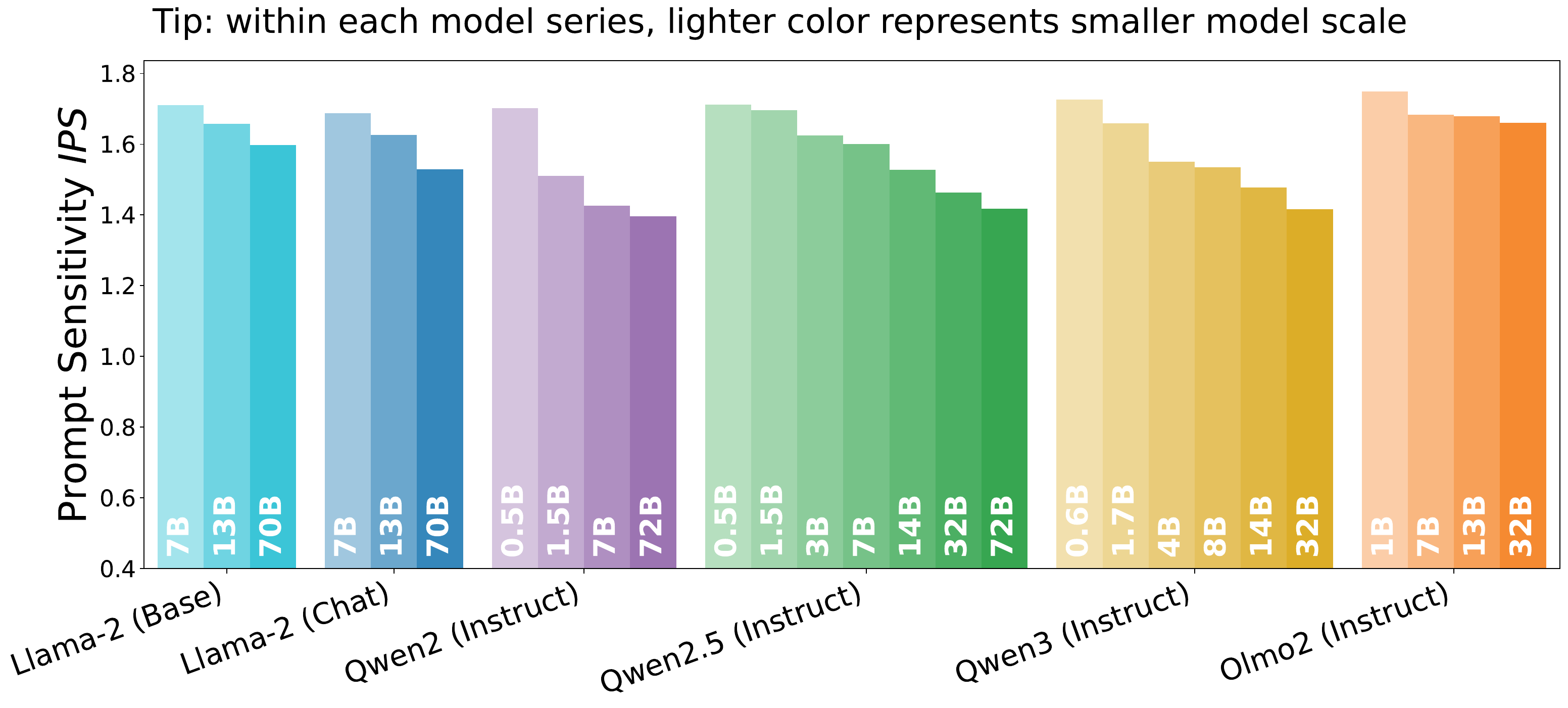}
    
    \caption{A comparison of prompt sensitivity across different model scales. As the model scale increases, the prompt sensitivity within a model series systematically decreases.}
    \label{fig:model_size_mmlu}
    
\end{minipage}
\end{figure}

\begin{figure}[!htb]
\centering
\begin{minipage}{0.48\columnwidth}
    \centering
    \includegraphics[width=\linewidth]{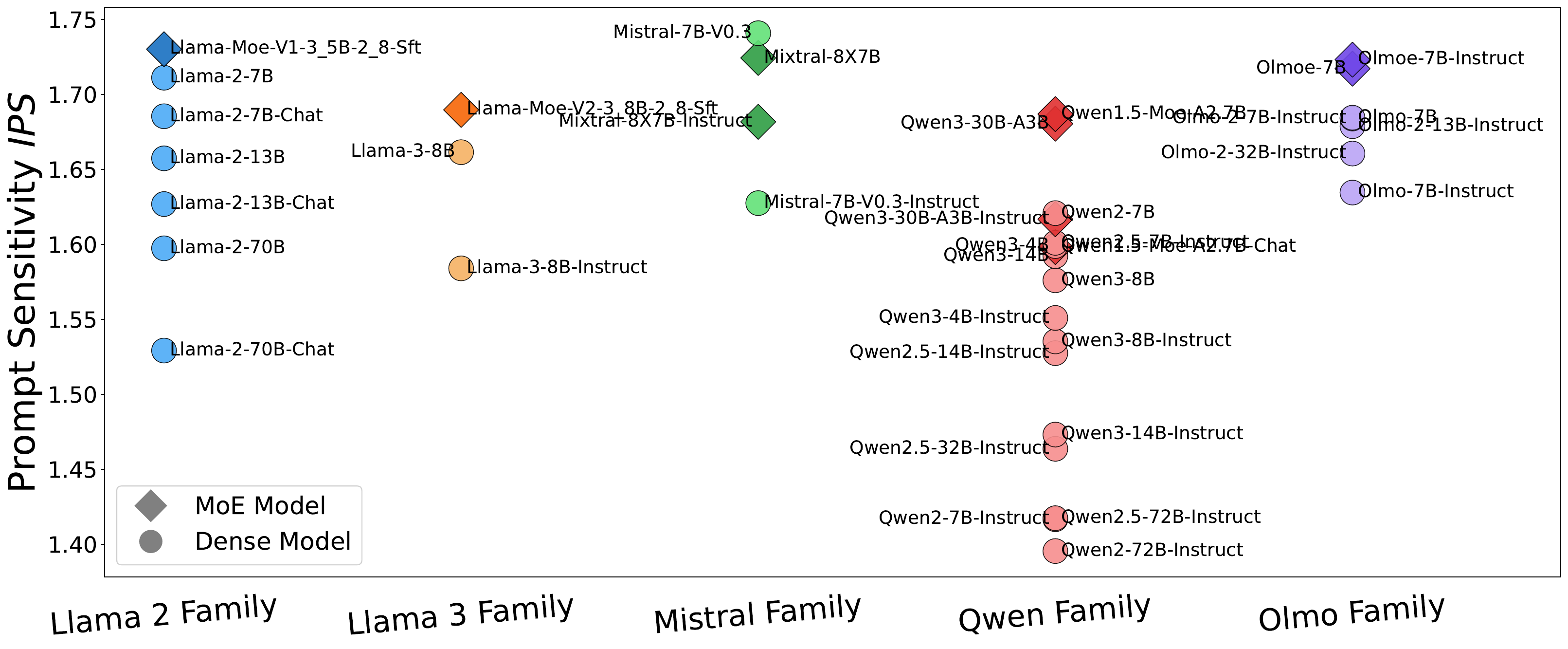}
    
    \caption{A Comparison of prompt sensitivity between MoE models and dense models. Generally, MoE models tend to be more sensitive than dense models in the same model family.}
    \label{fig:dense_moe_mmlu}
    
\end{minipage}\hfill 
\begin{minipage}{0.48\columnwidth}
    \centering
    \includegraphics[width=\linewidth]{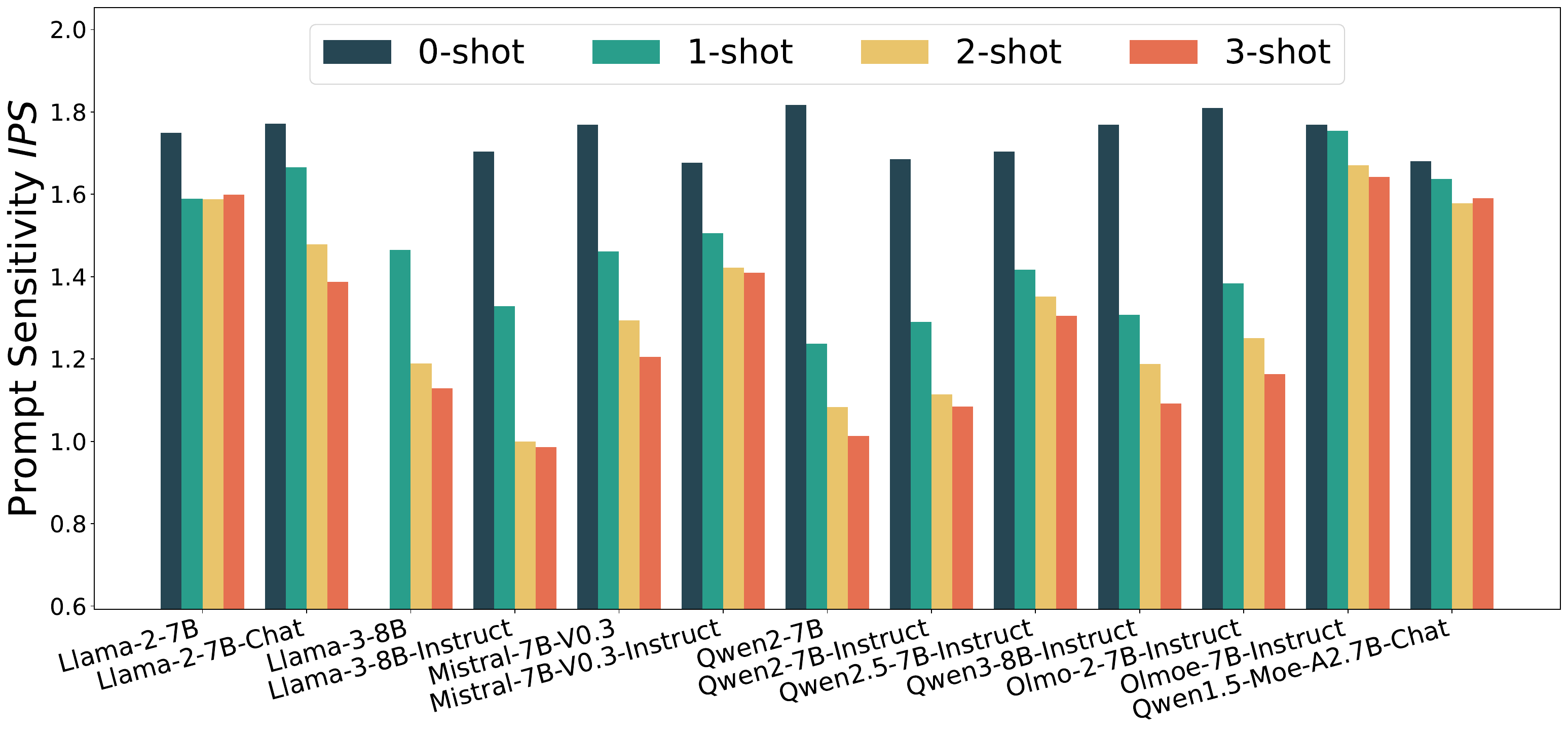}
    
    \caption{A comparison of prompt sensitivity between 0-shot learning and few-shot learning. The drop in prompt sensitivity is substantial from 0-shot to 1-shot.}
    \label{fig:fsl_mmlu}
    
\end{minipage}
\end{figure}

\begin{figure}[!htb]
\centering
\includegraphics[width=1\textwidth]{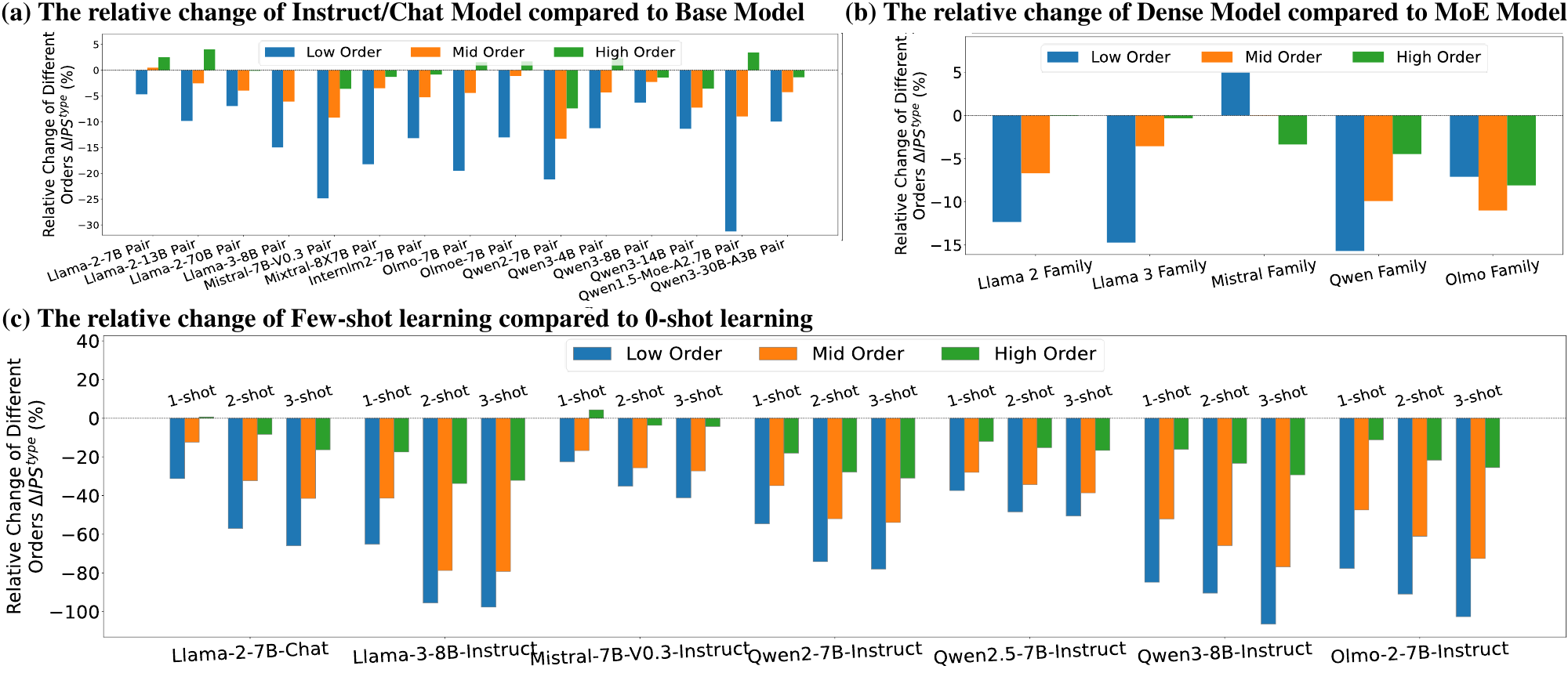}

\caption{Comparing the relative change in the prompt sensitivity of low-, mid-, and high-order interactions for different factors.}
\label{fig:order_analysis_mmlu}

\end{figure}

\begin{figure}[!htb]
\centering
\includegraphics[width=0.96\textwidth]{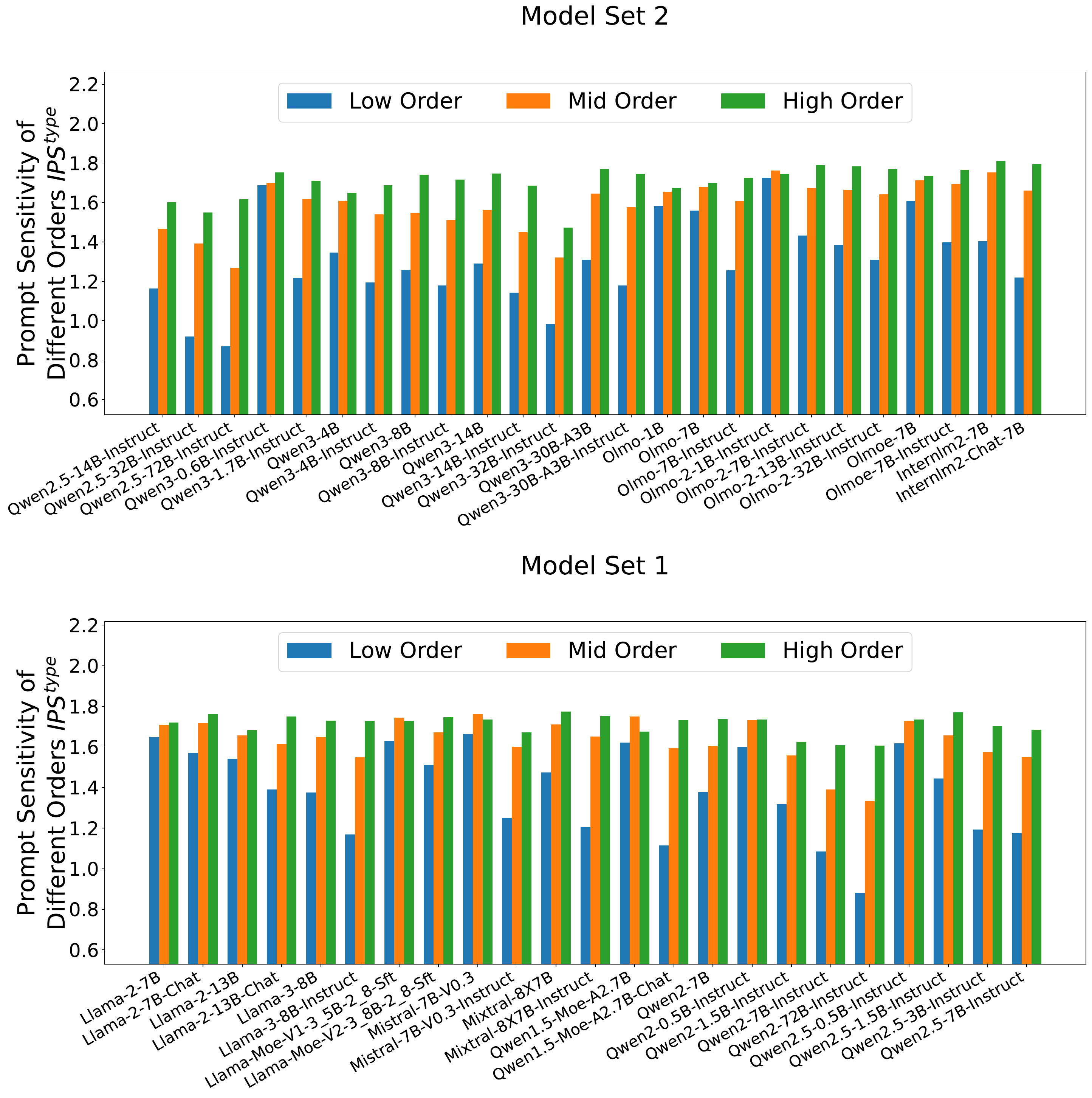}
\caption{A comparison of the prompt sensitivity of three order types. Results show that low-order interactions are the least sensitive, while high-order interactions are the most sensitive.}
\label{fig:order_all_models_combined_0.1}
\end{figure}

\begin{figure}[!htb]
\centering
\includegraphics[width=1\textwidth]{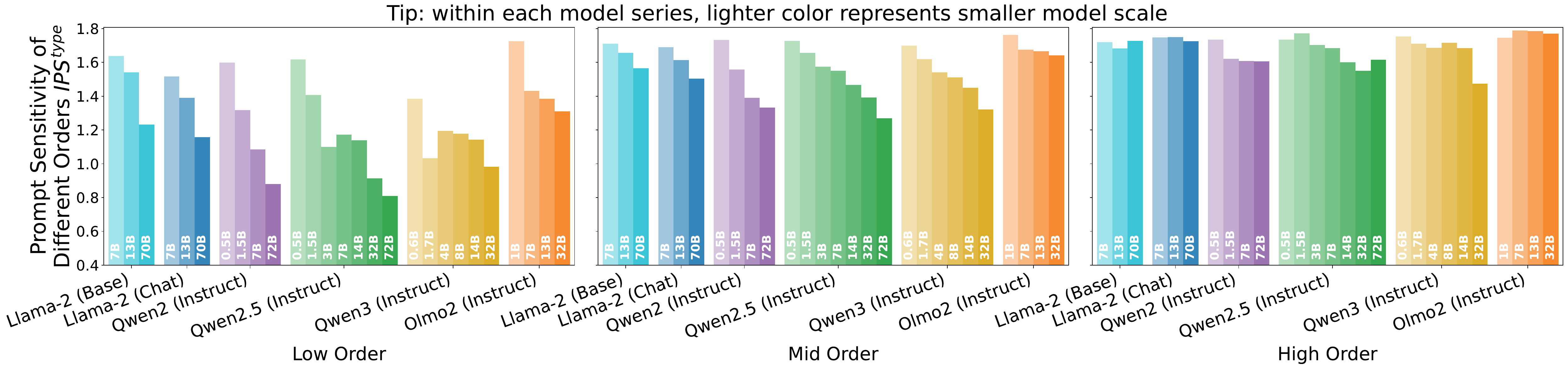}

\caption{A comparison of prompt sensitivity at the order-level across different model scales.}
\label{fig:model_size_order_mmlu}

\end{figure}

\clearpage

\section{Prompt Sensitivity of altering tokens vs. adding tokens}
\label{sec:perturbation_types}

Disaggregating prompt sensitivity by the type of perturbation offers deeper insights into model behavior. 
Following this direction, we conduct a fine-grained decomposition of our experimental results, comparing prompt sensitivity of two distinct categories to prompt alterations:

\begin{enumerate}
    \item \textbf{Altering tokens:} This involves modifying the capitalization of words, such as from \texttt{"Answer"} to \texttt{"ANSWER"}.
    \item \textbf{Adding tokens:} This involves adding symbolic components, for instance, changing a colon from \texttt{":"} to \texttt{"::"}.
\end{enumerate}


\begin{table}[h!]
\centering
\small
\begin{tabular}{l|cc|c}
\toprule
\textbf{Dense Model} & \textbf{Altering Tokens} & \textbf{Adding Tokens} & \textbf{Difference} \\
\midrule
internlm2-chat-7b        & 1.673      & \textbf{1.676} & +0.003 \\
llama-2-13b-chat         & \textbf{1.520}      & 1.433 & -0.087 \\
llama-2-7b-chat          & 1.633      & \textbf{1.710} & +0.077 \\
llama-3-8b-instruct      & 1.450      & \textbf{1.467} & +0.017 \\
mistral-7b-v0.3-instruct & 1.538      & \textbf{1.635} & +0.097 \\
olmo-2-7b-instruct       & 1.618      & \textbf{1.655} & +0.037 \\
olmo-7b-instruct         & 1.462      & \textbf{1.496} & +0.034 \\
qwen2-7b-instruct        & 1.366      & \textbf{1.404} & +0.038 \\
qwen2.5-7b-instruct      & 1.558      & \textbf{1.567} & +0.009 \\
qwen3-8b-instruct        & 1.505      & \textbf{1.550} & +0.045 \\
\bottomrule
\end{tabular}
\caption{IPS Scores of Dense Models on Different Perturbation Types. Higher scores indicate greater sensitivity. The more sensitive perturbation type for each model is highlighted in bold.}
\label{tab:dense_perturbation}
\end{table}

\begin{table}[h!]
\centering
\small
\begin{tabular}{l|cc|c}
\toprule
\textbf{MoE Model} & \textbf{Altering Tokens} & \textbf{Adding Tokens} & \textbf{Difference} \\
\midrule
llama-moe-v1-3.5b-sft  & \textbf{1.736} & 1.724 & -0.012 \\
llama-moe-v2-3.8b-sft  & \textbf{1.698} & 1.655 & -0.043 \\
mixtral-8x7b-instruct  & \textbf{1.658} & 1.617 & -0.041 \\
olmoe-7b-instruct      & \textbf{1.725} & 1.699 & -0.026 \\
qwen1.5-moe-a2.7b-chat & \textbf{1.669} & 1.634 & -0.035 \\
qwen3-30b-a3b-instruct & 1.608 & \textbf{1.638} & +0.030 \\
\bottomrule
\end{tabular}
\caption{IPS Scores of MoE Models on Different Perturbation Types. The more sensitive perturbation type for each model is highlighted in bold.}
\label{tab:moe_perturbation}
\end{table}

Our results highlight a clear architectural divide:
\textbf{(1) Dense models are more sensitive to adding tokens.} As shown in Table~\ref{tab:dense_perturbation}, 9 out of the 10 analyzed dense models exhibit greater sensitivity to the "adding tokens" category. This suggests a strong, consistent trend where changes to the template's structure have a more pronounced impact on the internal interactions of dense architectures.
\textbf{(2) MoE models are more sensitive to altering tokens.} In stark contrast, Table~\ref{tab:moe_perturbation} shows that 5 out of the 6 MoE models are more sensitive to "altering tokens". This consistent pattern suggests that MoE architectures are more susceptible to variations in the change of word capitalization.


\section{Comparison between IPS and Other Metrics}
\label{sec:ipsvs.}

To demonstrate the unique value of our interaction-based approach, we compare the Interaction-based Prompt Sensitivity (IPS) against standard coarse-grained metrics derived from internal representations. Specifically, we measure the \textbf{Cosine Similarity} and \textbf{$L_2$ Distance} of the final-layer hidden states under prompt perturbations. While these metrics are commonly used to assess representation robustness, our analysis reveals that they fail to capture the nuanced mechanisms of prompt sensitivity in LLMs.

\subsection{Empirical Inconsistency of Representation-based Metrics}

We re-evaluate \textbf{Factor 1 (Base vs. Instruct/Chat models)} using these representation-based metrics. The results, summarized in Table~\ref{tab:hidden_state_comparison}, demonstrate a significant lack of consistency compared to the robust trends observed via IPS.

\begin{table}[htbp]
    \caption{Comparison of stability metrics (Cosine Similarity and $L_2$ Distance) for Base vs. Instruct/Chat models. Unlike IPS, these metrics fail to show a consistent trend regarding the impact of Supervised Fine-Tuning.}
    \label{tab:hidden_state_comparison}
    \centering
    \small
    \setlength{\tabcolsep}{3pt}
    \begin{tabular}{@{} l S[table-format=1.3, detect-weight] S[table-format=3.2, detect-weight] S[table-format=1.3, detect-weight] S[table-format=3.2, detect-weight] @{}}
        \toprule
        & \multicolumn{2}{c}{\textbf{Base Model}} & \multicolumn{2}{c}{\textbf{Instruct/Chat Model}} \\
        \cmidrule(lr){2-3} \cmidrule(l){4-5}
        \textbf{Model Family} & {\textbf{Cosine} ($\uparrow$)} & {\textbf{$L_2$ Dist.} ($\downarrow$)} & {\textbf{Cosine} ($\uparrow$)} & {\textbf{$L_2$ Dist.} ($\downarrow$)} \\
        \midrule
        Llama-2-7B          & 0.770 & \bfseries 75.12 & \bfseries 0.781 & 66.51 \\
        Llama-2-13B         & \bfseries 0.942 & \bfseries 17.35 & 0.927 & 19.69 \\
        Llama-2-70B         & \bfseries 0.911 & \bfseries 33.65 & 0.795 & 43.14 \\
        \midrule
        Llama-3-8B          & 0.856 & 79.77 & \bfseries 0.880 & \bfseries 73.56 \\
        \midrule
        Mistral-7B-V0.3     & 0.811 & 207.89 & \bfseries 0.843 & \bfseries 154.43 \\
        Mixtral-8x7B        & \bfseries 0.965 & 138.71 & 0.875 & \bfseries 136.03 \\
        \midrule
        InternLM2-7B        & 0.898 & 142.51 & \bfseries 0.987 & \bfseries 75.83 \\
        \midrule
        Olmo-7B             & \bfseries 0.793 & \bfseries 36.88 & 0.661 & 48.33 \\
        Olmoe-7B            & \bfseries 0.672 & \bfseries 56.67 & 0.510 & 85.73 \\
        \midrule
        Qwen2-7B            & 0.864 & 138.73 & \bfseries 0.888 & \bfseries 136.49 \\
        Qwen3-4B            & \bfseries 0.914 & \bfseries 60.13 & 0.779 & 73.82 \\
        Qwen3-8B            & \bfseries 0.995 & \bfseries 33.95 & 0.962 & 41.42 \\
        Qwen3-14B           & \bfseries 0.947 & 67.85 & 0.932 & \bfseries 51.97 \\
        Qwen1.5-MoE-A2.7B   & 0.765 & 160.31 & \bfseries 0.911 & \bfseries 86.98 \\
        Qwen3-30B-A3B       & 0.864 & 81.13 & \bfseries 0.932 & \bfseries 39.04 \\
        \bottomrule
    \end{tabular}
\end{table}

As shown in Table~\ref{tab:hidden_state_comparison}, neither Cosine Similarity nor $L_2$ Distance provides a reliable proxy for prompt sensitivity:
\begin{itemize}
    \item \textbf{Contradictions between metrics:} For models like \textit{Mixtral-8x7B}, the two metrics contradict each other—Cosine Similarity suggests the Base model is more stable, while $L_2$ Distance favors the Instruct model.
    \item \textbf{Inconsistency with established trends:} While our IPS analysis (and general consensus) identifies Instruct/Chat models as more robust to prompt variations, representation metrics frequently suggest the opposite. For instance, in the \textit{Llama-2-13B} and \textit{Qwen3-8B} pairs, the Base models exhibit higher cosine similarity and lower $L_2$ distance than their Instruct counterparts.
    \item \textbf{Random fluctuations:} There is no discernible pattern across model families. For \textit{Llama-2-7B}, the Chat version appears more stable via Cosine Similarity but less stable via $L_2$ Distance.
\end{itemize}
These contradictions indicate that global measures of hidden state changes are too coarse to serve as accurate indicators of the model's functional sensitivity.

\subsection{Superiority of the Interaction-based Framework}

The empirical limitations of representation-based metrics highlight the theoretical advantages of our proposed framework. The superiority of IPS stems from two fundamental differences:

\paragraph{1. Explanability (``Why'' vs. ``What''):}
Hidden state similarity merely measures \textit{what} has changed—the magnitude or direction of the aggregate internal representation vector. It treats the model as a black box regarding the reasoning process. In contrast, our interaction framework explains \textit{why} the output fluctuates. By decomposing predictions into interactions, we can pinpoint specific combinations of input tokens (inference patterns) that become unstable. This fine-grained insight allows us to distinguish between benign representation shifts and those that disrupt the model's logical coherence.

\paragraph{2. Faithfulness to the Output:}
Representation metrics lack a direct mathematical link to the final prediction. A small shift in Euclidean distance can sometimes lead to a flipped prediction, while a large shift might not. Conversely, our method is grounded in the \textbf{Universal Matching Property} (Theorem 1). This theorem guarantees that the sum of all interactions perfectly reconstructs the LLM's output score. Consequently, IPS provides a faithful evaluation of the decision-making logic, ensuring that the measured sensitivity directly reflects the instability in the model's actual predictive mechanism.

\section{Detailed Discussion on the Impact of Model Architecture}\label{sec:moe_explain}

We notice that in Figure~\ref{fig:dense_moe}, the behavior of Mistral family is different from the other family: the dense models are more sensitive than the MoE models at low-order and mid-order level.
We attribute this to the number of activated parameters, extending our finding from Factor 2 that larger models are less sensitive. While most MoE models (e.g., in Llama and Qwen families) are more sensitive due to fewer activated parameters, the Mistral case is reversed: Mixtral-8x7B activates more parameters than its dense counterpart Mistral-7B-V0.3 (13B vs. 7B), resulting in lower prompt sensitivity.

To more rigorously isolate the influence of model architecture on the prompt sensitivity from model size or other factors, we conduct a controlled variable analysis. We select specific pairs from the Qwen and OLMo families that share the most similar model scales (\textit{i.e.}, activated parameters) and analogous training paradigms (\textit{i.e.}, base vs. base, instruct/chat vs. instruct/chat). This targeted comparison enables us to minimize confounding factors and focus directly on the architectural impact.

Results in Table~\ref{tab:moe_dense_controlled} consistently show that MoE models exhibit higher prompt sensitivity than dense models. This suggests that the increased prompt sensitivity of MoE architectures is not merely a consequence of smaller number of activated parameters. Instead, it further strengthens our conclusion that the MoE architecture inherently increases the prompt sensitivity of LLMs.

\begin{table}[htbp]
    \caption{Controlled variable analysis of prompt sensitivity between MoE and dense models.}
    \label{tab:moe_dense_controlled}
    \centering
    \small
    \begin{tabular}{@{} c c c c S[table-format=1.3] S[table-format=1.3] @{}}
        \toprule
        \textbf{Model Name} & \textbf{Architecture} & \textbf{Act. Params.} & \textbf{Type} & \textbf{IPS (ARC) $\downarrow$} & \textbf{IPS (MMLU) $\downarrow$} \\
        \midrule
        Qwen1.5-moe-a2.7b-chat & MoE   & 2.7B & Chat     & \bfseries 1.665 & \bfseries 1.640 \\
        Qwen2.5-3b-instruct    & Dense & 3B   & Instruct & \textbf{1.606}           & \textbf{1.625} \\
        \midrule
        Olmoe-7b               & MoE   & 1B   & Base     & \bfseries 1.714 & \bfseries 1.717 \\
        Olmo-1b                & Dense & 1B   & Base     & \textbf{1.632}           & \textbf{1.658} \\
        \midrule
        Qwen3-30b-a3b          & MoE   & 3B   & Base     & \bfseries 1.642 & \bfseries 1.680 \\
        Qwen3-4b               & Dense & 4B   & Base     & \textbf{1.568}           & \textbf{1.599} \\
        \midrule
        Qwen3-30b-a3b-instruct & MoE   & 3B   & Instruct & \bfseries 1.580 & \bfseries 1.617 \\
        Qwen3-4b-instruct      & Dense & 4B   & Instruct & \textbf{1.483}           & \textbf{1.551} \\
        \bottomrule
    \end{tabular}
\end{table}

\section{Solutions for reducing the computational cost of the method}\label{sec:7}
The limitation of our current study lies in the computational cost of the interaction framework. The method's complexity scales exponentially with the number of input variables $n$, as it requires evaluating $2^n$ masked inputs. However, applying this method to very long text inputs would demand a high computational load.

Future work can address this scalability challenge through several promising avenues. These strategies aim to reduce the effective number of input variables without fundamentally changing the faithfulness of the analysis:

(1) \textbf{Selective Input Variable Analysis.} One approach is to analyze only a subset of informative input variables (\textit{i.e.}, words) while treating uninformative ones (e.g., stop words) as fixed background context. Previous research has demonstrated that this selection does not significantly impair the faithfulness of the interaction framework \cite{chen2024defining}.

(2) \textbf{Phrase-level Aggregation.} Instead of analyzing individual words, we can operate at a coarser perspective by merging related words into combined phrasal units. This reduces the total number of input variables while preserving key semantic meaning.

For methods (1) and (2), we have put them into practice. In our experiments on long-form open-ended questions, we apply these two methods to effectively control the number of input variables. The specific selection strategies are detailed in Appendix~\ref{sec:open-ended} and Appendix~\ref{sec:paraphrase}. 
It demonstrates that these techniques can substantially reduce computational complexity without affecting the key conclusions.

(3) \textbf{Approximation Methods.} 
In addition to selection strategies, techniques specifically designed for efficiently computing sparse interactions to bypass exhaustive {\small $O(2^n)$} evaluations \citep{kang2025spex,butler2026proxyspex}, represent a clear path for reducing computational cost.

These strategies represent promising directions for extending the powerful capabilities of interaction-based analysis to a wider range of long-text NLP tasks.

\section{Details of the Experiments on Open-Ended Generation Tasks}\label{sec:open-ended}
\subsection{Experimental Setup}
This section illustrates the detailed setup of open-ended question-answering tasks, which more closely resemble real-world user scenarios. 
We employed the \textbf{Databricks Dolly-15k} dataset, an open-source collection of instruction-following records. 
This dataset spans multiple behavioral categories as defined in the InstructGPT paper \cite{ouyang2022training}, including brainstorming, classification, closed QA, open QA, and summarization, providing a diverse and realistic dataset for evaluating model robustness. We test the results mainly on the Llama-2 family.

\subsection{Selection of Input Variables}\label{sec:speed_up}
Given that the text length in open-ended instructions far exceeds that of MCQ tasks, a direct analysis of all words would lead to exponential computational costs. To address this challenge, we applied the optimization strategies discussed in our limitations section: \textbf{(1) Selective Input Variable Analysis} and \textbf{(2) Phrase-level Aggregation}.

Our approach is guided by a systematic procedure to choose a fixed number of key input variables from the full input. Specifically, for each input sentence, we select meaningful words or phrases to construct the set of input variables $N$. A word is considered ``meaningful'' if it is not an NLTK \cite{bird2006nltk} stop word or a punctuation mark. The remaining parts of the text, such as generic instruction templates (e.g., "Below is an instruction...") and stop words, are treated as fixed background context. During the interaction analysis, only the variables within the set $N$ are masked.

For a concrete example, consider the following prompt:
\begin{quote}
\small
\texttt{Below is an instruction that describes a task. Write a response that appropriately completes the request.}\\
\texttt{}\\
\texttt{\#\#\# Instruction:}\\
\texttt{Identify from the following list characters from The X-Files who are bald or balding: Walter Skinner, John Fitzgerald Byers, Dana Scully, Melvin Frohike, Darius Michaud, Peter Watts, Conrad Strughold, Queequeg}\\
\texttt{}\\
\texttt{\#\#\# Response:}\\
\end{quote}

\noindent\textbf{Selection Process for Input Variables:}
\begin{enumerate}
    \item \textbf{Method (1) Application:} We designate the generic instruction template and functional stop words (e.g., "from", "the", "who", "are") as background context, excluding them from the input variable set.
    \item \textbf{Method (2) Application:} We aggregate words forming core semantic concepts into single phrasal units, such as the key entity \texttt{"Walter Skinner"} and the critical condition \texttt{"bald or balding"}.
\end{enumerate}

\noindent\textbf{Final Input Variables:}
\begin{verbatim}
[
  "Identify", "following list", "characters","X-Files", 
  "bald or balding","Walter Skinner", "John Fitzgerald Byers",
  "Dana Scully","Melvin Frohike", "Darius Michaud", 
  "Peter Watts","Conrad Strughold", "Queequeg"
] 
\end{verbatim}

\subsection{Experimental Results}\label{sec:open-ended-exp}
By applying the aforementioned input variable selection strategies (Methods 1 and 2) in our experiments on the Dolly dataset, we successfully managed the analytical complexity for each long-text input, leading to a substantial reduction in computational cost.

Crucially, the experimental outcomes derived from open-ended questions and the optimized setup remained \textbf{highly consistent} with the main conclusions drawn from our MCQ-based experiments.

Here are the results and $\tau$ is set to 0.1, we can observe the same conclusions in this experiment.

\begin{figure}[!htb]
\centering
\begin{minipage}{0.48\columnwidth}
    \centering
    \includegraphics[width=\linewidth]{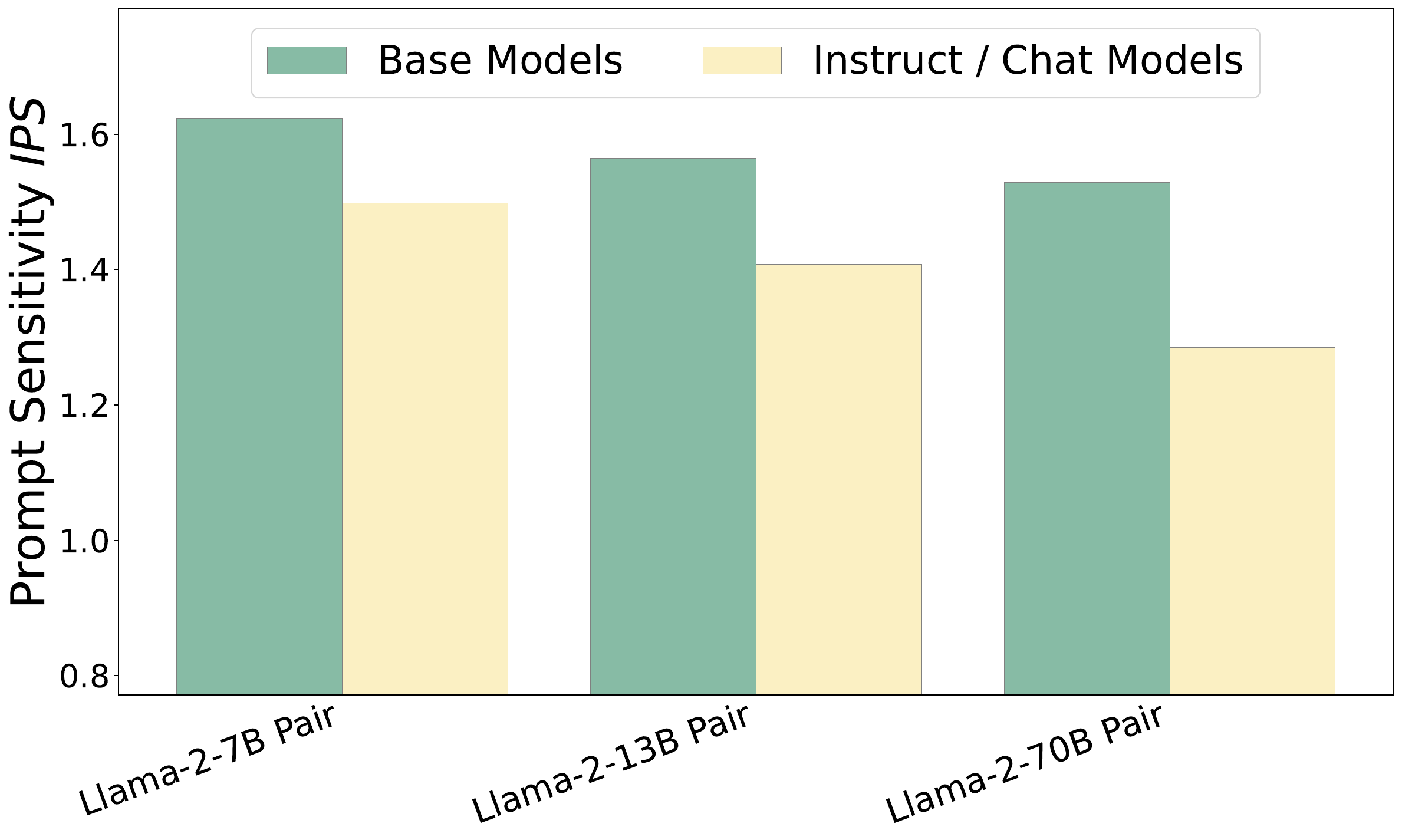}
    
    \caption{A comparison of the prompt sensitivity between instruct/chat models and base models. Results show that instruct/chat models are less sensitive than corresponding base models.}
    \label{fig:instruct_base_dolly1}
    
\end{minipage}\hfill 
\begin{minipage}{0.48\columnwidth}
    \centering
    \includegraphics[width=\linewidth]{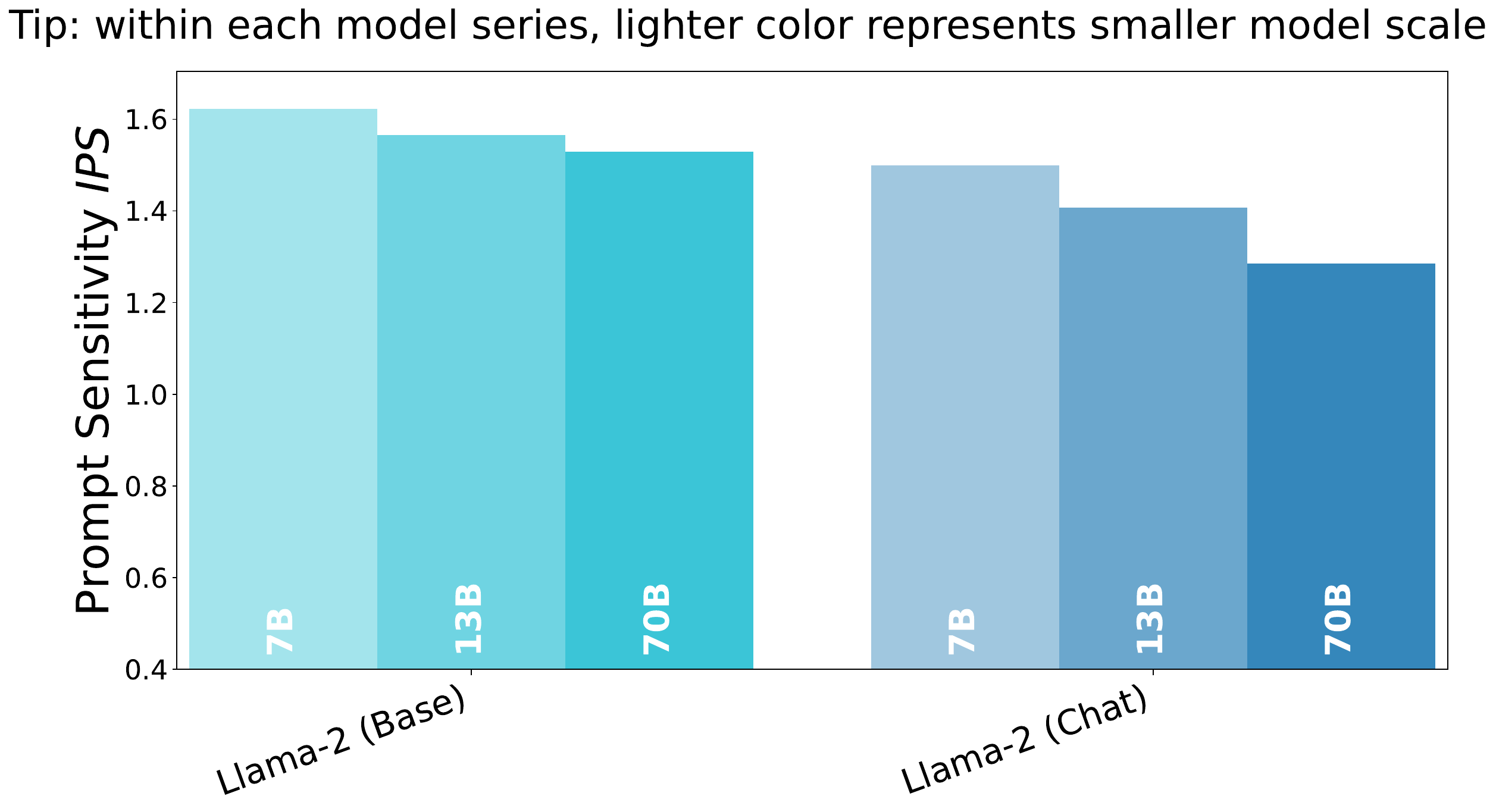}
    
    \caption{A comparison of prompt sensitivity across different model scales. As the model scale increases, the prompt sensitivity within a model series systematically decreases.}
    \label{fig:model_size_dolly1}
    
\end{minipage}
\end{figure}

\begin{figure}[!htb]
\centering
\begin{minipage}{0.48\columnwidth}
    \centering
    \includegraphics[width=\linewidth]{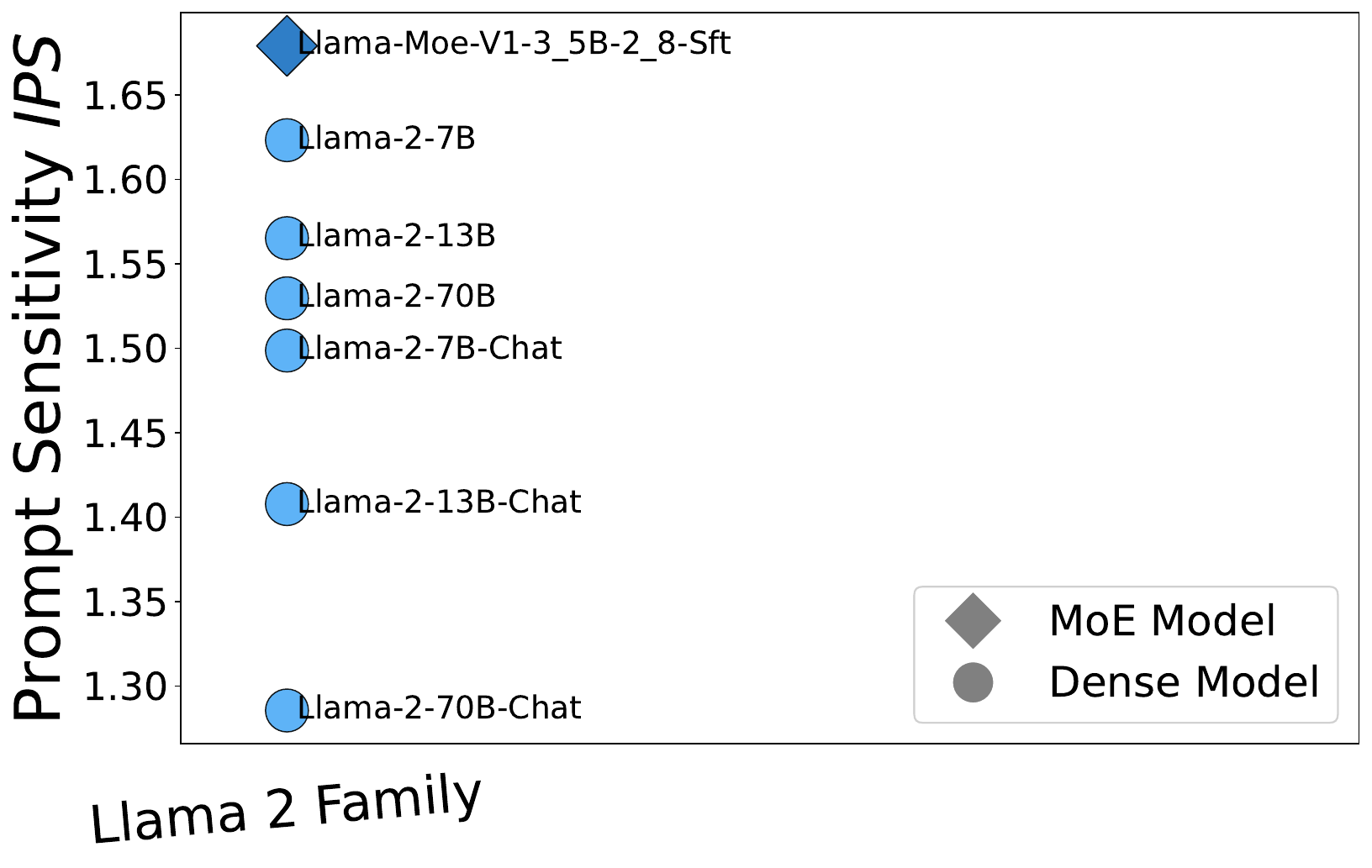}
    
    \caption{A Comparison of prompt sensitivity between MoE models and dense models. Generally, MoE models tend to be more sensitive than dense models in the same model family.}
    \label{fig:dense_moe_dolly1}
    
\end{minipage}\hfill 
\begin{minipage}{0.48\columnwidth}
    \centering
    \includegraphics[width=\linewidth]{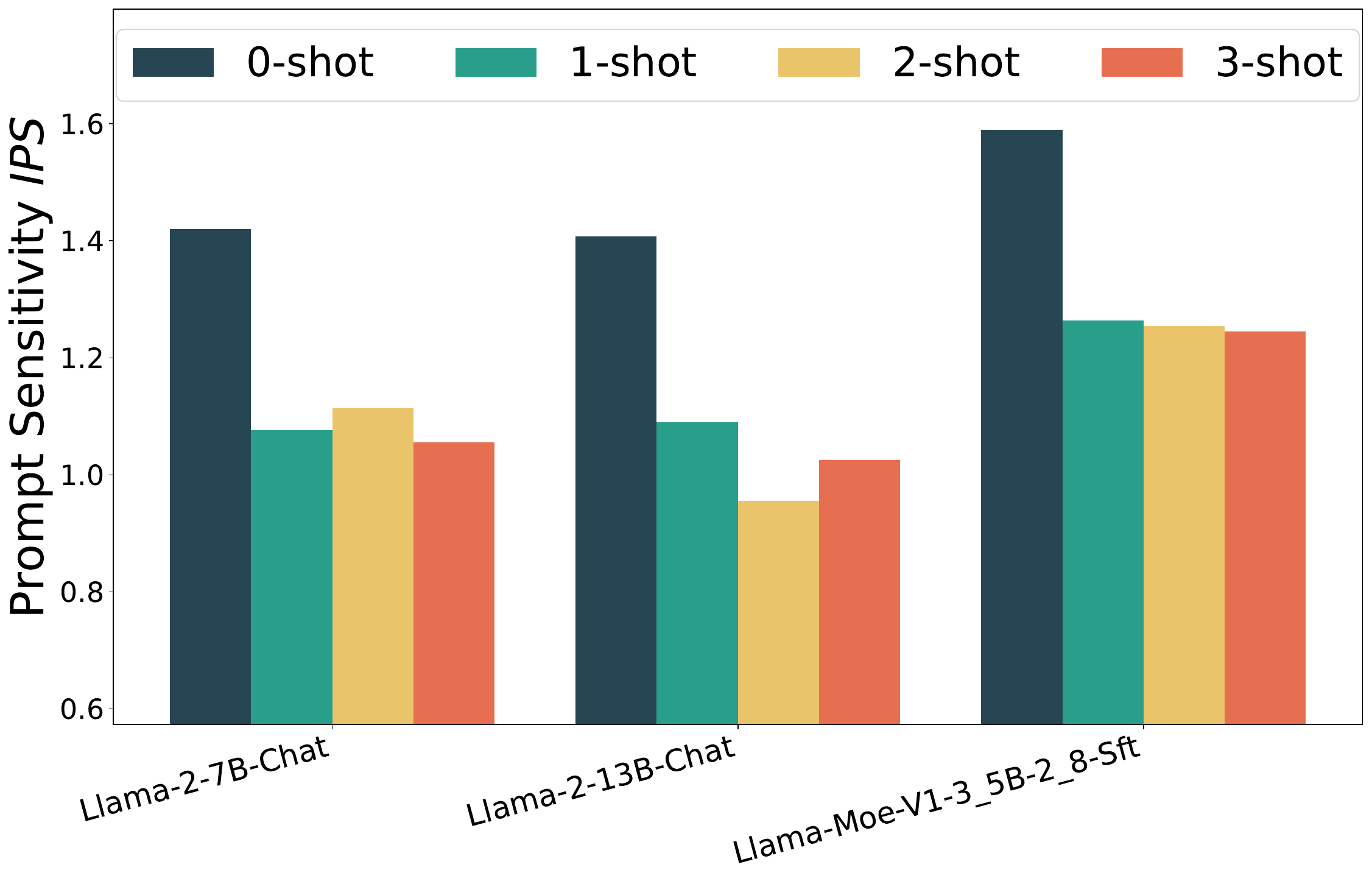}
    
    \caption{A comparison of prompt sensitivity between 0-shot learning and few-shot learning. The drop in prompt sensitivity is substantial from 0-shot to 1-shot.}
    \label{fig:fsl_dolly1}
    
\end{minipage}
\end{figure}

\begin{figure}[!htb]
\centering
\includegraphics[width=0.9\textwidth]{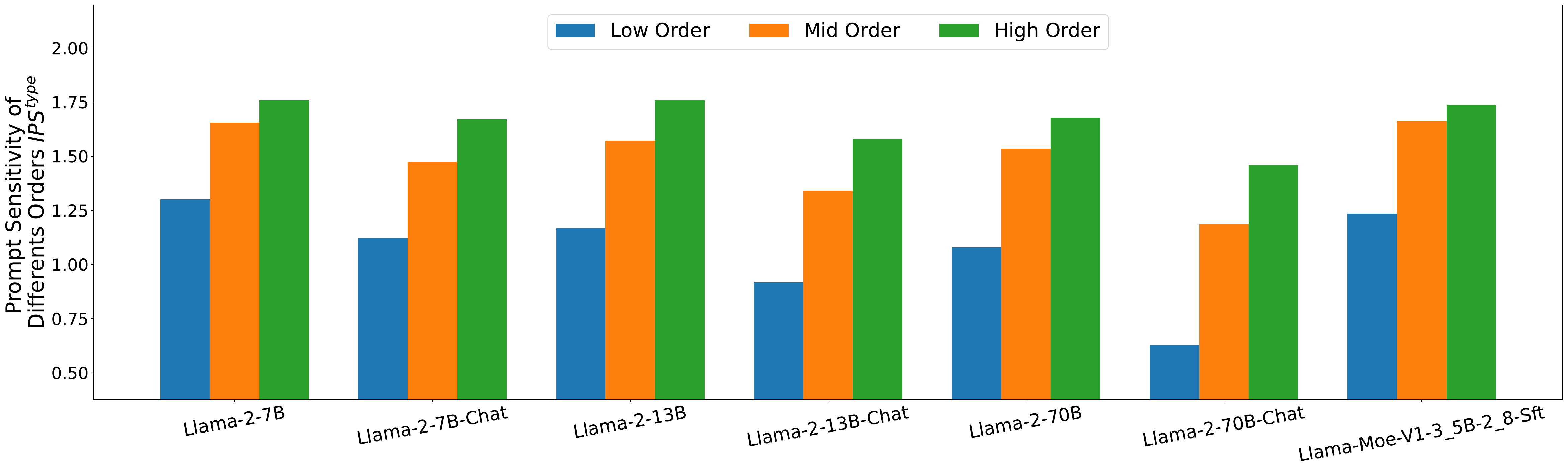}
\caption{A comparison of the prompt sensitivity of three order types. Results show that low-order interactions are the least sensitive, while high-order interactions are the most sensitive.}
\label{fig:order_all_models_combined_0.1_dolly1}
\end{figure}

\begin{figure}[!htb]
\centering
\includegraphics[width=1\textwidth]{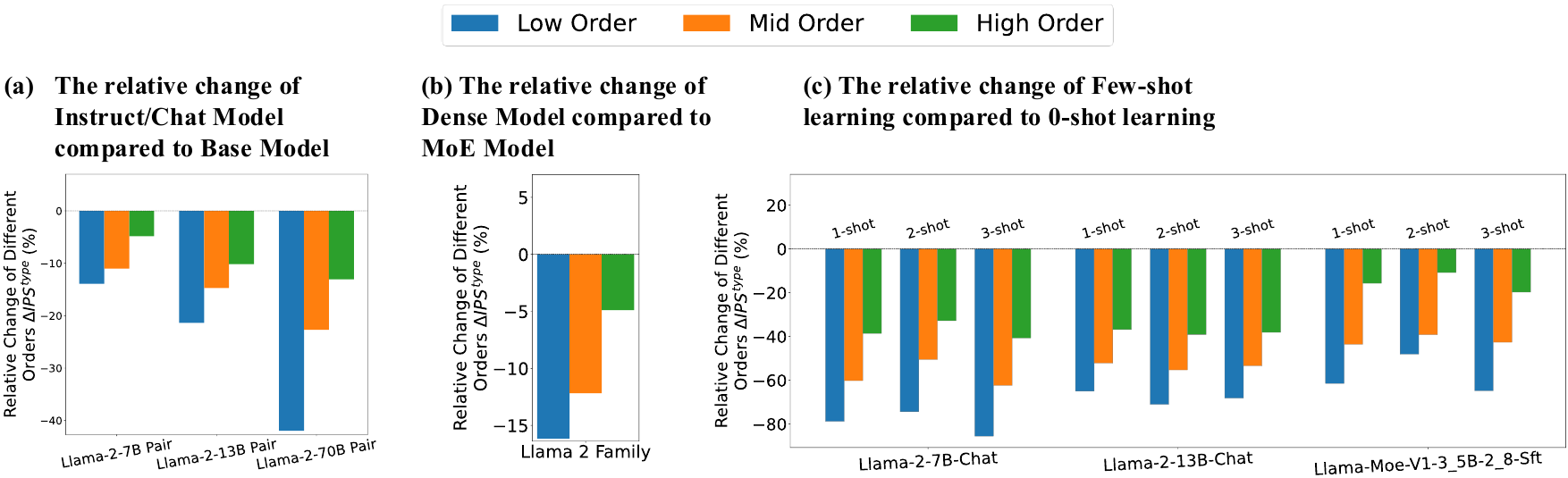}

\caption{Comparing the relative change in the prompt sensitivity of low-, mid-, and high-order interactions for different factors.}
\label{fig:order_analysis_dolly1}

\end{figure}

\begin{figure}[!htb]
\centering
\includegraphics[width=1\textwidth]{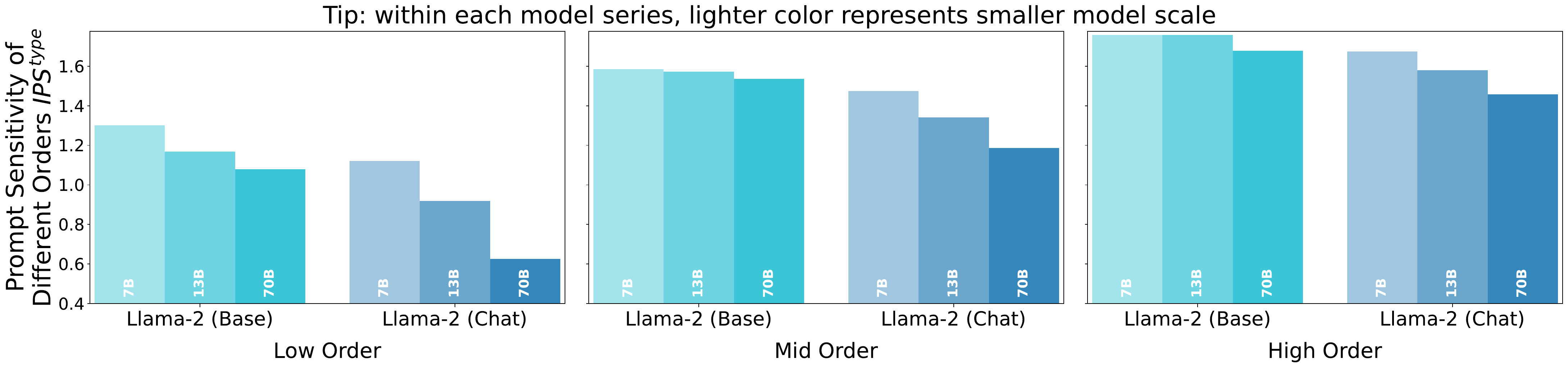}

\caption{A comparison of prompt sensitivity at the order-level across different model scales.}
\label{fig:model_size_order_dolly1}

\end{figure}

\clearpage
\section{Details of the Experiments Beyond Prompt Template Modifications}\label{sec:paraphrase}

\subsection{Experimental Setup}\label{sec:paraphrase.1}

To assess the generalizability of our findings beyond superficial template modifications, we conducted an additional experiment on the \textbf{Dolly-15k} dataset. This setup introduces more realistic and complex prompt perturbations that better reflect authentic user interactions, specifically \textbf{semantic paraphrasing} and \textbf{instruction reordering}. 
We employed these techniques to test the robustness of our conclusions under more challenging conditions. An illustrative example of a semantic paraphrase used in our experiment is shown below:

An illustrative example of the full prompt structure and the applied perturbations is shown below.

\begin{quote}
\small
\noindent\textbf{Original Prompt:}
\begin{verbatim}
Below is an instruction that describes a task. Write a 
response that appropriately completes the request.

### Instruction:
Identify from the following list characters from The X-Files 
who are bald or balding: Walter Skinner, John Fitzgerald 
Byers, Dana Scully, Melvin Frohike, Darius Michaud, 
Peter Watts, Conrad Strughold, Queequeg

### Response:
\end{verbatim}

\vspace{1em}

\noindent\textbf{Perturbation 1: Semantic Paraphrase:}
\begin{verbatim}
Below is an instruction that describes a task. Write a 
response that appropriately completes the request.

### Instruction:
Select from the list below figures from The X-Files 
that are losing their hair or bald: Walter Skinner, 
John Fitzgerald Byers, Dana Scully, Melvin Frohike, 
Darius Michaud, Peter Watts, Conrad Strughold, Queequeg

### Response:
\end{verbatim}

\vspace{1em}

\noindent\textbf{Perturbation 2: Instruction Reordering:}
\begin{verbatim}
Below is an instruction that describes a task. Write a 
response that appropriately completes the request.

### Instruction:
From The X-Files, identify characters who are bald or balding 
from the following list: Walter Skinner, John Fitzgerald 
Byers, Dana Scully, Melvin Frohike, Darius Michaud, 
Peter Watts, Conrad Strughold, Queequeg

### Response:
\end{verbatim}
\end{quote}

\subsection{Selection of Input Variables}

For this experiment, we employed the same input variable selection strategy as detailed in Appendix~\ref{sec:speed_up}. This approach combines Selective Input Variable Analysis and Phrase-level Aggregation to manage the computational complexity associated with longer, open-ended instructions while preserving the core semantic elements for interaction analysis.

\subsection{Experimental Results}

The experimental outcomes from this more challenging setup verify the main conclusions of our paper. 
Despite the increased complexity of the perturbations, we consistently observed that the four identified factors (supervised fine-tuning, increased model scale, dense architectures, and few-shot learning) reduce prompt sensitivity, primarily by stabilizing low-order interactions. This replication demonstrates that our conclusions are not confined to simple template variations but hold true in scenarios that more closely mirror authentic user interactions, thereby strengthening the generalizability of our work.

Here are the results and $\tau$ is set to 0.1, we can observe the same conclusions in this experiment.

\begin{figure}[!htb]
\centering
\begin{minipage}{0.48\columnwidth}
    \centering
    \includegraphics[width=\linewidth]{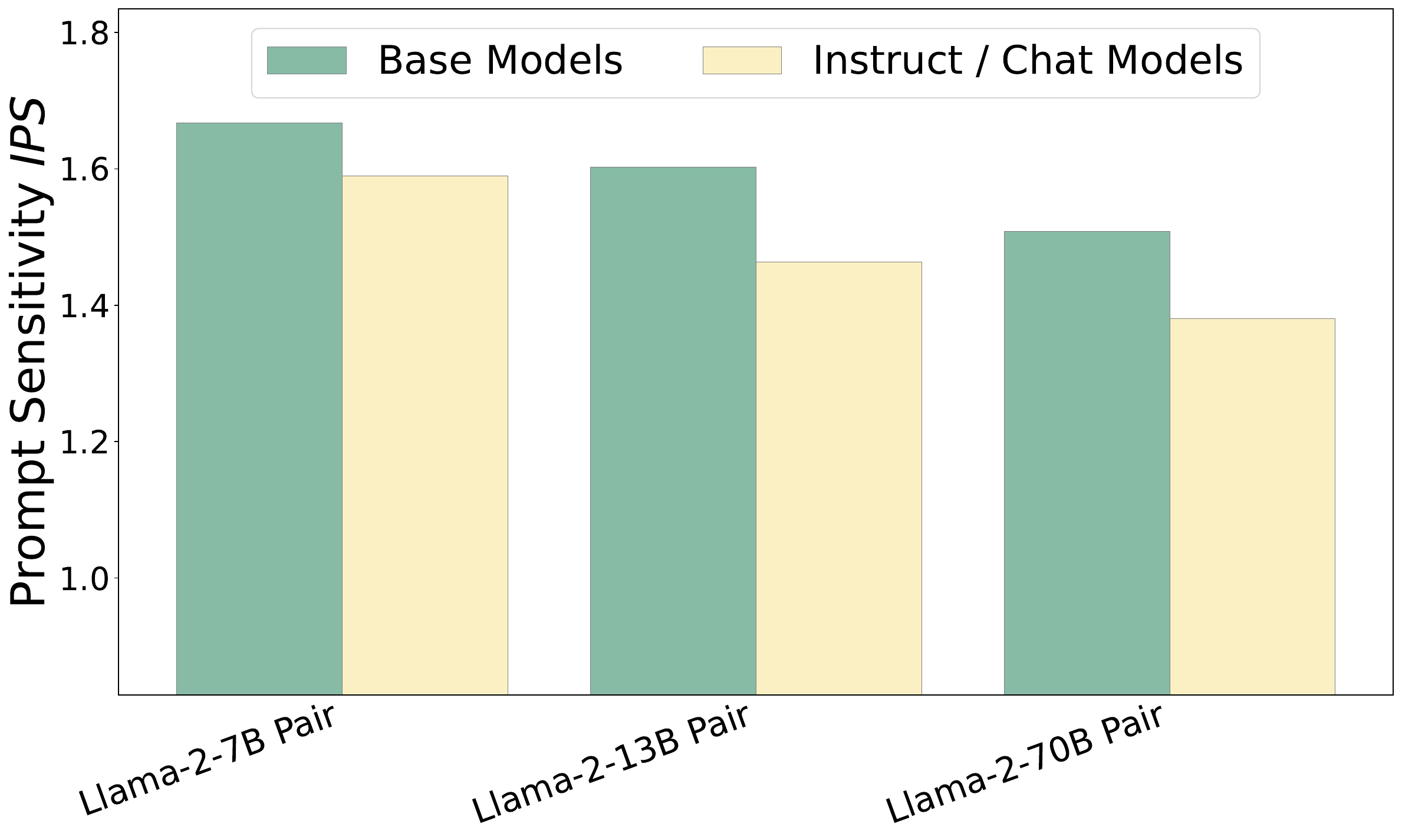}
    
    \caption{A comparison of the prompt sensitivity between instruct/chat models and base models. Results show that instruct/chat models are less sensitive than corresponding base models.}
    \label{fig:instruct_base_dolly2}
    
\end{minipage}\hfill 
\begin{minipage}{0.48\columnwidth}
    \centering
    \includegraphics[width=\linewidth]{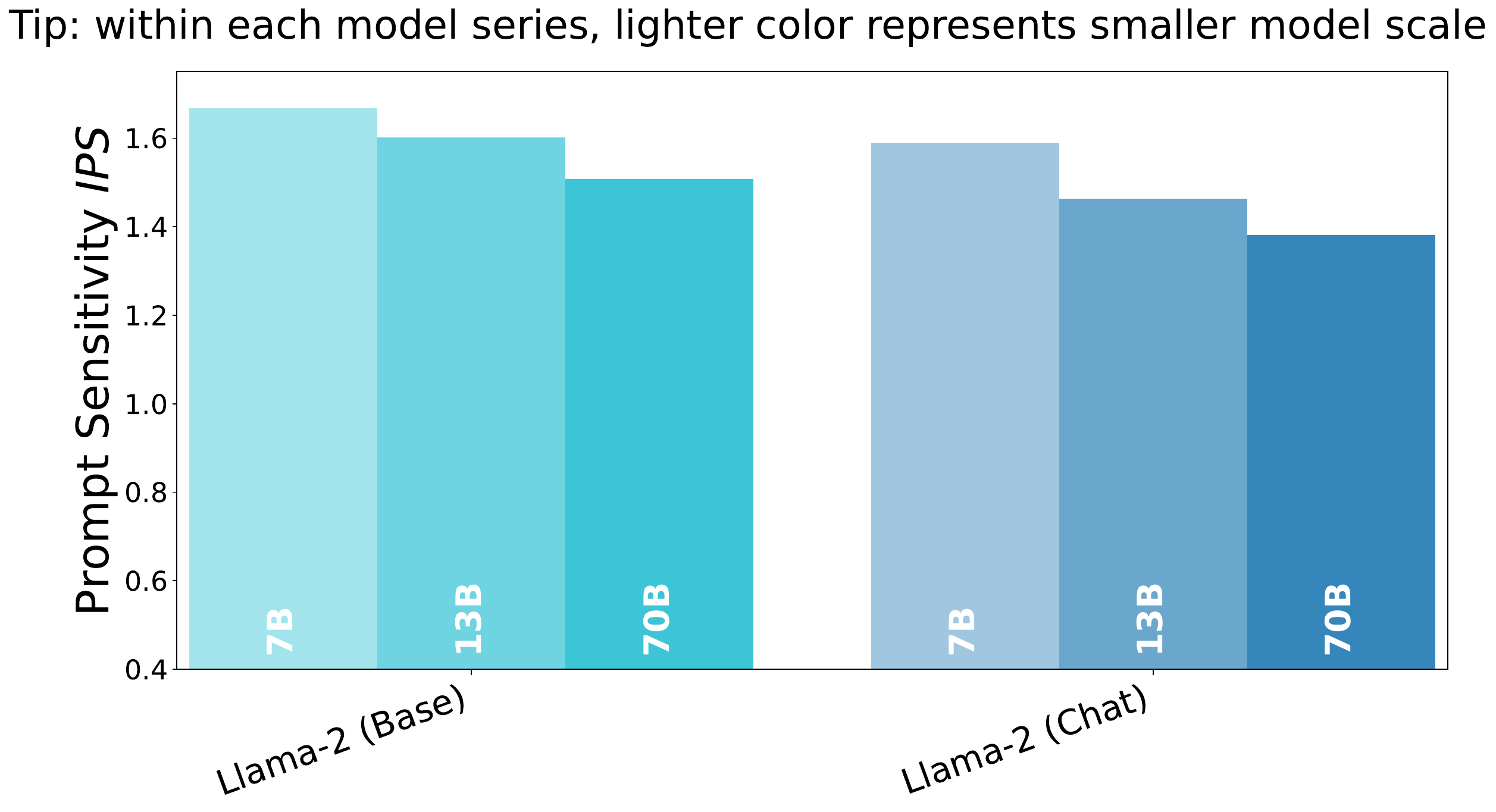}
    
    \caption{A comparison of prompt sensitivity across different model scales. As the model scale increases, the prompt sensitivity within a model series systematically decreases.}
    \label{fig:model_size_dolly2}
    
\end{minipage}
\end{figure}

\begin{figure}[!htb]
\centering
\begin{minipage}{0.48\columnwidth}
    \centering
    \includegraphics[width=\linewidth]{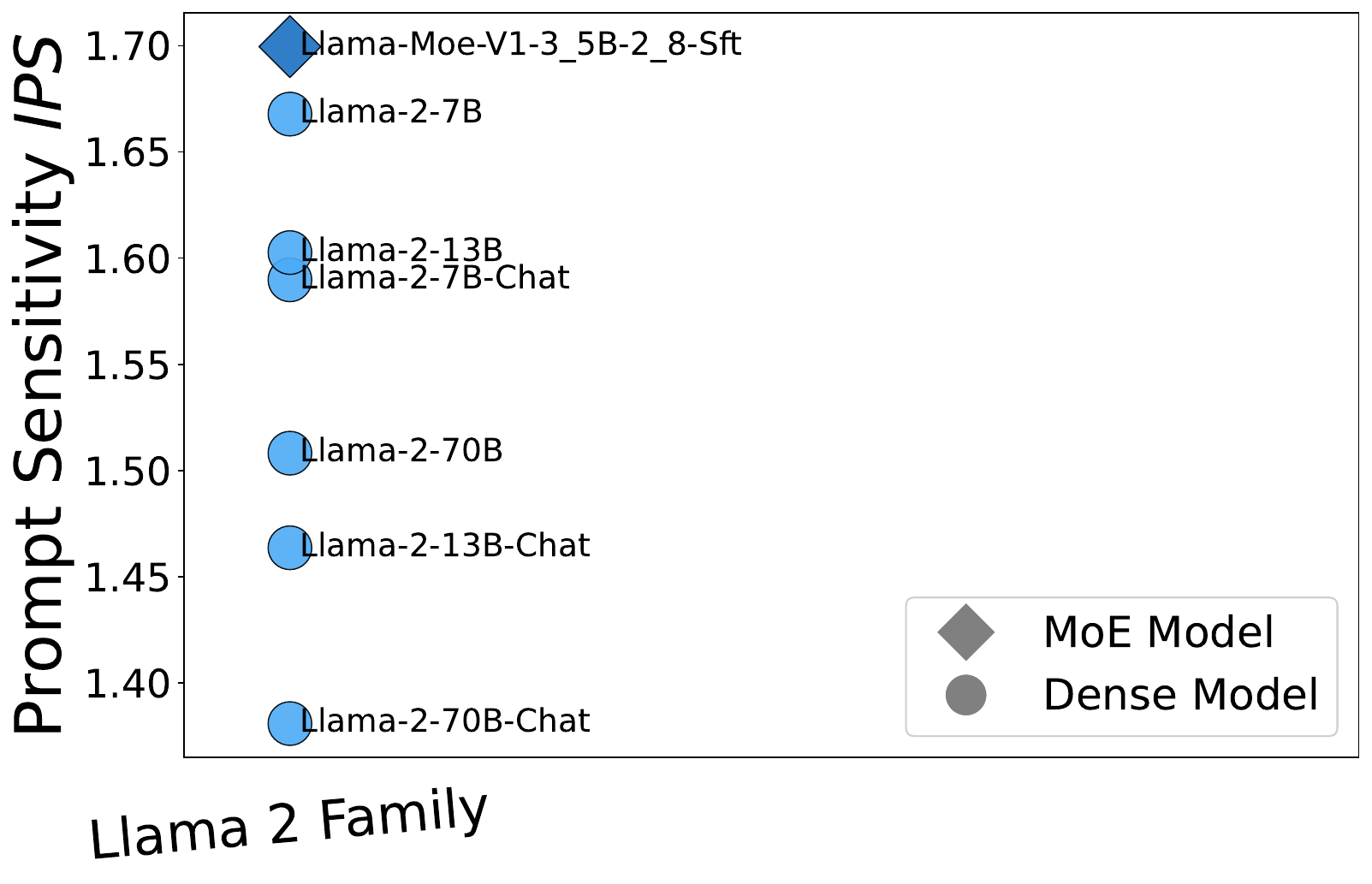}
    
    \caption{A Comparison of prompt sensitivity between MoE models and dense models. Generally, MoE models tend to be more sensitive than dense models in the same model family.}
    \label{fig:dense_moe_dolly2}
    
\end{minipage}\hfill 
\begin{minipage}{0.48\columnwidth}
    \centering
    \includegraphics[width=\linewidth]{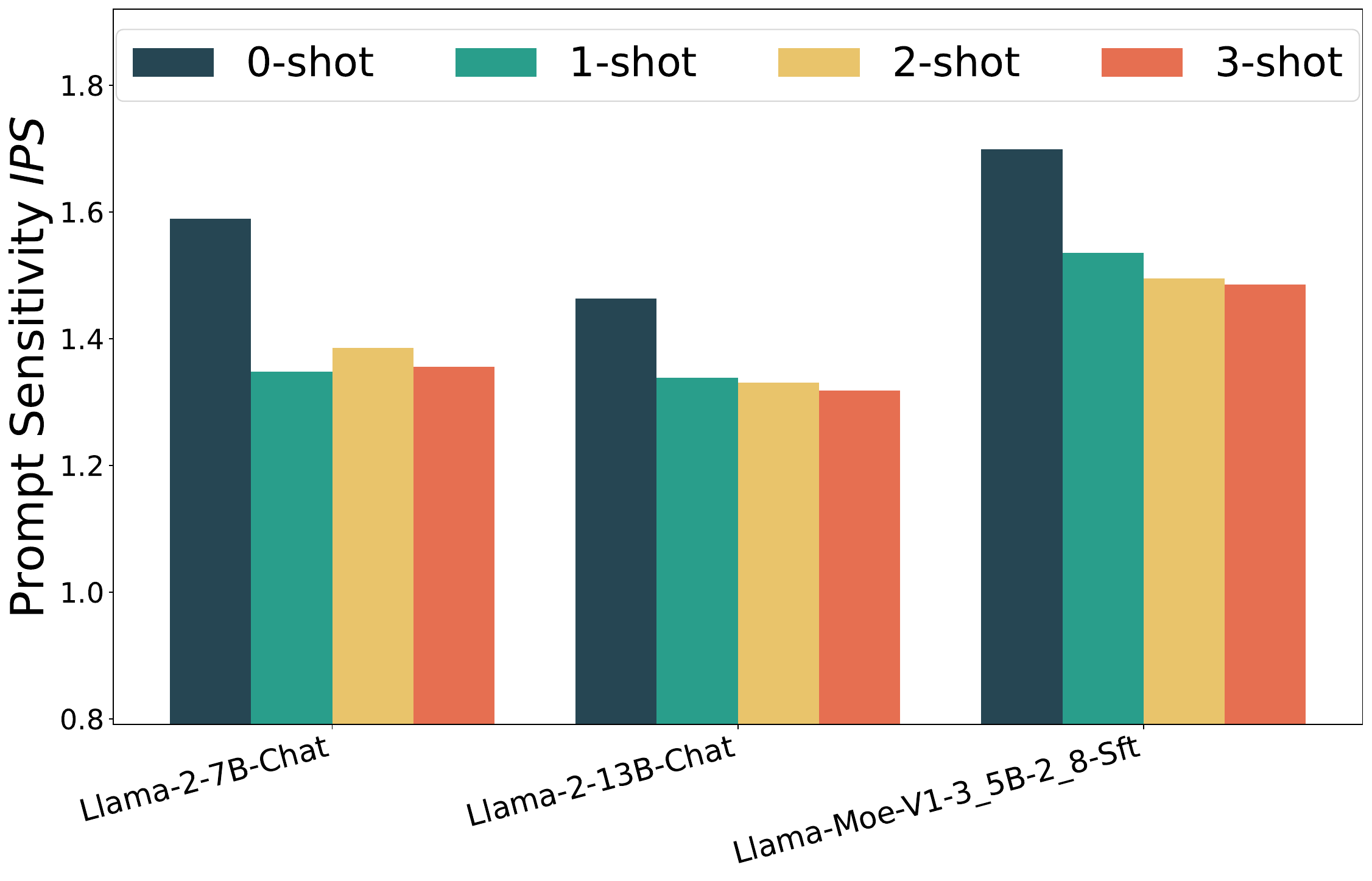}
    
    \caption{A comparison of prompt sensitivity between 0-shot learning and few-shot learning. The drop in prompt sensitivity is substantial from 0-shot to 1-shot.}
    \label{fig:fsl_dolly2}
    
\end{minipage}
\end{figure}

\begin{figure}[!htb]
\centering
\includegraphics[width=0.9\textwidth]{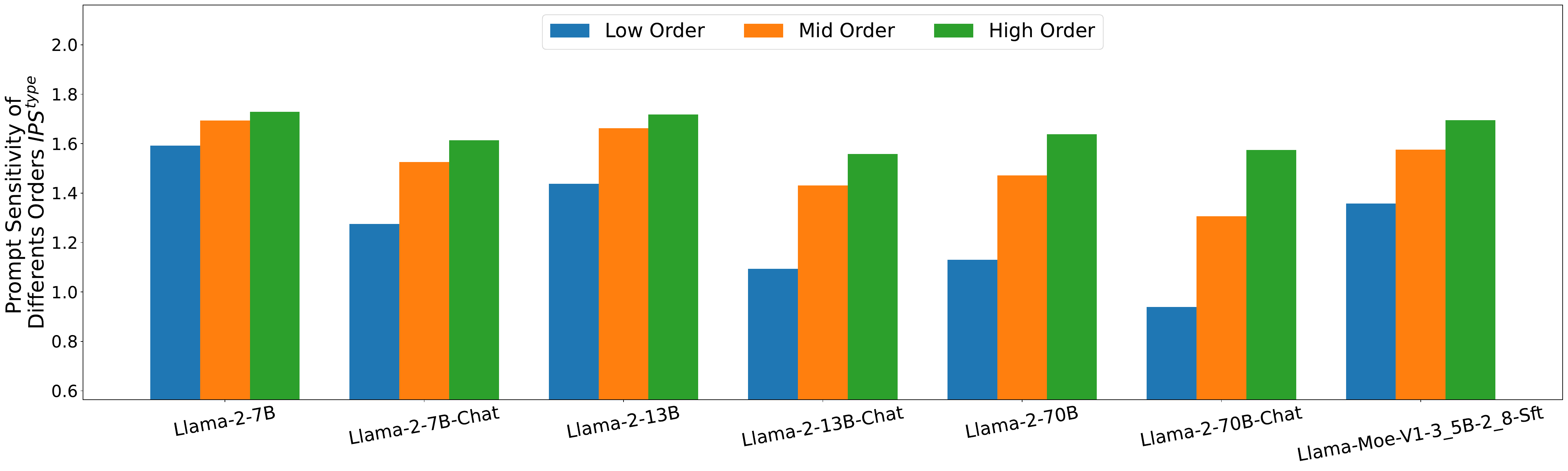}
\caption{A comparison of the prompt sensitivity of three order types. Results show that low-order interactions are the least sensitive, while high-order interactions are the most sensitive.}
\label{fig:order_all_models_combined_0.1_dolly2}
\end{figure}

\begin{figure}[!htb]
\centering
\includegraphics[width=1\textwidth]{figure/dolly2/order_analysis_dolly2.pdf}

\caption{Comparing the relative change in the prompt sensitivity of low-, mid-, and high-order interactions for different factors.}
\label{fig:order_analysis_dolly2}

\end{figure}

\begin{figure}[!htb]
\centering
\includegraphics[width=1\textwidth]{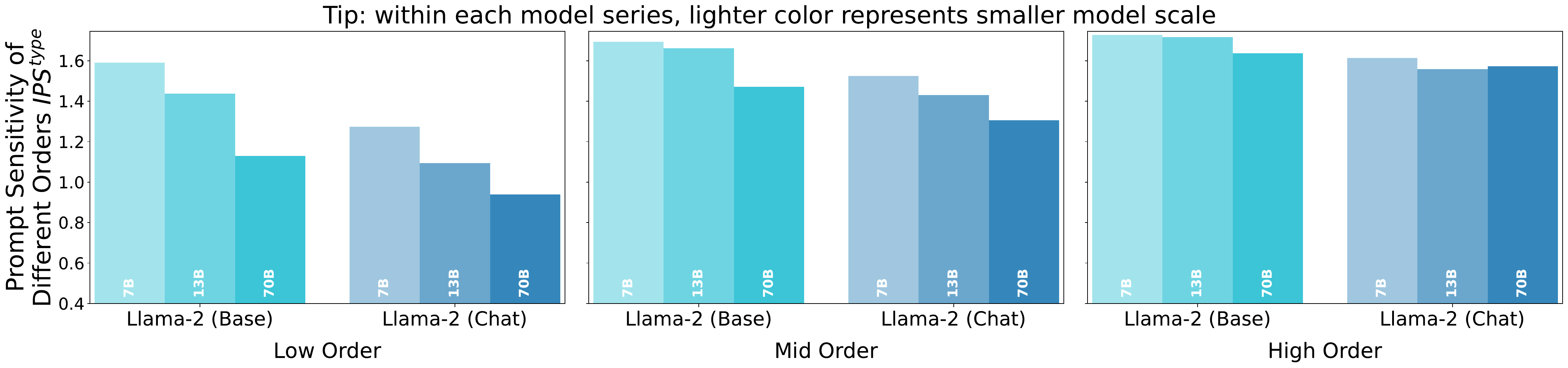}

\caption{A comparison of prompt sensitivity at the order-level across different model scales.}
\label{fig:model_size_order_dolly2}

\end{figure}



\end{document}